\def\munderbar#1{\underline{\sbox\tw@{$#1$}\dp\tw@\z@\box\tw@}}
\newcommand{\be}{\begin{equation}}
\newcommand{\ee}{\end{equation}}
\newcommand{\bea}{\begin{equation*}\begin{aligned}}
\newcommand{\eea}{\end{aligned}\end{equation*}}
\newcommand{\ds}{\displaystyle}
\newcommand{\R}{\mathbb{R}}
\newcommand{\Max}{\max\limits_}
\newcommand{\Min}{\min\limits_}
\newcommand{\Inf}{\inf\limits_}
\newcommand{\Tr}[1]{\Trace \big[ #1 \big]}
\DeclareMathOperator*{\argmax}{arg\,max}
\DeclareMathOperator*{\argmin}{arg\,min}
\newcommand{\wh}{\widehat}
\newcommand{\mc}{\mathcal}
\newcommand{\mbb}{\mathbb}
\newcommand{\inner}[2]{\big \langle #1, #2 \big \rangle }
\newcommand{\cov}{\Sigma} 
\newcommand{\covsa}{\wh{\cov}}
\newcommand{\p}{\mbb P}
\newcommand{\PP}{\mbb P}
\newcommand{\ie}{{\it i.e.}}
\DeclareMathOperator{\Trace}{Tr}
\DeclareMathOperator{\sign}{sign}
\DeclareMathOperator{\st}{s.t.}
\newcommand{\PSD}{\mathbb{S}_{+}} 
\newcommand{\PD}{\mathbb{S}_{++}} 
\newcommand{\Let}{\triangleq}
\newcommand{\opt}{^\star}
\newcommand{\m}{\mu}
\newcommand{\msa}{\wh \mu}
\newcommand{\half}{\frac{1}{2}}
\newcommand{\dualvar}{\gamma}
\newcommand{\w}{w}
\newcommand{\Pnom}{\wh \p}
\newcommand{\FR}{\mathds{F}}
\begin{document}

\title{Coverage-Validity-Aware Algorithmic Recourse}

\author{Ngoc Bui, Duy Nguyen, Man-Chung Yue, Viet Anh Nguyen}
\thanks{The authors are with Yale University (\texttt{ngocbh.pt@gmail.com}), The University of North Carolina, Chapel Hill \texttt{duykhuongnguyen277@gmail.com}), The University of Hong Kong (\texttt{mcyue@hku.hk}), and The Chinese University of Hong Kong (\texttt{nguyen@se.cuhk.edu.hk}).}

\maketitle

\begin{abstract}
  Algorithmic recourse emerges as a prominent technique to promote the explainability, transparency, and ethics of machine learning models. Existing algorithmic recourse approaches often assume an invariant predictive model; however, the predictive model is usually updated upon the arrival of new data. Thus, a recourse that is valid respective to the present model may become \textit{in}valid for the future model. To resolve this issue, we propose a novel framework to generate a model-agnostic recourse that exhibits robustness to model shifts. Our framework first builds a coverage-validity-aware linear surrogate of the \textit{non}linear (black-box) model; then, the recourse is generated with respect to the linear surrogate. We establish a theoretical connection between our coverage-validity-aware linear surrogate and the minimax probability machines (MPM). We then prove that by prescribing different covariance robustness, the proposed framework recovers popular regularizations for MPM, including the $\ell_2$-regularization and class-reweighting. Furthermore, we show that our surrogate pushes the approximate hyperplane intuitively, facilitating not only robust but also interpretable recourses. The numerical results demonstrate the usefulness and robustness of our framework. 
\end{abstract}

\section{Introduction} \label{sec:intro}

The recent prevalence of machine learning (ML) in the automation of consequential decisions related to humans, such as loan approval~\cite{ref:moscato2021benchmark}, job hiring~\cite{ref:cohen2019efficient, ref:schumann2020we}, and criminal justice~\cite{ref:brayne2021technologies}, urges the need of transparent ML systems with explanations and feedback to users~\cite{ref:doshi2017towards, ref:miller2019explanation}. Algorithmic recourse~\cite{ref:ustun2019actionable} is an emerging approach for generating feedback in ML systems. A \textit{recourse} suggests how the input instance should be modified to alter the outcome of a predictive model. Consider a specific scenario in which a financial institution's ML model rejects an individual's loan application. It has now become a legal necessity to provide explanations and recommendations to individuals to improve their situation and obtain a loan in the future~(GDPR, \cite{ref:voigt2017eu}). For example, the recommendation can be ``increase the income to \$5000'' or ``reduce the debt/asset ratio to below 20\%''. These recommendations empower individuals to understand the factors influencing their loan applications and take specific actions to address them. It also promotes transparency and fairness in the decision-making process, incentivizing users to improve their loan eligibility.

Various techniques were proposed to devise algorithmic recourses for a given predictive model; extensive surveys are provided in~\cite{ref:karimi2020survey, ref:stepin2021survey, ref:pawelczyk2021carla, ref:verma2020counterfactual}. \cite{ref:wachter2017counterfactual} introduced the definition of counterfactual explanations and proposed a gradient-based approach to finding the nearest instance that yields a favorable outcome. \cite{ref:ustun2019actionable} proposed a mixed integer programming formulation (AR) that can find recourses for a linear classifier with a flexible design of the actionability constraints. 
In \cite{ref:karimi2021algorithmic, ref:karimi2020algorithmic}, recourses that can accommodate causal relationships between features are investigated through the lens of minimal intervention.
Recent works, including~\cite{ref:russell2019efficient} and~\cite{ref:mothilal2020explaining}, studied the problem of generating a menu of diverse recourses to provide multiple possibilities that users might choose.

While these methods offer actionable recourses with low implementation costs, they face a critical downside regarding robustness in a real-world deployment. The recourse literature identifies three main types of robustness that are desirable: (i) robustness to changes in the input data point, which ensures consistent and comparable solutions for users with similar characteristics or needs~\cite{ref:slack2021counterfactual}, (ii) robustness to changes in attained recourses, which requires that the recommended actions to be stable with respect to minor variations in implementation~\cite{ref:pawelczyk2022probabilistically, ref:maragno2023finding, ref:dominguez2022adversarial, ref:virgolin2023robustness}, (iii) robustness to model shifts, which are often caused by model retraining and updating~\cite{ref:hamman2023robust, ref:zhang2023density, ref:guo2022rocoursenet, ref:nguyen2022robust, ref:pawelczyk2020counterfactual, ref:black2021consistent, ref:upadhyay2021towards}.

Motivated by the fact that ML models are frequently retrained or recalibrated upon the arrival of new data, this paper focuses on the robustness with respect to model shifts. Because of the model shift, a recourse deemed valid for the current model may become \textit{in}valid for the future model. For example, a second-time loan applicant who has been rejected in his first attempt and spent tremendous effort to achieve the recourse suggested by the system could still be rejected (again) simply because of model shifts; see Figure~\ref{fig:recourse} for an illustration. Such unfortunate events can result in inefficiency of the overall ML system, waste of resources and effort of the user, and distrust in ML systems in our society~\cite{ref:rudin2019stop}.

\begin{figure}
\centering
\includegraphics[width=6cm]{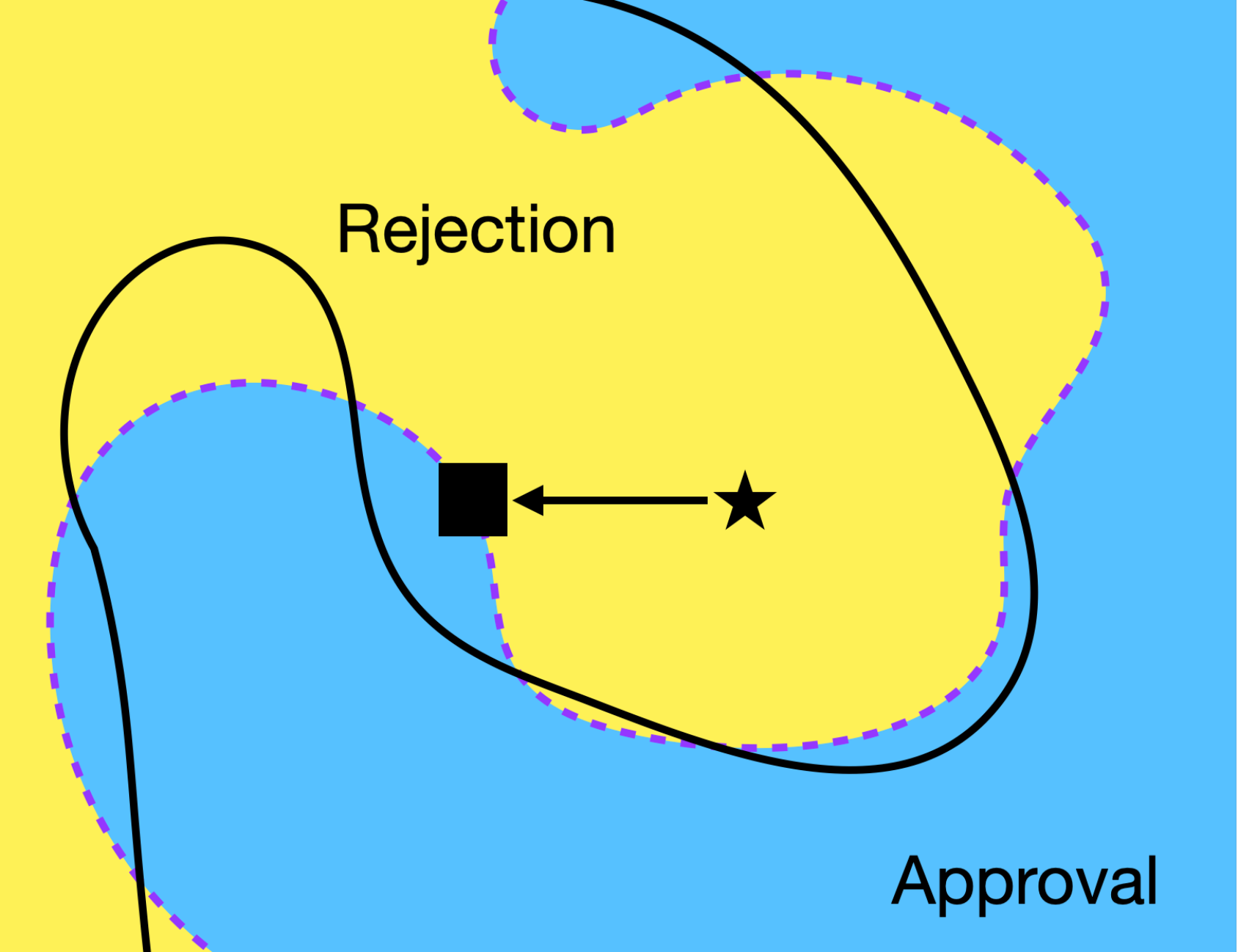}
\caption{An example of recourse failures under model shifts. The present model's boundary is plotted in a dashed curve, separating the input space into the rejection (yellow) and approval (blue) regions. The original input (star) lies in the rejection region of the present model. The recommended recourse (square symbol) is in the approval region of the present model. However, as the boundary shifts to the solid line, the recourse falls into the rejection region under the new boundary.}
\label{fig:recourse}
\end{figure}

Studying this phenomenon, \cite{ref:rawal2020can} first described several types of model shifts related to the correction, temporal, and geospatial shifts from data. They pointed out that even recourses constructed with state-of-the-art algorithms can be vulnerable to shifts in the model's parameters. \cite{ref:pawelczyk2020counterfactual} studied counterfactual explanations under predictive multiplicity and its relation to the difference in how two classifiers treat predicted individuals. \cite{ref:black2021consistent} showed that the constructed recourses might be invalid even for the model retrained with different initial conditions, such as weight initialization and leave-one-out variations in data. 

To generate recourses that are robust to model shifts, \cite{ref:upadhyay2021towards} leveraged robust optimization to propose ROAR - wherein the parameter shifts are captured by an uncertainty set centered around the current parameter value of a linear surrogate. Methods using distributionally robust optimization have also been proposed, which capture the shifts of the parameters using probability distributions~\cite{ref:bui2022counterfactual,ref:nguyen2023distributionally}. This line of research formulates the design of recourse as a min-max problem, and the goal is to find a recourse that minimizes the loss function associated with the worst model parameters over the pre-specified range. Herein, the loss function can be an aggregation of different terms representing the implementation cost and the validity of the solution, among others. The min-max approach to recourse design is usually criticized for its over-conservativeness.
In particular, whichever recourse is chosen, the corresponding worst shift parameter will be realized.
Consequently, although the min-max formulation can deliver a recourse with high validity, the cost of implementing the recourse can be unrealistically high. 
Furthermore, compared with minimization or maximization problems, min-max problems are a more difficult class of computational problems. 
Not only are algorithmic design and analysis much more involved, but computational complexity and ease of implementation are also worse than minimization/maximization problems in general. Finally, one may want to impose additional constraints on a mathematical formulation of the recourse design problem for fairness, explainability, or any other practical, legal, or ethical considerations. These constraints are sometimes of mixed-integer type. Unfortunately, it is highly nontrivial to extend existing theory and algorithms of min-max formulations of the recourse design problem to allow extra constraints, especially ones involving integer variables.

These aforementioned methods all share the linear classifier setting, which is also a popular choice employed by earlier work on algorithmic recourse~\cite{ref:ustun2019actionable, ref:russell2019efficient, ref:rawal2020can}. For non-linear classifiers, a linear surrogate method is used in the preprocessing step to locally approximate the \textit{non}linear decision boundary of the black-box classifiers. The recourse is then generated subject to the linear surrogate instead of the nonlinear model. The most popular choice to construct the surrogate is Local Interpretable Model-Agnostic Explanations (LIME, \cite{ref:ribeiro2016why}), a well-known method to explain ML predictions by fitting a reweighted linear regression model to the perturbed samples around an input instance. Arguably, LIME is the most common linear surrogate for the nonlinear decision boundary of the black-box models~\cite{ref:ustun2019actionable, ref:upadhyay2021towards}. Unfortunately, the LIME surrogate might not be locally faithful to the underlying model due to its weighted sampling scheme~\cite{ref:laugel2018defining, ref:white2019measurable}. 
Furthermore, it is also well-known that perturbation-based surrogates are sensitive to the original input and perturbations~\cite{ref:alvarez2018on, ref:ghorbani2019interpretation, ref:slack2020much, ref:slack2021reliable, ref:agarwal2021towards, ref:laugel2018defining}.
Several approaches are proposed to overcome these limitations. \cite{ref:laugel2018defining} and~\cite{ref:vlassopoulos2020explaining} proposed alternative sampling procedures that generate sample instances in the neighborhood of the closest counterfactual to fit a local surrogate. \cite{ref:white2019measurable} integrated counterfactual explanation to local surrogate models to introduce a novel fidelity measure of an explanation. Later, \cite{ref:garreau2020looking} and \cite{ref:agarwal2021towards} analyzed theoretically the sensitivity\footnote{Throughout, ``robustness'' is used in the algorithmic recourse setting with respect to the model shifts~\cite{ref:rawal2020can}. ``Robustness'' is also used in the literature to indicate the stability of LIME to the sampling distribution. To avoid confusion, in what follows, we use ``sensitivity'' to refer to the aforementioned stability of LIME.} of LIME, especially in the low sampling size regime. \cite{ref:zhao2021baylime} leveraged Bayesian reasoning to improve the consistency in repeated explanations of a single prediction. However, the influence and efficacy of these surrogate models on generating recourse options remain elusive, especially when the situation is further complicated by model shifts.

\subsection{Robust Surrogates for Actionable Recourse Generation} \label{sec:robust}

We propose a new framework for robust actionable recourse design: the core ingredient of our framework is the linear surrogate, which is known to be robust to the model shifts. The upshot is that our robust recourse generation framework does not need to solve a min-max problem. Hence, our method does not suffer from over-conservatism nor unfavorable complexity/scalability as in the (distributionally) robust recourse formulations. Moreover, it is amenable to additional constraints, even mix-integer ones, which allow us to impose fairness, legal, causality, explainability, ethics, or any other constraints easily.

To introduce our method, we assume that the covariate (feature) space is $\mc X  = \R^d$, and we have a binary label space $\mc Y = \{-1, +1\}$. Without any loss of generality, we assume that label $-1$ encodes the unfavorable decision, while $+1$ encodes the favorable one. Given a black-box ML classifier $f$ and input $x_0$ with an unfavorable predicted outcome, we aim to find an actionable recourse recommendation for $x_0$ that has a high probability of being classified into a favorable group, subject to possible shifts of the ML classifier $f$. Such recourse is termed a robust actionable recourse. Figure~\ref{fig:flow} provides a schematic view of our recourse generator consisting of three components:
\begin{figure*}[!t]
    \centering
    \includegraphics[width=\textwidth]{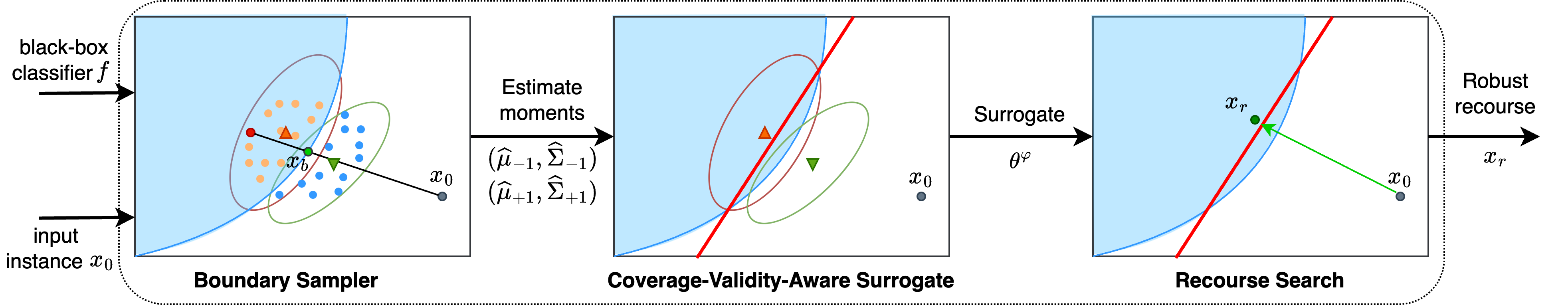}
    \caption{The sampler synthesizes new instances around $x_0$ and queries the predicted labels from the classifier $f$ . The moment information $(\msa_y, \covsa_y)$ estimated from the synthetic pseudo-labeled data (represented by triangles and ellipsoids) serves as inputs to find the (covariance-robust) Coverage-Validity-Aware Surrogate. The surrogate $\theta^\varphi$ (red hyperplane) is the target classifier to generate recourses (red circle).}
    \label{fig:flow}
\end{figure*}

\begin{enumerate}[label=(\roman*), leftmargin = 5mm]
    \item a local sampler: we use a sampling method as in~\cite{ref:laugel2018defining, ref:vlassopoulos2020explaining} to locally approximate the decision boundary. Given an instance $x_0$, we choose $k$ prototypes $x_{1}, \ldots, x_k$ \textit{in the available data} that have smallest $\ell_1$ distances and are predicted to be in the opposite class to $x_0$. For each $x_{i}$, we perform a line search to find a point $x_{b,i}$ that is on the decision boundary and the line segment between $x_{0}$ and $x_{i}$. Among $x_{b,i}$, we choose the nearest point $x_{b}$ to $x_{0}$, and then generate $n_p$ synthetic samples uniformly distributed in the $\ell_2$-ball with radius $r_p$ centered at $x_{b}$. We then query the black-box ML classifier $f$ to obtain the predicted labels of the synthetic samples. This procedure outputs two sets $\mathcal D_{+1}$ and $\mc D_{-1}$ of synthetic samples with predicted labels $+1$ and $-1$, respectively.
    \item a linear surrogate that explicitly balances the coverage-validity trade-off: we propose two performance metrics for the surrogate: coverage and validity, that are motivated by the popular recall-precision metrics in the machine learning literature. We then use the synthesized samples to estimate the moment information $(\msa_y, \covsa_y)$ of the covariate conditional on each predicted class $y$, and then train a coverage-validity-aware linear surrogate to approximate the local decision boundary of the ML model.
    \item a recourse search: in principle, we can integrate our coverage-validity-aware surrogate with any robust recourse search method to find a \textit{robust} recourse. This paper uses two simplest recourse search methods: a simple projection and AR~\cite{ref:ustun2019actionable}, a MIP-based framework, to promote \textit{actionable} recourses.
\end{enumerate}

\begin{figure}[!t]
    \centering
    \includegraphics[width=0.8\linewidth]{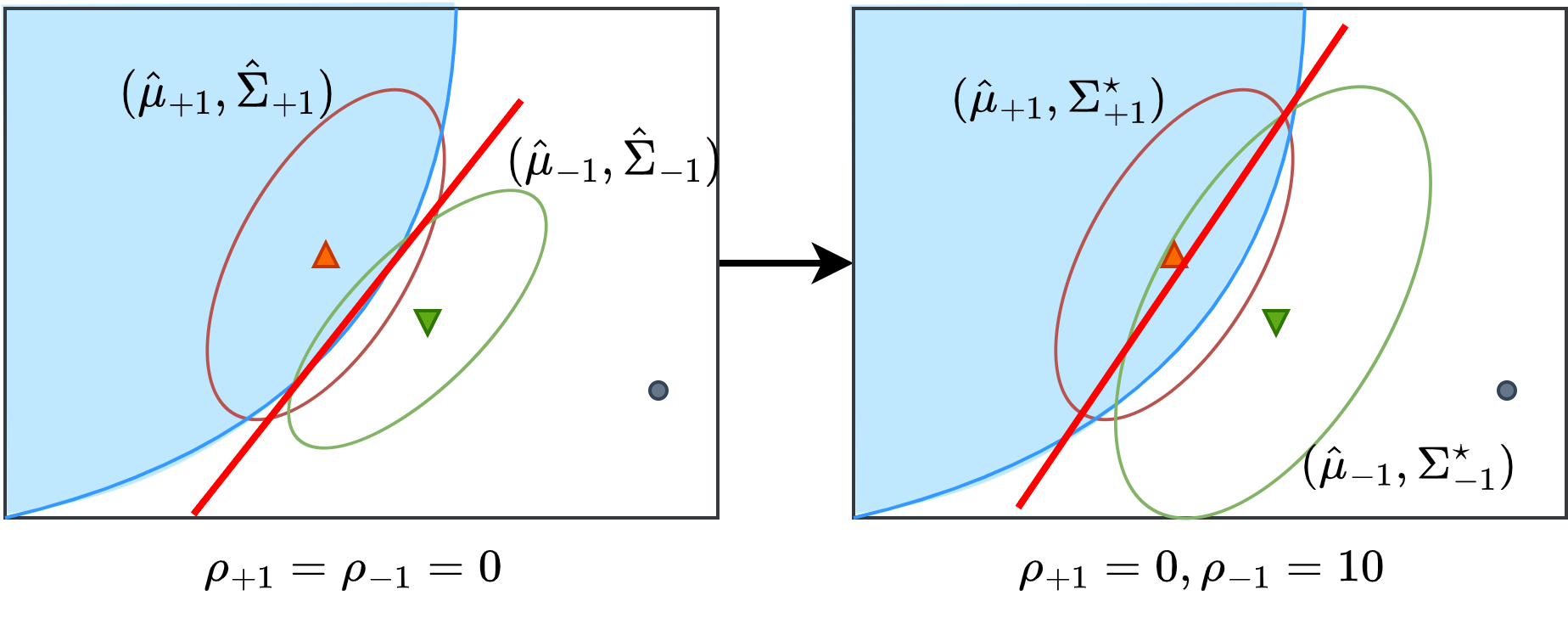}
    \caption{An intuitive explanation of the robustification mechanism. Left: CVAS, right: covariance-robust CVAS with $\rho_{-1} > \rho_{+1}$. By increasing the radius $\rho_{-1}$, the worst-case covariance matrix of the class $-1$ is inflated (bigger green ellipsoid) and shifts the surrogate boundary towards the class $+1$. The projection of the input $x_0$ onto the hyperplane will tend to lie deeper into the favorable region and may become more robust to model shifts.}
    \label{fig:robustify}
\end{figure}

One of the novelties of our framework is the possibility of \textit{shifting} the linear coverage-validity-aware surrogate towards the region of the favorable class, which induces robust recourse with respect to model shifts in a geometrically intuitive manner (see Figure~\ref{fig:robustify} for an illustration). There is a clear distinction between our framework and the existing method of ROAR~\cite{ref:upadhyay2021towards} and DiRRAc~\cite{ref:nguyen2023distributionally}: ROAR and DiRRAc uses a non-robust surrogate in Step (ii) and then formulates a min-max optimization problem in Step (iii) for recourse search. On the contrary, our framework uses a \textit{robust} surrogate in Step (ii) and then employs a simple recourse search in Step (iii). Using our framework, we can leverage mixed-integer formulations in Step (iii) (e.g., AR~\cite{ref:ustun2019actionable}) to generate more realistic or actionable, robust recourses. On the contrary, it is unclear how mixed-integer constraints can be integrated into the ROAR or DiRRAc's min-max formulation.

Robust recourse generation is characterized by a fundamental trade-off between the validity of the recourse $x_r$ under future model shifts and the cost of recourse implementation, measured by the magnitude of the feature difference $x_r - x_0$. Our framework controls this trade-off by altering the ambiguity size parameters $\rho_{+1}$ and $\rho_{-1}$. Traditional pipelines often tune these parameters using cross-validation and neglect the human side of the problem.
Real-world applications of algorithmic recourse, on the other hand, have a direct consequence for human beings or our society and are much more complex.
Thus, there is no single set of optimal parameters that can fit all subjects of the problem. Instead, we recommend intensive benchmarking to probe the trade-off, as we will demonstrate in the numerical experiments in Section~\ref{sec:numerical}, and the parameter selection process should be human-centric.

\subsection{Contributions} 
Motivated by the need to generate \textit{robust} recourses under various practical considerations such as fairness, explainability, ethics, or causality, we propose a novel coverage-validity-aware linear surrogate of the nonlinear boundary of the black-box classifier. 
Based on ideas from distributionally robust optimization, we develop a variant of the covariance-robust coverage-validity-aware surrogate that can promote robustness against model shifts in the recourse generation task. Unlike typical robust recourse generation schemes in the literature, our scheme does not require solving min-max problems, and it allows the incorporation of mixed-integer constraints for modeling various practical considerations.
Our main contributions can be summarized as follows:
\begin{itemize}[leftmargin = 3mm]
\item Coverage-validity-aware surrogate (CVAS). We propose this new linear surrogate approximating the classifier's decision boundary by balancing the coverage and validity quantities. We establish a connection between CVAS and the minimax probability machines proposed by~\cite{ref:lanckriet2001minimax} (Proposition~\ref{prop:MPM}). This connection enables us to identify the surrogate efficiently using second-order cone optimization (Lemma~\ref{lemma:optimal}).
\item Covariance-robust CVAS variants: To hedge against potential model shifts, we propose and analyze several robust variants of the CVAS obtained by perturbing the second-order moment information of the synthetic samples around the local decision boundary. We show that the covariance-robust CVAS admits a general form of reformulation (Proposition~\ref{prop:refor}). Further, the covariance-robust can be motivated in the parametric Gaussian subspace thanks to the equivalence of the solution between the nonparametric and the parametric Gaussian formulations (Proposition~\ref{prop:gauss}). 
\item Covariance-robustness induces regularization. Interestingly, we show that the covariance-robust variants are equivalent to \textit{un}discovered regularizations of the MPM. Specifically, using the Bures distance to prescribe the ambiguity set will recover the $\ell_2$-regularization scheme (Theorem~\ref{thm:bures}). If the ambiguity set is dictated by the Fisher-Rao or LogDet distance, we recover the class reweighting schemes (Theorem~\ref{thm:fr} and~\ref{thm:logdet}).
\item Robust recourse generation: We show that our covariance-robust CVAS can be integrated into the robust recourse generation process. Among multiple variants of the surrogate in this paper, we show that the Fisher-Rao or LogDet surrogates exhibit a more intuitive and interpretable boundary shift in a certain asymptotic sense (Proposition~\ref{prop:tradeoff}). This shift aligns better with the coverage-validity trade-off we desire to observe for the recourse generation task; thus, the Fisher-Rao or LogDet CVAS are reasonable surrogates for the recourse generation task. Formulations of the recourse generation problem using CVAS are provided in Section~\ref{sec:numerical}.
\end{itemize}

This paper unfolds as follows. Section~\ref{sec:trade-off} and~\ref{sec:modelshift} dive deeper into the construction of the CVAS and its robustification to hedge against model shifts. Section~\ref{sec:examples} presents and compares several variants of covariance-robust surrogates using the Bures, Fisher-Rao, and LogDet distance on the space of covariance matrices. Section~\ref{sec:asymptotic} studies the asymptotic surrogate as one of the radii grows to infinity, along with the implications on the robust recourse generation task. In Section~\ref{sec:numerical}, we demonstrate empirically that our proposed surrogates provide a competitive approximation of the local decision boundary and improve the robustness of the recourse subject to model shifts. All proofs are relegated to the appendix.

\section{Coverage-Validity-Aware Linear Surrogate} \label{sec:trade-off}

We present the intuition supporting our construction of the coverage-validity-aware surrogate (CVAS). We aim to construct a linear surrogate that approximates the local decision boundary of the black-box model and enables the robust generation of recourses in the latter phase. A linear surrogate is parametrized by $\theta = (w, b) \in \R^{d+1}$, $w \neq 0$ with a decision rule 
\[
    \mc C_\theta(x) = \sign(w^\top x - b),
\] 
where $w$ is the slope and $b$ is the intercept.

To assess the performance of $\mathcal C_\theta$, let us now briefly review two fundamental classification metrics in statistical machine learning: recall and precision. The recall of a classification system is the fraction of true positives over the total number of samples that belong to the positive class. At the same time, precision is the fraction of true positives over the total number of samples \emph{labeled} as positive. In the context of algorithmic recourse, the true labels are the labels given by the black-box model, while the predicted labels are the labels predicted by the surrogate $\mathcal C_\theta$. These two metrics are sample-based: they are computed by counting the number of samples in each block of the confusion matrix, including the true positive, false positive, and false negative. Thus, finding a surrogate that maximizes a joint measure of recall and precision requires solving a discrete optimization problem and is unscalable.

We overcome this challenge by taking a more geometric approach.
Specifically, to construct a surrogate, we propose two approximations of the precision and recall that are no longer sample-based. These approximations rely instead on the aggregated moment information of the samples and thus alleviate the discrete nature of the measurement process. The moment-based approximations also facilitate the robustification in subsequent steps. Given two groups of synthesized samples $\mathcal D_y$ of respective predicted label $y \in \mc Y$, we estimate the moment information  $(\msa_{y}, \covsa_{y})$ on the feature space for each group, for which a reasonable estimator is the unbiased sample average $\msa_y = (|\mc D_y|)^{-1} \sum_{x \in \mc D_y} x$ and $\covsa_y = (|\mc D_y| - 1)^{-1} \sum_{x \in \mc D_y} (x - \msa_y) (x - \msa_y)^\top$ for $y \in \mc Y$.

Figure~\ref{fig:cases} illustrates four possibilities of the relative position of the surrogate $\mathcal C_\theta$ to the data clusters of the two classes. Each data cluster is visualized by an ellipsoid constructed from the mean $\msa_y$ and the covariance matrix $\covsa_y$. The red line depicts the set $\mbb H_\theta = \{x \in \R^d: w^\top x - b = 0\}$, termed the decision boundary surrogate: it contains all inputs that lie on the surrogate hyperplane $w^\top x = b$. The arrow associated with the line shows the direction toward the positively-predicted halfspace. The halfspace $\mbb H_{\theta, +1}=  \{x \in \R^d: \mc C_\theta(x) = + 1\}$ on the same side with the arrow contains all inputs which will be predicted positively by the surrogate. Similarly, $\mbb H_{\theta, -1} =  \{x \in \R^d: \mc C_\theta(x) = - 1\}$ is the negatively-predicted halfspace for the surrogate. If we use $\theta$ as a surrogate, any point $x \in \mbb H_{\theta, +1}$ will be considered a feasible recourse under the surrogate's prediction. 
\begin{figure*}[!t]
    \centering
    \includegraphics[width=0.8\linewidth]{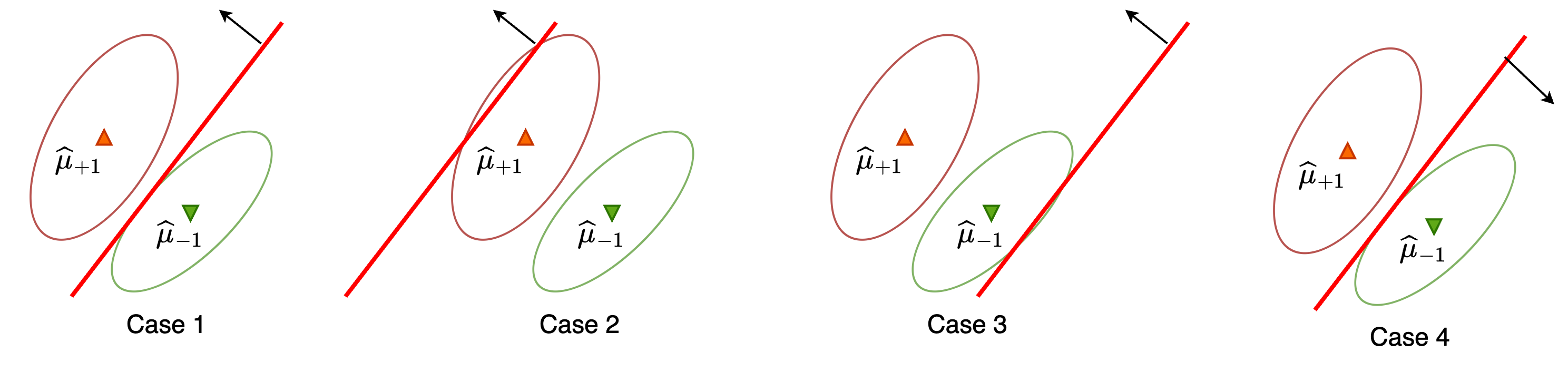}
    \caption{Four possibilities of the relative position of the surrogate to the data clusters. The black arrow indicates the normal vector of the hyperplane, pointing toward the region associated with the positive label. In case 1, the two mean vectors are classified correctly by the hyperplane. In cases 2 and 3, only one of the mean vectors is classified correctly. Case 4 is trivial since no mean vectors are classified correctly, leading to both trivial coverage and validity $\mathrm{Co}_{\covsa_{+1}} = \mathrm{Va}_{\covsa_{-1}} = 0$.}
    \label{fig:cases}
\end{figure*}

Next, we define and quantify two new moment-based metrics for the surrogate $\mathcal C_\theta$.

\textbf{Coverage.} The coverage is our moment-based approximation of the recall: it represents the overlap between the cluster of positive samples $\mathcal D_{+1}$ and the positively-predicted halfspace $\mbb H_{\theta, +1}$. To measure the coverage, we use the Mahalanobis distance from the mean vector of the positive cluster $\msa_{+1}$ to the decision boundary $\mbb H_{\theta}$. Specifically, the coverage is defined as
\begin{align} \label{eq:coverage}
    \mathrm{Co}_{\covsa_{+1}}(\theta) \Let 
            \inf_{x \in \mbb H_{\theta, -1}}~\sqrt{(\msa_{+1} - x)^\top \covsa_{+1}^{-1} (\msa_{+1} - x)},
\end{align}
where the Mahalanobis distance is weighted using the data-driven covariance matrix $\covsa_{+1}$. It is worth noting that we utilize the set $\mbb H_{\theta, -1}$ instead of $\mbb H_{\theta}$ to penalize the case where the surrogate incorrectly predicts the positive center $\msa_{+1}$ (case 2 and 4 of Figure~\ref{fig:cases}). In cases 1 and 3 of Figure~\ref{fig:cases}, the Mahalanobis distance from the mean of the positive data cluster to the decision boundary $\mbb H_\theta$ coincides with that to the set $\mbb H_{\theta, -1}$ (see Lemma~\ref{lemma:maha-proj} for proof).  A high coverage suggests that the positively predicted halfspace $\mbb H_{\theta,+1}$  can cover a significant portion of the feasible recourses of the black-box model. 

\textbf{Validity.} Validity is our moment-based approximation of the precision: it represents the overlapping between the cluster of negative samples $\mathcal D_{-1}$ and the positively-predicted halfspace $\mbb H_{\theta, +1}$. Specifically, the validity is defined as
\begin{align} \label{eq:validity}
    \mathrm{Va}_{\covsa_{-1}}(\theta) \Let 
            \inf_{x \in \mbb H_{\theta, +1}}~\sqrt{(\msa_{-1} - x)^\top \covsa_{-1}^{-1} (\msa_{-1} - x)},
\end{align}
where $\covsa_{-1}$ is the estimated covariance matrix of the negative samples. A small value of $\mathrm{Va}_{\covsa_{-1}}(\theta)$ indicates that the positively-predicted halfspace $\mbb H_{\theta, +1}$ is close to the cluster of negative samples, implying that the surrogate $\mathcal C_\theta$ may have a high false discovery rate, or equivalently, a low precision. In the opposite direction, a high value $\mathrm{Va}_{\covsa_{-1}}(\theta)$ implies that $\mathcal C_\theta$ may have a high precision.

Of course, the coverage and validity are meaningless if they are used in isolation. We can also relate the extreme behavior of the coverage-validity trade-off to the cases of Figure~\ref{fig:cases}. The surrogate plotted in case 2 has $\mathrm{Co}_{\covsa_{+1}}(\theta) = 0$ because $\msa_{+1} \in \mbb H_{\theta, -1}$, but it has high validity. Similarly, the surrogate in case 3 has  $\mathrm{Va}_{\covsa_{-1}}(\theta)  = 0$ because $\msa_{-1} \in \mbb H_{\theta, +1}$, but it has high coverage. Coverage and validity are thus conflicting criteria, and a surrogate with nontrivial coverage-validity trade-off should separate the mean vectors, as in case 1 of Figure~\ref{fig:cases}. Under the assumption that $\msa_{-1}\neq \msa_{+1}$, the separating hyperplane theorem asserts that a linear hyperplane that can separate the mean vectors exists. In view of the above discussion, we propose the following maximin problem formulation for the linear surrogate:
\be \label{eq:surrogate}
    \Max{\theta \in \Theta}~\min\left\{\mathrm{Co}_{\covsa_{+1}}(\theta),~ \mathrm{Va}_{\covsa_{-1}}(\theta)\right\}.
\ee
where $\Theta$ is the set of admissible surrogates defined by
\be \label{eq:Theta}
    \Theta \Let \{ \theta = (w, b) \in \R^{d+1}: w \neq 0\},
\ee
in which the constraint $w \neq 0$ eliminates trivial surrogates. Note that the inner minimization aims to balance the coverage and validity of the surrogate; this design of the maximin objective function is reasonable since both metrics are intended to be of similar magnitude. Alternatively, one may consider a reweighted objective function by injecting a positive weight parameter $\alpha$ and solve
\be \notag
    \Max{\theta \in \Theta}~\min\left\{\alpha \mathrm{Co}_{\covsa_{+1}}(\theta),~ \mathrm{Va}_{\covsa_{-1}}(\theta)\right\}.
\ee
Nevertheless, the above reweighted objective function can be recovered from~\eqref{eq:surrogate} by inflating $\covsa_{+1}$ with a corresponding coefficient $\alpha^2$. We, thus, do not proceed with the reweighted formulation in the remainder of this paper.

\subsection{Relationship with Minimax Probability Machines}
\label{sec:MPM}

Minimax Probability Machines (MPM) is a binary classification framework pioneered by~\cite{ref:lanckriet2001minimax} and extended to Quadratic MPM in~\cite{ref:lanckriet2003robust}. For each class $y \in \mc Y$, MPM does not assume the specific parametric form of the (conditional) distribution $\Pnom_y$ of $X | f(X) = y$. Instead, MPM assumes that we can identify $\Pnom_y$ only up to the first two moments, \ie, $\Pnom_y$ has mean vector $\msa_y \in \R^d$ and covariance matrix $\covsa_y \in \PSD^d$, denoted $\Pnom_y \sim (\msa_y, \covsa_y)$. These moments can be estimated from the samples synthesized from the local sampler, such as the unbiased sample average estimators, as suggested. MPM aims to find a (non-trivial) linear surrogate that minimizes the maximum misclassification rate among classes. MPM solves the min-max optimization problem
    \be \label{eq:mpm}
         \Min{\theta \in \Theta}~\Max{y \in \mc Y}~\Max{\Pnom_y\sim (\msa_y, \covsa_y)}\Pnom_y(\mc C_\theta(X) \neq y), 
    \ee
where the set $\Theta$ is defined as in~\eqref{eq:Theta}.

\begin{proposition}[Equivalence characterization] \label{prop:MPM}
    The CVAS obtained by solving~\eqref{eq:surrogate} coincides with the MPM obtained by solving~\eqref{eq:mpm}.
\end{proposition}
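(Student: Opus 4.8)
The plan is to reduce both optimization problems to the same scalar quantity by exploiting the classical Lanckriet et al.\ result on the worst-case misclassification probability, together with an explicit formula for the Mahalanobis distance from a point to a halfspace. The key identity I will use is the following: for a distribution $\Pnom_y \sim (\msa_y, \covsa_y)$ and a halfspace $\{x : a^\top x \le c\}$, the worst-case probability that $X$ lands in that halfspace is controlled by the Mahalanobis-type margin. Concretely, $\sup_{\Pnom_y \sim (\msa_y,\covsa_y)} \Pnom_y(a^\top X \le c) = \tfrac{1}{1+\kappa^2}$ when $a^\top \msa_y > c$, where $\kappa = (a^\top \msa_y - c)/\sqrt{a^\top \covsa_y a}$, and the supremum equals $1$ when $a^\top\msa_y \le c$. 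I would state this as a preliminary lemma (it is the one-sided Chebyshev / Marshall--Olkin bound used by \cite{ref:lanckriet2001minimax}).

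Next I would rewrite the two surrogate metrics in closed form. For a fixed $\theta = (w,b)$ with $w^\top \msa_{+1} > b$ (case 1 or 3 of Figure~\ref{fig:cases}), a direct computation of the projection onto the hyperplane $\mbb H_\theta$ gives $\mathrm{Co}_{\covsa_{+1}}(\theta) = (w^\top \msa_{+1} - b)/\sqrt{w^\top \covsa_{+1} w}$, and if instead $w^\top \msa_{+1} \le b$ then $\msa_{+1}\in \mbb H_{\theta,-1}$ and $\mathrm{Co}_{\covsa_{+1}}(\theta) = 0$; this is exactly the content of the referenced Lemma~\ref{lemma:maha-proj}. Symmetrically, $\mathrm{Va}_{\covsa_{-1}}(\theta) = (b - w^\top \msa_{-1})/\sqrt{w^\top \covsa_{-1} w}$ when $b > w^\top \msa_{-1}$ and $0$ otherwise. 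Combining with the preliminary lemma, for each class $y$ the worst-case misclassification rate in~\eqref{eq:mpm} is $1/(1 + \mathrm{Co}_{\covsa_{+1}}(\theta)^2)$ for $y = +1$ and $1/(1+\mathrm{Va}_{\covsa_{-1}}(\theta)^2)$ for $y = -1$. Hence the inner $\max_{y}$ in~\eqref{eq:mpm} equals $1/(1 + \min\{\mathrm{Co}_{\covsa_{+1}}(\theta)^2, \mathrm{Va}_{\covsa_{-1}}(\theta)^2\})$ since $t \mapsto 1/(1+t^2)$ is decreasing on $t \ge 0$. Minimizing this over $\theta \in \Theta$ is therefore equivalent to maximizing $\min\{\mathrm{Co}_{\covsa_{+1}}(\theta), \mathrm{Va}_{\covsa_{-1}}(\theta)\}$ over $\theta \in \Theta$, which is precisely~\eqref{eq:surrogate}; and since the map between the objective values is a strictly monotone bijection, the argmax sets coincide, giving the claimed equivalence of the surrogates themselves (not merely the optimal values).

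There is one genuine subtlety I would handle carefully rather than wave away: the degenerate configurations where a mean vector is misclassified. In~\eqref{eq:mpm} the relevant worst-case probability is then $1$, while in~\eqref{eq:surrogate} the corresponding metric is $0$; I need the formula $1/(1+t^2)$ to remain valid at $t = 0$, which it does, so the correspondence extends continuously across cases 2--4 of Figure~\ref{fig:cases}. I also need to confirm that the optimal objective of~\eqref{eq:surrogate} is strictly positive, equivalently that the optimal worst-case rate in~\eqref{eq:mpm} is strictly below $1$; this follows from the assumption $\msa_{-1} \neq \msa_{+1}$ and the separating hyperplane theorem mentioned in the text, which produces a $\theta$ with both metrics strictly positive, so the optimum is attained in the regime (case 1) where both closed-form expressions are active. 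I expect this case-bookkeeping, and verifying the projection formula for $\mathrm{Co}$ and $\mathrm{Va}$ uniformly, to be the main (though routine) obstacle; the monotone-reparametrization argument that upgrades equality of optimal values to equality of optimizers is the conceptual heart and is short.
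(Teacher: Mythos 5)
Your proposal is correct and follows essentially the same route as the paper: the paper likewise invokes the worst-case moment bound (citing Bertsimas--Popescu's Theorem~10, the set-distance form of the Marshall--Olkin/one-sided Chebyshev inequality you use) to write the worst-case misclassification rate as $1/(1+\nu_y^2)$ with $\nu_y$ exactly the coverage or validity, and then concludes by the same monotone-reparametrization argument equating the argmin and argmax sets. The only cosmetic difference is that you pass through the closed-form margins via the projection lemma and handle the degenerate cases explicitly, whereas the paper works directly with the distance-to-halfspace formulation, which absorbs those cases automatically.
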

Proposition~\ref{prop:MPM} establishes a link between our CVAS and the MPM~\cite{ref:lanckriet2001minimax}. It shows that our surrogate also minimizes the misclassification rate, similar to MPM, by maximizing the lower bound of the coverage and validity quantities. Maximizing the coverage corresponds to minimizing the misclassification rate within the positive class. Meanwhile, the maximization of validity is equivalent to minimizing the misclassification probability within the negative class. The proof of is inspired by \cite[Section~2]{ref:lanckriet2001minimax} and relies on a technical result from~\cite{ref:bertsimas2000moment}. Nevertheless, the analysis in \cite[Section~2]{ref:lanckriet2001minimax} is not phrased in terms of coverage and validity.

\subsection{Solution Procedure} \label{sec:solution}

The main instrument to find the coverage-validity-aware surrogate is the result from~\cite[\S2]{ref:lanckriet2001minimax}, which provides the optimal solution for the MPM problem. Then, we can leverage the equivalence result from Proposition~\ref{prop:MPM} to find the surrogate. Define the set of feasible slopes $\mc W = \{ w \in \R^d \backslash\{0\} : \sum_{y \in \mc Y} y w^\top \msa_y  = 1\}$,
which is a hyperplane in $\R^d$. The reason for the extra constraint $\sum_{y \in \mc Y} y w^\top \msa_y  = 1$ is that for any $w\in\mathbb{R}^d\setminus\{0\}$, $b\in\mathbb{R}$ and $t \neq 0$, the two linear surrogates $\theta = (w, b)$ and $t \theta = (tw, tb)$ are equivalent to each other in the sense that they define the same hyperplane and yield the same objective value for problem~\eqref{eq:surrogate}. The next lemma offers an efficient procedure to compute the surrogate. 
\begin{lemma}[{Optimal solution, adapted from~\cite[\S2]{ref:lanckriet2001minimax}}] \label{lemma:optimal}
    Let $\wh w$ be an optimal solution to the second-order cone program
    \be \label{eq:mpm-equi}
        \Min{w \in \mc W}~\sum\nolimits_{y \in \mc Y}\sqrt{w^\top \covsa_y w},
    \ee
    and let $\wh \kappa = \big( \sum\nolimits_{y \in \mc Y}\sqrt{\wh w^\top \covsa_y \wh w} \big)^{-1}$, and $\wh b = \wh w^\top \msa_{+1} - \wh \kappa \sqrt{ \wh w^\top \covsa_{+1} \wh w}$.
    Then $\wh \theta = (\wh w, \wh b)$ solves the coverage-validity-aware surrogate problem~\eqref{eq:surrogate}.
\end{lemma}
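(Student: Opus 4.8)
The plan is to reduce the maximin problem~\eqref{eq:surrogate} to the second-order cone program~\eqref{eq:mpm-equi} through a chain of elementary reformulations, and then to read off the optimal intercept $\wh b$ and the optimal value $\wh\kappa$ from the structure of the reduced problem. Throughout I assume, as is implicit in~\eqref{eq:coverage}--\eqref{eq:validity}, that $\covsa_{+1},\covsa_{-1}\in\PD^d$. The first step is to make the coverage and validity explicit. Writing $\theta=(w,b)$ and invoking Lemma~\ref{lemma:maha-proj} (the Mahalanobis projection identity), the infimum in~\eqref{eq:coverage} equals the Mahalanobis distance from $\msa_{+1}$ to $\mbb H_\theta$ whenever $\msa_{+1}\in\mbb H_{\theta,+1}$ and equals $0$ otherwise, and similarly for~\eqref{eq:validity}; hence
\[
  \mathrm{Co}_{\covsa_{+1}}(\theta)=\frac{(w^\top\msa_{+1}-b)_+}{\sqrt{w^\top\covsa_{+1}w}},\qquad
  \mathrm{Va}_{\covsa_{-1}}(\theta)=\frac{(b-w^\top\msa_{-1})_+}{\sqrt{w^\top\covsa_{-1}w}},
\]
turning~\eqref{eq:surrogate} into a finite-dimensional problem in $(w,b)$ with $w\neq 0$.

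Next I would eliminate the intercept. Since $\msa_{-1}\neq\msa_{+1}$, the separating-hyperplane argument (case~1 of Figure~\ref{fig:cases}) produces a $\theta$ with both quantities strictly positive, so the optimal value is positive and the positive-part operators are inactive at optimality. Introducing the common value $\kappa$ of the two ratios, the problem is equivalent to maximizing $\kappa$ subject to $w^\top\msa_{+1}-b\ge\kappa\sqrt{w^\top\covsa_{+1}w}$ and $b-w^\top\msa_{-1}\ge\kappa\sqrt{w^\top\covsa_{-1}w}$ over $(w,b,\kappa)$ with $w\neq 0$. For a fixed $w$ with $w^\top(\msa_{+1}-\msa_{-1})>0$ (other $w$ being dominated), the admissible set of $b$ is the interval $[\,w^\top\msa_{-1}+\kappa\sqrt{w^\top\covsa_{-1}w},\ w^\top\msa_{+1}-\kappa\sqrt{w^\top\covsa_{+1}w}\,]$, which is nonempty precisely when $\kappa\le w^\top(\msa_{+1}-\msa_{-1})\big/\big(\sqrt{w^\top\covsa_{+1}w}+\sqrt{w^\top\covsa_{-1}w}\big)$; thus~\eqref{eq:surrogate} collapses to $\Max{w\neq 0}\ w^\top(\msa_{+1}-\msa_{-1})\big/\big(\sqrt{w^\top\covsa_{+1}w}+\sqrt{w^\top\covsa_{-1}w}\big)$.

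Finally I would normalize. This last objective is invariant under positive rescaling of $w$, and $\msa_{+1}-\msa_{-1}=\sum_{y\in\mc Y}y\,\msa_y$, so restricting to $w\in\mc W$ turns it into $\big(\Min{w\in\mc W}\sum_{y\in\mc Y}\sqrt{w^\top\covsa_y w}\big)^{-1}$, i.e.\ the reciprocal of~\eqref{eq:mpm-equi}; the minimum is attained because $w\mapsto\sum_{y}\sqrt{w^\top\covsa_y w}$ is a norm, hence coercive and continuous, on the nonempty closed affine set $\mc W$. Taking $\wh w$ to be a minimizer, the largest feasible $\kappa$ is $\wh\kappa=\big(\sum_{y}\sqrt{\wh w^\top\covsa_y\wh w}\big)^{-1}$, at which value the $b$-interval shrinks to the single point $\wh b=\wh w^\top\msa_{+1}-\wh\kappa\sqrt{\wh w^\top\covsa_{+1}\wh w}=\wh w^\top\msa_{-1}+\wh\kappa\sqrt{\wh w^\top\covsa_{-1}\wh w}$, which is exactly the claimed formula for $\wh\theta=(\wh w,\wh b)$. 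As an alternative route, one may quote the MPM optimal solution of~\cite[\S2]{ref:lanckriet2001minimax} verbatim and invoke Proposition~\ref{prop:MPM}.

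The step I expect to demand the most care is the intercept elimination: one must verify that at optimality the mis-oriented configurations of Figure~\ref{fig:cases} (cases 2--4, where a positive-part operator vanishes or $w^\top(\msa_{+1}-\msa_{-1})\le 0$) are strictly suboptimal, justify replacing the inner $\min$ of the two ratios by the epigraph variable $\kappa$, and confirm that both inequality constraints are active at $(\wh w,\wh b,\wh\kappa)$ so that the two expressions for $\wh b$ coincide. The remaining ingredients — the closed forms from Lemma~\ref{lemma:maha-proj}, the scale normalization, and the reciprocal identity — are routine.
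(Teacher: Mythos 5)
Your proposal is correct, but it reaches Lemma~\ref{lemma:optimal} by a somewhat different route than the paper. The paper treats this lemma as a corollary of Proposition~\ref{prop:MPM}: it first converts coverage and validity into worst-case misclassification probabilities via the Chebyshev-type bound of Bertsimas--Popescu (\cite[Theorem 10]{ref:bertsimas2000moment}), identifies problem~\eqref{eq:surrogate} with the MPM problem~\eqref{eq:mpm}, and then imports the known optimal solution from \cite[\S 2]{ref:lanckriet2001minimax} (the same chain of reductions is carried out in full, for general radii $\rho_y$, in the proof of Proposition~\ref{prop:refor}, of which Lemma~\ref{lemma:optimal} is the $\rho_y = 0$ special case). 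You instead bypass the probabilistic layer entirely: using Lemma~\ref{lemma:maha-proj} you write $\mathrm{Co}$ and $\mathrm{Va}$ in closed form, introduce the epigraph variable $\kappa$, eliminate the intercept through the interval condition, exploit positive homogeneity to normalize onto $\mc W$, and land directly on~\eqref{eq:mpm-equi}; these elimination and normalization steps mirror, almost line by line, the manipulations in the paper's proof of Proposition~\ref{prop:refor} (equations~\eqref{opt:1}--\eqref{opt:3} there), so the combinatorial core is the same. What your version buys is a self-contained, purely geometric derivation that never invokes the moment bound or the MPM correspondence; what the paper's version buys is that the same argument immediately generalizes to the covariance-robust setting by replacing $\sqrt{w^\top\covsa_y w}$ with $\tau_y^\varphi(w)$, which is why the authors set it up that way. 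Your delicate points are handled adequately: positivity of the optimal value follows from $\msa_{+1}\neq\msa_{-1}$ (a standing assumption stated just before~\eqref{eq:surrogate}), the clamping of the positive parts is ruled out at optimality exactly as you say, and the two expressions for $\wh b$ coincide precisely because $\wh w\in\mc W$ forces $\wh w^\top(\msa_{+1}-\msa_{-1})=1=\wh\kappa^{-1}\wh\kappa$; the coercivity argument for attainment is harmless but not needed, since the lemma already posits an optimal $\wh w$. Note only that your closed forms do require $\covsa_{\pm 1}\in\PD^d$, which you state and which the paper also assumes implicitly through $\covsa_y^{-1}$.
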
    

In this paper, we refer to the second-order cone program~\eqref{eq:mpm-equi} as the \textit{nominal} problem.

\section{Coverage-Validity-Aware Linear Surrogate under Model Shifts}\label{sec:modelshift}

The CVAS presented in the previous section is adapted to the current black-box model $f$. Each synthetic sample is associated with a pseudo-label generated from $f$, which helps define the data clusters for each class and guides the surrogate problem~\eqref{eq:surrogate}. A typical recourse generation framework often assumes that the model $f$ does not change over time; nevertheless, this assumption is usually violated in practice. To model the potential shifts of the model $f$, one can model the decision boundary of $f$ and how this boundary may alter temporally. Unfortunately, representing a nonlinear decision boundary and its possible shifts poses a significant technical challenge. Further, because our framework aims to find an approximate linear surrogate $\mc C_\theta$ represented by a $(d+1)$ dimensional parameter, tracing the nonlinear shifts of the decision boundary of $f$ is a likely overload of unnecessary information. 

We propose to model the potential shifts of the black-box model $f$ using the shifts of the (synthesized) data clusters. Specifically, we do not model how individual data may shift; instead, we model how the aggregated second-order moment information of the data clusters may change. To do this, we suppose that the covariance matrix of the $y$-class data cluster may shift to a value $\cov_y$, which can be different from the nominal value $\covsa_y$. These shifts of $\cov_y$ away from $\covsa_y$ represent the second-order moment shift of the future ML model away from the current model $f$. The shift amount is quantified using the function $\varphi$, which measures the dissimilarity between covariance matrices. The function $\varphi$ should be a divergence: it is non-negative and $\varphi(\cov_y \parallel \covsa_y) = 0$ if and only if $\cov_y = \covsa_y$. We also need to specify the shift budgets $\rho_y$, which dictate the maximal amount of covariance shifts possible for the $y$-class data cluster.

Motivated by the success of (distributionally) robust optimization in machine learning and ethical AI~\cite{ref:taskesen2021sequential, vu2022distributionally}, we introduce the covariance-robust CVAS, which is defined as the solution of the adversarial trade-off balancing problem
\be \label{eq:dro}
    \Max{\theta \in \Theta}~\Min{\cov_y \in \PSD^d: \varphi(\cov_y \parallel \covsa_y) \le \rho_y~\forall y}~\min\left\{~\mathrm{Co}_{\cov_{+1}}(\theta), \mathrm{Va}_{\cov_{-1}}(\theta)\right\},
\ee
where we make explicit the dependence of the coverage and validity measures on the covariance weighting matrices $\cov_{+1}$ and $\cov_{-1}$, respectively. The covariance-robust CVAS~\eqref{eq:dro} alleviates the model shifts problem by injecting an additional layer of conservativeness to the surrogate: the surrogate now maximizes the worst-case values of the coverage and validity measures, subject to all possible perturbations of the covariance matrix of each data cluster. One can view problem~\eqref{eq:dro} as a zero-sum game between the surrogate generator and a fictitious adversarial opponent: the surrogate generator aims to find a parameter $\theta$ that can balance the coverage and validity trade-off. In contrast, the opponent can shift the covariance matrices of the data clusters to reduce the coverage and validity of the surrogate. 

There are now two main questions about the new family of covariance-robust CVASes: (i) How could we solve problem~\eqref{eq:dro} efficiently? (ii) how could we choose the divergence $\varphi$? In Section~\ref{sec:dro-mpm}, we establish a general result on the solution method of problem~\eqref{eq:dro} for a generic choice of $\varphi$. Then, Section~\ref{sec:Gauss} motivates to choose $\varphi$ using statistical divergence between Gaussian distributions.

\subsection{Equivalence and Solution Procedure} \label{sec:dro-mpm}

Proposition~\ref{prop:MPM} establishes an equivalence between the CVAS and the MPM. It is reasonable to expect that the (covariance-robust) CVAS should be equivalent to a certain variant of MPM under probability misspecification. We will establish this equivalence in this section. We first define the ambiguity set
\begin{equation*}
    \mbb U_{y}^\varphi(\Pnom_y) = \{ 
            \PP_y :~\PP_y \sim (\msa_y, \cov_y) \text{ for some }\cov_y \in \PSD^d \text{ with }\varphi(\cov_y \parallel \covsa_y) \le \rho_y
        \}
\end{equation*}
for each conditional distribution respective to the $y$-label. Each $\mbb U_{y}^\varphi(\Pnom_y)$ is a non-parametric set: it contains all distributions supported on $\R^d$ with mean $\msa_y$ and covariance matrix $\cov_y$, whereas the $\cov_y$ resides in the neighborhood of radius $\rho_y$ from the nominal values $\covsa_y$. Consider now the MPM under probability misspecification, which is a distributionally robust optimization problem:
\be \label{eq:dro_mpm}
    \Min{\theta \in \Theta}~\Max{y \in \mc Y}~\Max{\PP_y \in \mathbb{U}^\varphi_y(\Pnom_y)} \PP_y(\mc C_\theta(X) \neq y).
\ee
The goal of problem~\eqref{eq:dro_mpm} is to find a linear classifier that minimizes the worst-case maximal misclassification rate among classes, where the conditional data generating distributions are confined to the ambiguity sets  $\mbb U_{y}^\varphi(\Pnom_y)$. Problem~\eqref{eq:dro_mpm} is not new: in fact, it was first proposed by~\cite{ref:lanckriet2003robust} but with $\varphi$ being the squared Frobenius distance $\varphi(\cov_y \parallel \covsa_y) = \Tr{(\cov_y - \covsa_y)^2}$. 

Now, we will establish that the optimal solution of problem~\eqref{eq:dro} is equivalent to the optimal solution of problem~\eqref{eq:dro_mpm}. This equivalence is established for any \textit{arbitrary} divergence function $\varphi$.  Toward this goal, for any $y \in \mc Y$, define $\tau_y^\varphi: \R^d \to \R$ as
\be \label{eq:tau}
    \tau_y^\varphi(w) \Let \Max{\cov_y \in \PSD^d: \varphi(\cov_y \parallel \covsa_y) \le \rho_y} \sqrt{w^\top \cov_y w}.
\ee
We now provide a generalized equivalence result and a reformulation of the CVAS under model shifts problem~\eqref{eq:dro}.

\begin{proposition}[Robust surrogate under model shifts] \label{prop:refor} Let $\varphi$ be an arbitrary divergence on the space of covariance matrices. The CVAS under model shifts obtained by solving~\eqref{eq:dro} coincides with the MPM under probability misspecification obtained by solving~\eqref{eq:dro_mpm}. Moreover, let $w^\varphi$ be the optimal solution to the problem
    \be \label{eq:dro-equi} \notag
        \Min{w \in \mc W}~ \sum\nolimits_{y \in \mc Y} \tau_y^\varphi(w),
    \ee
and let $\kappa^\varphi = \big( \ds \sum\nolimits_{y \in \mc Y} \tau_y^\varphi(w^\varphi) \big)^{-1}$, and $b^\varphi = (w^\varphi)^\top \msa_{y} - y \kappa^\varphi \tau_{y}^\varphi(w^\varphi)$ for any $y\in \mc Y$. Then $\theta^\varphi = (w^\varphi, b^\varphi)$ solves the covariance-robust CVAS problem~\eqref{eq:dro}.
\end{proposition}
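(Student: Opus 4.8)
The plan is to establish the two claims of Proposition~\ref{prop:refor} in sequence, reducing both to machinery already developed in Section~\ref{sec:trade-off}. The key observation is that the inner covariance-minimization over $\cov_y$ in~\eqref{eq:dro} and the inner covariance-maximization defining $\tau_y^\varphi$ in~\eqref{eq:tau} interact with the coverage and validity functionals only through the quantity $\sqrt{w^\top \cov_y w}$. First I would recall from Lemma~\ref{lemma:maha-proj} (and the argument underlying Lemma~\ref{lemma:optimal}) that, after the scaling normalization $w \in \mc W$, one has $\mathrm{Co}_{\cov_{+1}}(\theta) = \big((w^\top \msa_{+1} - b)/\sqrt{w^\top \cov_{+1} w}\big)$ and $\mathrm{Va}_{\cov_{-1}}(\theta) = \big((b - w^\top \msa_{-1})/\sqrt{w^\top \cov_{-1} w}\big)$ on the relevant region (cases~1 and 3 of Figure~\ref{fig:cases}), so each metric is \emph{monotone decreasing} in the corresponding $\sqrt{w^\top \cov_y w}$. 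Hence the adversary's best response in~\eqref{eq:dro} is, for each class $y$ independently, to choose the $\cov_y$ in the divergence ball that maximizes $\sqrt{w^\top \cov_y w}$ — exactly $\tau_y^\varphi(w)$. This collapses the inner $\min_{\cov_y}$ to a pointwise substitution $\sqrt{w^\top \covsa_y w} \mapsto \tau_y^\varphi(w)$.

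Next I would handle the equivalence with~\eqref{eq:dro_mpm}. The cleanest route is to mirror the proof of Proposition~\ref{prop:MPM}: for a fixed $\theta$ and fixed $\cov_y$, the Chebyshev-type bound of~\cite{ref:bertsimas2000moment} gives $\sup_{\PP_y \sim (\msa_y,\cov_y)} \PP_y(\mc C_\theta(X)\neq y) = 1/(1 + \delta_y^2)$ where $\delta_y$ is precisely the Mahalanobis margin $\mathrm{Co}_{\cov_{+1}}$ or $\mathrm{Va}_{\cov_{-1}}$ (when the mean is correctly classified; otherwise the probability is $\geq 1/2$ and the corresponding metric is $0$). Taking the supremum over $\cov_y$ with $\varphi(\cov_y\|\covsa_y)\le\rho_y$ on both sides — permissible because $x \mapsto 1/(1+x^2)$ is decreasing, so the worst-case distribution and the worst-case covariance are aligned — shows that $\max_{\PP_y \in \mbb U_y^\varphi} \PP_y(\mc C_\theta(X)\neq y)$ is a decreasing function of the \emph{robustified} margins $\mathrm{Co}_{\cov_{+1}^\star}$, $\mathrm{Va}_{\cov_{-1}^\star}$ at the adversarial $\cov_y^\star$. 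Minimizing the max over $y$ of this quantity is then equivalent to maximizing the min of those two robustified margins, which is exactly~\eqref{eq:dro}; this gives the first assertion.

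For the explicit solution, I would substitute $\tau_y^\varphi(w)$ for $\sqrt{w^\top\covsa_y w}$ throughout the derivation of Lemma~\ref{lemma:optimal}. The maximin problem~\eqref{eq:dro} becomes $\max_{w\in\mc W} \min\big\{(w^\top\msa_{+1}-b)/\tau_{+1}^\varphi(w),\ (b-w^\top\msa_{-1})/\tau_{-1}^\varphi(w)\big\}$; at optimality the common value $\kappa^\varphi$ is attained by both terms, which forces $w^\top\msa_{+1}-b = \kappa^\varphi\tau_{+1}^\varphi(w)$ and $b - w^\top\msa_{-1} = \kappa^\varphi\tau_{-1}^\varphi(w)$. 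Adding these and using $w\in\mc W$ (i.e.\ $w^\top\msa_{+1}-w^\top\msa_{-1}=1$, recalling $\sum_y y\, w^\top\msa_y = 1$) yields $1 = \kappa^\varphi(\tau_{+1}^\varphi(w)+\tau_{-1}^\varphi(w))$, hence $\kappa^\varphi = (\sum_y \tau_y^\varphi(w))^{-1}$ and the defining minimization $\min_{w\in\mc W}\sum_y\tau_y^\varphi(w)$; back-substituting either margin equation gives the stated formula for $b^\varphi$, consistent across $y\in\mc Y$.

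The main obstacle I anticipate is rigorously justifying the interchange of the two $\min$'s (over $\cov_y$ and over the $\{\mathrm{Co},\mathrm{Va}\}$ pair) and of the $\max_\theta$ with the inner $\min_{\cov_y}$ in~\eqref{eq:dro} — and, relatedly, ensuring the argument is valid on \emph{all} of the region in Figure~\ref{fig:cases}, not just case~1, since a priori the optimizer could sit where one mean is misclassified. The monotonicity structure (both metrics decreasing in $\tau_y^\varphi$, the Chebyshev bound decreasing in the margin) is what makes all these swaps legitimate without a minimax theorem: the adversary has a dominant class-wise strategy independent of $\theta$, so one may simply replace $\covsa_y$ by the worst case up front. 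I would also need $\tau_y^\varphi(w)$ to be well-defined and finite — guaranteed if each divergence ball $\{\cov_y\in\PSD^d:\varphi(\cov_y\|\covsa_y)\le\rho_y\}$ is compact, which should be noted as a standing assumption on $\varphi$ (it holds for the Bures, Fisher–Rao, and LogDet divergences treated in Section~\ref{sec:examples}).
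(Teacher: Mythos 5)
Your proposal is correct and follows essentially the same route as the paper's proof: a two-layer decomposition of the worst-case probability, the Chebyshev-type bound of \cite{ref:bertsimas2000moment} combined with monotonicity so that the inner optimization over $\cov_y$ collapses to $\tau_y^\varphi(w)$, and then the Lanckriet-style derivation of Lemma~\ref{lemma:optimal} with $\sqrt{w^\top \covsa_y w}$ replaced by $\tau_y^\varphi(w)$, including the argument that the optimal hyperplane classifies both means correctly. The only detail the paper spells out that you leave implicit is the positive homogeneity $\tau_y^\varphi(tw) = |t|\,\tau_y^\varphi(w)$, which is what justifies restricting to the normalized set $\mc W$ in the robust setting.
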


This equivalence result provides a generic solution procedure to find the covariance-robust CVAS, provided that we can evaluate the function $\tau_y^\varphi$ in~\eqref{eq:tau}.

\subsection{Equivalence under Gaussian Assumptions} \label{sec:Gauss}
    
When $\varphi$ is chosen as the squared Frobenius distance, we recover the Quadratic MPM first studied in~\cite{ref:lanckriet2003robust}. As we will delineate in Section~\ref{sec:quad}, the Quadratic MPM will give rise to a Quadratic surrogate. Nevertheless, the squared Frobenius distance is not statistically meaningful: it does not coincide with any distance between probability distributions with the corresponding covariance information. On the upside, the connection between the MPM framework and our surrogate can be exploited to design many versions of the surrogate by simply choosing different divergences $\varphi$. One thus can consider more principled approaches of choosing $\varphi$ by leveraging the distributional properties of the MPM formulation.

This section shows that the MPM under probability misspecification problem~\eqref{eq:dro_mpm} is \textit{in}variant with the Gaussian information. Put differently, the solution to problem~\eqref{eq:dro_mpm} does not change if a parametric Gaussian assumption is imposed on the conditional distributions. To see this, define a parametric ambiguity set constructed on the space of Gaussian distributions of the form
\[
    \mc U_{y}^{\mc N}(\Pnom_y) =
    \left\{ 
        \PP_y \in \mc P(\mc X):~
        \PP_y \sim \mc N(\msa_y, \cov_y),~
        \varphi(\cov_y \parallel \covsa_y) \le \rho_y
    \right\},
\]
wherein any distribution in this set is Gaussian. Consider the Gaussian-robust MPM problem
\be \label{eq:dro-Gauss}
    \Min{\theta \in \Theta}~\Max{y \in \mc Y}~\Max{\PP_y \in \mc U_{y}^{\mc N}(\Pnom_y)}~\PP_y(\mc C_\theta(X) \neq y).
\ee
The only difference between problem~\eqref{eq:dro-Gauss} and problem~\eqref{eq:dro_mpm} is the Gaussian specification of the ambiguity sets. Nevertheless, the following result asserts that the solutions to the two problems coincide. 

\begin{proposition}[Gaussian equivalence] \label{prop:gauss}
    The optimizer $\theta^\varphi = (w^\varphi, b^\varphi)$ in Proposition~\ref{prop:refor} also solves the Gaussian-robust MPM problem~\eqref{eq:dro-Gauss}.
\end{proposition}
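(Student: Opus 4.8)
The plan is to show that the Gaussian-restricted problem~\eqref{eq:dro-Gauss} admits \emph{exactly the same} geometric reformulation as the non-parametric problem~\eqref{eq:dro_mpm}, up to a strictly monotone transformation of the inner objective; hence the two problems share the minimizer $\theta^\varphi$, even though their optimal values differ. The key point is that the optimizer, not the optimal value, is what transfers.

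First I would fix an admissible $\theta=(w,b)\in\Theta$ and evaluate the inner double maximum in~\eqref{eq:dro-Gauss}. For $X\sim\mc N(\msa_y,\cov_y)$ the scalar $w^\top X$ is Gaussian with mean $w^\top\msa_y$ and variance $w^\top\cov_y w$, so the misclassification probability equals $\Phi\big(-y(w^\top\msa_y-b)/\sqrt{w^\top\cov_y w}\big)$, where $\Phi$ is the standard normal CDF. A short argument (mirroring cases~2--4 of Figure~\ref{fig:cases}) shows that any $\theta$ misclassifying one of the mean vectors, i.e. with $y(w^\top\msa_y-b)<0$, is strictly dominated, so it suffices to restrict attention to $\theta$ for which $y(w^\top\msa_y-b)\ge 0$ for both $y\in\mc Y$. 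On this set, since $\Phi$ is strictly increasing, maximizing over $\cov_y\in\PSD^d$ with $\varphi(\cov_y\parallel\covsa_y)\le\rho_y$ amounts to inflating the denominator $\sqrt{w^\top\cov_y w}$, which is precisely the quantity $\tau_y^\varphi(w)$ of~\eqref{eq:tau}; thus the inner maximum over $\cov_y$ equals $\Phi\big(-\kappa_y^\varphi(\theta)\big)$ with $\kappa_y^\varphi(\theta)\Let y(w^\top\msa_y-b)/\tau_y^\varphi(w)$. Taking the outer maximum over $y\in\mc Y$ and using monotonicity of $\Phi$ once more, the inner problem of~\eqref{eq:dro-Gauss} equals $\Phi\big(-\min_{y\in\mc Y}\kappa_y^\varphi(\theta)\big)$.

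Consequently problem~\eqref{eq:dro-Gauss} is equivalent to $\max_{\theta\in\Theta}\min_{y\in\mc Y}\kappa_y^\varphi(\theta)$. This is exactly the geometric program that appears in the proof of Proposition~\ref{prop:refor} for the non-parametric problem~\eqref{eq:dro_mpm}: there the tight moment bound from~\cite{ref:bertsimas2000moment} replaces $\Phi(-\kappa)$ by $1/(1+\kappa^2)$, but that function is likewise strictly decreasing in $\kappa\ge0$, so \emph{both} problems reduce to maximizing $\min_{y}\kappa_y^\varphi(\theta)$. Reusing the scale normalization $w\in\mc W$ and the optimal intercept placement verbatim from Proposition~\ref{prop:refor}, this common reduced problem is the SOCP $\min_{w\in\mc W}\sum_{y\in\mc Y}\tau_y^\varphi(w)$ together with the stated recovery formula for $b^\varphi$; hence its optimal solution is $\theta^\varphi=(w^\varphi,b^\varphi)$, which therefore also solves~\eqref{eq:dro-Gauss}. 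That the two optimal values differ, namely $\Phi(-\kappa^\star)$ versus $1/(1+(\kappa^\star)^2)$, is immaterial to the claim.

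The main obstacle I anticipate is the bookkeeping around degenerate or misclassifying configurations: one must justify that discarding $\theta$ with some $\msa_y$ on the wrong side of the hyperplane loses no optimal solutions, that $\tau_y^\varphi(w)$ is well defined as a supremum (attained for the divergences used, and only its value is needed), and that the adversary's maximizing covariances keep $w^\top\cov_y w$ bounded away from zero so the reduction to $\kappa_y^\varphi$ is legitimate. Once these edge cases are dispatched, the equivalence follows purely from the strict monotonicity of $\Phi$ and of $\kappa\mapsto1/(1+\kappa^2)$ on $[0,\infty)$, so no optimization-theoretic work beyond Proposition~\ref{prop:refor} is required. (One may also note the containment $\mc U_{y}^{\mc N}(\Pnom_y)\subseteq\mbb U_{y}^\varphi(\Pnom_y)$ as a consistency check on one inequality between optimal values, but it is not needed for the argument above.)
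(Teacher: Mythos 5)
Your proposal is correct and follows essentially the same route as the paper's proof: compute the Gaussian misclassification probability via $\Phi$, argue the optimum must classify both means correctly, absorb the worst-case covariance into $\tau_y^\varphi(w)$ by monotonicity of $\Phi$, and observe that the resulting max--min of $y(w^\top\msa_y-b)/\tau_y^\varphi(w)$ is exactly the reduced problem from the proof of Proposition~\ref{prop:refor}, so the optimizer (not the value) transfers. The only cosmetic difference is that you make explicit the comparison of the two strictly decreasing outer transformations $\Phi(-\kappa)$ and $1/(1+\kappa^2)$, which the paper leaves implicit.
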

Proposition~\ref{prop:gauss} implies that it suffices to restrict the MPM problem to Gaussian misspecification. It thus justifies using divergences $\varphi$ induced by some dissimilarity measures between Gaussian distributions. This leads to a family of surrogates, as we will explore in the next section.


\section{Examples of Robust Surrogates} \label{sec:examples}
We discuss in this section several versions of the CVASes under model shifts, including the Quadratic CVAS in Section~\ref{sec:quad}, the Bures CVAS in Section~\ref{sec:bures}, the Fisher-Rao CVAS in Section~\ref{sec:fr}, and the LogDet CVAS in Section~\ref{sec:logdet}.

\subsection{Quadratic Surrogate} \label{sec:quad}

First, consider the perturbation of the covariance matrix using the quadratic divergence.

\begin{definition}[Quadratic divergence] \label{def:quad}
    Given two positive semidefinite matrices $\cov$, $\covsa \in \PSD^d$, the quadratic divergence between them is $\mathds Q(\cov \parallel \covsa) = \Tr{(\cov - \covsa)^2 }$.
\end{definition}
The divergence $\mathds Q$ is the \textit{squared} Frobenius norm of $\cov - \covsa$; thus $\mathds Q$ is non-negative and vanishes to zero if and only if $\cov = \covsa$, so it is a divergence on $\PSD^d$. The Quadratic CVAS has the below form. 

\begin{theorem}[Quadratic surrogate] \label{thm:quadratic}
    Suppose that $\varphi \equiv \mathds Q$. Let $w^{\mathds Q}$ be a solution to the problem
    \be \label{eq:quad}
        \Min{w \in \mc W} ~ \ds \sum\nolimits_{y \in \mc Y} \sqrt{w^\top (\covsa_y + \sqrt{\rho_y} I) w}.
    \ee 
    Then $\theta^{\mathds Q} = (w^{\mathds Q}, b^{\mathds Q})$ solves the surrogate problem under model shifts~\eqref{eq:dro}, with
    \begin{align*}
        \kappa^{\mathds Q} &= \left(\sum\nolimits_{y \in \mc Y} \sqrt{(w^{\mathds Q})^\top (\covsa_{y} + \sqrt{\rho_{y}} I) w^{\mathds Q}}\right)^{-1}, \\
        \text{and} \qquad 
        b^{\mathds Q} &= (w^{\mathds Q})^\top \msa_{+1} - \kappa^{\mathds Q} \sqrt{(w^{\mathds Q})^\top (\covsa_{+1} + \sqrt{\rho_{+1}} I) w^{\mathds Q}}.
    \end{align*}
\end{theorem}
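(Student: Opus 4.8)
The plan is to obtain this theorem as a direct specialization of Proposition~\ref{prop:refor}: everything reduces to evaluating in closed form the scalar function $\tau_y^\varphi$ from~\eqref{eq:tau} when $\varphi$ is the quadratic divergence $\mathds Q$. Concretely, I will show that for every $w \neq 0$ and every $y \in \mc Y$,
\[
    \tau_y^{\mathds Q}(w) = \sqrt{w^\top (\covsa_y + \sqrt{\rho_y}\, I)\, w}.
\]
Granting this identity, the second-order cone problem $\min_{w \in \mc W} \sum_{y \in \mc Y} \tau_y^{\mathds Q}(w)$ appearing in Proposition~\ref{prop:refor} is literally~\eqref{eq:quad}, so $w^{\mathds Q} = w^\varphi$; and the stated formulas for $\kappa^{\mathds Q}$ and $b^{\mathds Q}$ are exactly the formulas for $\kappa^\varphi$ and $b^\varphi$ in Proposition~\ref{prop:refor} with the closed form substituted in (taking $y = +1$ in the expression for $b^\varphi$). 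Hence the whole proof collapses to the inner maximization defining $\tau_y^{\mathds Q}$.

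For that identity, since $t \mapsto \sqrt{t}$ is increasing on $[0,\infty)$, it suffices to maximize $w^\top \cov_y w$ over $\cov_y \in \PSD^d$ with $\Tr{(\cov_y - \covsa_y)^2} \le \rho_y$. Writing $\cov_y = \covsa_y + \Delta$, the objective becomes $w^\top \covsa_y w + \langle \Delta, w w^\top \rangle$ in the Frobenius inner product, subject to $\|\Delta\|_F^2 \le \rho_y$ and $\covsa_y + \Delta \succeq 0$. Dropping the semidefiniteness constraint for the moment, Cauchy--Schwarz gives $\langle \Delta, w w^\top \rangle \le \|\Delta\|_F\, \|w w^\top\|_F \le \sqrt{\rho_y}\, \|w\|_2^2$, with equality at $\Delta^\star = \sqrt{\rho_y}\, w w^\top / \|w\|_2^2$ (well defined because $w \neq 0$ on $\mc W$; the case $\rho_y = 0$ is trivial).

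The one point that needs a brief check is that this relaxed maximizer is admissible for the original problem: $\Delta^\star$ is a nonnegative multiple of the rank-one PSD matrix $w w^\top$, so $\Delta^\star \succeq 0$ and therefore $\covsa_y + \Delta^\star \succeq 0$ automatically. Thus the constraint $\cov_y \in \PSD^d$ in~\eqref{eq:tau} is inactive, $\Delta^\star$ is optimal, and $\tau_y^{\mathds Q}(w)^2 = w^\top \covsa_y w + \sqrt{\rho_y}\, \|w\|_2^2 = w^\top (\covsa_y + \sqrt{\rho_y}\, I)\, w$, which is the claimed identity. The remaining steps — feeding this into Proposition~\ref{prop:refor} and reading off~\eqref{eq:quad}, $\kappa^{\mathds Q}$, and $b^{\mathds Q}$ — are purely mechanical. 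I do not anticipate a genuine obstacle; the only subtlety is verifying that the Cauchy--Schwarz-optimal perturbation stays inside $\PSD^d$, which is precisely what upgrades the bound to an equality and makes the surrogate formula clean.
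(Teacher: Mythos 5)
Your proof is correct and follows essentially the same route as the paper: specialize Proposition~\ref{prop:refor} by computing $\tau_y^{\mathds Q}(w)=\sqrt{w^\top(\covsa_y+\sqrt{\rho_y}\,I)w}$ and then read off~\eqref{eq:quad}, $\kappa^{\mathds Q}$, and $b^{\mathds Q}$. The only difference is that the paper obtains the worst-case covariance form by citing the Quadratic MPM result of Lanckriet et al., whereas you derive it directly via the Cauchy--Schwarz argument (with the correct observation that the rank-one optimal perturbation stays in $\PSD^d$), which is a sound, self-contained substitute for that citation.
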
  
Theorem~\ref{thm:quadratic} is obtained by exploiting the result from~\cite{ref:lanckriet2003robust}, which asserts the optimal form of the Quadratic MPM. We present Theorem~\ref{thm:quadratic} mainly for comparison against our new statistical variants. Problem~\eqref{eq:quad} can be considered as a regularization of the nominal problem~\eqref{eq:mpm-equi}: each matrix $\covsa_y$ is added with a diagonal matrix $\sqrt{\rho_y} I$, making the matrix better conditioned. This is equivalently known as inverse regularization, which ensures invertibility when $\covsa_y$ is low-rank and the radiu $\rho_y$ is strictly positive.

\subsection{Bures Surrogate} \label{sec:bures}

We now explore a variant of a statistically motivated surrogate using Bures divergence as $\varphi$.

\begin{definition}[Bures divergence]
    Given two positive semi-definite matrices $\cov$, $\covsa \in \PSD^d$, the Bures divergence between them is $\mathds B(\cov \parallel \covsa) = \Tr{\cov + \covsa - 2 (\covsa^\half \cov \covsa^\half)^\half }$.
\end{definition}

It can be shown that $\mathds B$ is symmetric and non-negative, and it vanishes to zero if and only if $\cov = \covsa$. As such, $\mathds B$ is a divergence on the space of positive semidefinite matrices. Additionally, it is equivalent to the \textit{squared} type-2 Wasserstein distance between two Gaussian distributions with the same mean vector and covariance matrices $\cov$ and $\covsa$~\cite{ref:olkin1982distance, ref:givens1984class, ref:gelbrich1990formula}. Next, we state the form of the Bures surrogate.

\begin{theorem}[Bures surrogate] \label{thm:bures}
    Suppose that $\varphi \equiv \mathds B$. Let $w^{\mathds B}$ be the solution to the problem
    \be \label{eq:bures}
     \Min{w \in \mc W}~ \ds \sum\nolimits_{y \in \mc Y}\sqrt{w^\top \covsa_y w} + \big(\sum\nolimits_{y \in \mc Y} \rho_y \big) \|w\|_2.
    \ee 
    Then $\theta^{\mathds B} = (w^{\mathds B}, b^{\mathds B})$ solves the surrogate problem under model shifts~\eqref{eq:dro}, where
    \begin{align*}
        \kappa^{\mathds B} &= \left(\sum\nolimits_{y \in \mc Y}\sqrt{(w^{\mathds B})^\top \covsa_y w^{\mathds B}} + \big(\sum\nolimits_{y \in \mc Y} \rho_y \big) \|w^{\mathds B}\|_2 \right)^{-1}, \\  \text{and} \qquad 
    b^{\mathds B} &= (w^{\mathds B})^\top \msa_{+1} - \kappa^{\mathds B} (\sqrt{(w^{\mathds B})^\top \covsa_{+1} w^{\mathds B}} + \rho_{+1} \| w^{\mathds B} \|_2).
    \end{align*}
\end{theorem}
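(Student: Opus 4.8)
\textbf{Proof proposal for Theorem~\ref{thm:bures}.}

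The plan is to invoke the generic reformulation in Proposition~\ref{prop:refor} and reduce the theorem to a single computation: evaluating the support-function-type quantity $\tau_y^{\mathds B}(w)$ from~\eqref{eq:tau} when $\varphi\equiv\mathds B$ is the Bures divergence. Once we show
\[
    \tau_y^{\mathds B}(w) = \Max{\cov_y\in\PSD^d:\,\mathds B(\cov_y\parallel\covsa_y)\le\rho_y}\sqrt{w^\top\cov_y w} = \sqrt{w^\top\covsa_y w} + \rho_y\|w\|_2,
\]
the objective $\sum_{y\in\mc Y}\tau_y^{\mathds B}(w)$ becomes exactly the objective in~\eqref{eq:bures}, and the formulas for $\kappa^{\mathds B}$ and $b^{\mathds B}$ follow by substituting this expression into $\kappa^\varphi=(\sum_y\tau_y^\varphi(w^\varphi))^{-1}$ and $b^\varphi=(w^\varphi)^\top\msa_y - y\kappa^\varphi\tau_y^\varphi(w^\varphi)$ with $y=+1$. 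So essentially all the work is in the inner maximization over a Bures ball.

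To evaluate $\tau_y^{\mathds B}(w)$, I would first reparametrize. Since $\covsa_y\succeq 0$, write $\covsa_y = \Lambda_y\Lambda_y^\top$; recall that $\mathds B(\cov\parallel\covsa_y)$ equals the squared type-$2$ Wasserstein distance between $\mc N(0,\cov)$ and $\mc N(0,\covsa_y)$, which in turn equals $\min_{T}\EE\|\xi - T\xi\|_2^2$-type expressions, or more concretely $\mathds B(\cov\parallel\covsa_y) = \min\{\|\Lambda - \Lambda_y\|_F^2 : \Lambda\Lambda^\top = \cov\}$ (optimal transport between centered Gaussians realized by linear maps). Hence the feasible set $\{\cov : \mathds B(\cov\parallel\covsa_y)\le\rho_y\}$ can be lifted to $\{\Lambda : \|\Lambda - \Lambda_y\|_F\le\sqrt{\rho_y}\}$ composed with $\cov = \Lambda\Lambda^\top$, and the objective becomes $\sqrt{w^\top\Lambda\Lambda^\top w} = \|\Lambda^\top w\|_2$. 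Thus
\[
    \tau_y^{\mathds B}(w) = \Max{\|\Lambda-\Lambda_y\|_F\le\sqrt{\rho_y}} \|\Lambda^\top w\|_2.
\]
This is now an elementary problem: $\|\Lambda^\top w\|_2 = \|(\Lambda_y + \Delta)^\top w\|_2 \le \|\Lambda_y^\top w\|_2 + \|\Delta^\top w\|_2 \le \|\Lambda_y^\top w\|_2 + \|\Delta\|_F\|w\|_2 \le \sqrt{w^\top\covsa_y w} + \sqrt{\rho_y}\|w\|_2$, with equality attained by the rank-one perturbation $\Delta = \sqrt{\rho_y}\,\tfrac{w}{\|w\|_2}\,u^\top$ where $u = \Lambda_y^\top w/\|\Lambda_y^\top w\|_2$ (aligning $\Delta^\top w$ with $\Lambda_y^\top w$). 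A slight subtlety: this gives $\tau_y^{\mathds B}(w)=\sqrt{w^\top\covsa_y w}+\sqrt{\rho_y}\|w\|_2$, not $\sqrt{\rho_y}$ replaced by $\rho_y$; I would need to check the scaling convention for $\rho_y$ used in the Bures ball in~\eqref{eq:dro} — if the budget is on $\mathds B$ itself (which has units of squared distance), then the per-class term is $\sqrt{\rho_y}\|w\|_2$, and $(\sum_y\sqrt{\rho_y})\|w\|_2$ would appear in~\eqref{eq:bures}; the statement writes $(\sum_y\rho_y)\|w\|_2$, so presumably the paper's convention rescales $\rho_y\mapsto\rho_y^2$ or places the budget on $\sqrt{\mathds B}$. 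I would align with whichever convention makes the displayed formula~\eqref{eq:bures} correct; modulo this bookkeeping the derivation is unchanged.

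The main obstacle is rigorously justifying the lifting identity $\{\cov:\mathds B(\cov\parallel\covsa_y)\le\rho_y\} = \{\Lambda\Lambda^\top : \|\Lambda-\Lambda_y\|_F\le\sqrt{\rho_y}\}$ and, relatedly, that the inner maximum of the \emph{square root} $\sqrt{w^\top\cov w}$ over the Bures ball is attained and equals the claimed value — in particular handling the low-rank case $\covsa_y\in\PSD^d\setminus\PD^d$ where the Cholesky-type factor $\Lambda_y$ is not unique and the map $\Lambda\mapsto\Lambda\Lambda^\top$ is not injective. The clean way around this is to use the known closed form for the Bures/Wasserstein geodesic and the fact (from~\cite{ref:olkin1982distance, ref:givens1984class, ref:gelbrich1990formula}) that $\mathds B(\cov\parallel\covsa_y)$ admits the variational representation over linear maps; then the worst-case covariance is explicitly $\cov_y^\star = (\Lambda_y + \Delta^\star)(\Lambda_y+\Delta^\star)^\top$ with $\Delta^\star$ the rank-one matrix above, and one verifies directly that $\mathds B(\cov_y^\star\parallel\covsa_y)=\sqrt{\rho_y}$ (or $=\rho_y$ under the paper's convention) by plugging into the trace formula, and that it achieves the upper bound. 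I expect the remaining verification — that substituting $\tau_y^{\mathds B}$ into Proposition~\ref{prop:refor}'s $\kappa$ and $b$ formulas reproduces the stated $\kappa^{\mathds B}, b^{\mathds B}$ — to be purely mechanical.
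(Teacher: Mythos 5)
Your proposal is correct in outline and reduces the theorem to exactly the same two ingredients as the paper (Proposition~\ref{prop:refor} plus an evaluation of $\tau_y^{\mathds B}$), but you compute $\tau_y^{\mathds B}$ by a genuinely different route. The paper's Proposition~\ref{prop:bures} works in the dual: it invokes a Lagrangian/SDP reformulation of the worst-case variance over a Bures ball (citing Nguyen et al.\ 2018, Proposition~2.8), applies Sherman--Morrison to reduce the dual to a one-dimensional convex problem in $\gamma$, and solves the first-order condition in closed form. You instead work in the primal, using the Procrustes/transport-map characterization $\mathds B(\cov \parallel \covsa_y)=\min\{\|\Lambda-\Lambda_y\|_F^2:\Lambda\Lambda^\top=\cov\}$, bounding $\|\Lambda^\top w\|_2$ by the triangle and Cauchy--Schwarz inequalities, and exhibiting an explicit rank-one worst-case perturbation. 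Your route has the advantage of producing the worst-case covariance explicitly and needing no external duality result; its price is precisely the subtlety you flag, namely justifying the lifting identity and attainment when $\covsa_y$ is rank-deficient and the factor $\Lambda_y$ is non-unique (this is standard but must be argued, e.g.\ via the nuclear-norm form $\Tr{(\covsa_y^{1/2}\cov\covsa_y^{1/2})^{1/2}}=\max_U \Tr{U^\top \cov^{1/2}\Lambda_y}$, and the degenerate case $\Lambda_y^\top w=0$ handled by picking an arbitrary unit direction). The paper's dual argument sidesteps these factorization issues entirely.

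On the $\sqrt{\rho_y}$ versus $\rho_y$ discrepancy you raise: you are right, and your value is the internally consistent one given the paper's stated ambiguity set. With the constraint $\mathds B(\cov_y\parallel\covsa_y)\le\rho_y$ (a bound on the \emph{squared} $2$-Wasserstein distance), the tight value is $\tau_y^{\mathds B}(w)=\sqrt{w^\top\covsa_y w}+\sqrt{\rho_y}\,\|w\|_2$; indeed, carrying out the final algebra in the paper's own proof of Proposition~\ref{prop:bures} (with $\gamma^\star=\|w\|_2^2+\sqrt{w^\top\covsa_y w\,\|w\|_2^2/\rho_y}$) yields $\big(\sqrt{\rho_y}\|w\|_2+\sqrt{w^\top\covsa_y w}\big)^2$, not $\big(\rho_y\|w\|_2+\sqrt{w^\top\covsa_y w}\big)^2$. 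The displayed coefficient $\rho_y$ in Theorem~\ref{thm:bures} corresponds to the alternative convention in which $\rho_y$ bounds the (unsquared) Wasserstein distance, i.e.\ $\mathds B\le\rho_y^2$. So this is a radius-convention inconsistency in the paper rather than a gap in your argument; up to that rescaling your derivation, together with the mechanical substitution into Proposition~\ref{prop:refor}, establishes the theorem.
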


Theorem~\ref{thm:bures} unveils a fundamental connection between robustness and regularization: if we perturb the covariance matrices using the Bures divergence, the resulting optimization problem~\eqref{eq:bures} is an $l_2$-regularization of the problem~\eqref{eq:mpm-equi}. This connection aligns with previous observations showing the equivalence between regularization schemes and optimal transport robustness~\cite{ref:shafieezadeh2017regularization, ref:blanchet2016robust, ref:kuhn2019tutorial}. To prove Theorem~\ref{thm:bures}, we provide in Proposition~\ref{prop:bures} a result that asserts the analytical form of $\tau_y^{\mathds B}(w)$. The proof of Theorem~\ref{thm:bures} follows by combining Propositions~\ref{prop:refor} and~\ref{prop:bures}. 

\begin{proposition}[Bures divergence] \label{prop:bures}
    If $\varphi \equiv \mathds B$, then $\tau_y^{\mathds B}(w) = \rho_y \| w \|_2 + \sqrt{w^\top \covsa_y w}$ for all $ y \in \mc Y$.
\end{proposition}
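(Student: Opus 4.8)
The plan is to compute the value of the inner maximization $\tau_y^{\mathds B}(w) = \max\{\sqrt{w^\top \cov_y w} : \cov_y \in \PSD^d,\ \mathds B(\cov_y \parallel \covsa_y) \le \rho_y\}$ in closed form. Since the objective $\sqrt{w^\top \cov_y w}$ is increasing in $\cov_y$ in the Loewner order, and the Bures constraint is a convex constraint on $\cov_y$ (the Bures divergence with fixed second argument is jointly convex, or at least convex in its first argument), this is a finite-dimensional convex optimization problem whose optimizer sits on the boundary $\mathds B(\cov_y \parallel \covsa_y) = \rho_y$. First I would reduce to a scalar problem: write $\cov_y = \covsa_y^{1/2} M \covsa_y^{1/2}$ if $\covsa_y \succ 0$ (handling the rank-deficient case by a limiting/continuity argument, or by working with the range of $\covsa_y$), but a cleaner route is to use the known geodesic/interpolation structure of the Bures metric. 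Specifically, the Bures divergence equals the squared Wasserstein-2 distance between $\mc N(0,\covsa_y)$ and $\mc N(0,\cov_y)$, so I would parametrize the ball by optimal transport maps: every $\cov_y$ with $\mathds B(\cov_y\parallel\covsa_y)\le\rho_y$ arises from a linear map, and the extremal one maximizing $w^\top\cov_y w$ should correspond to pushing mass in the direction that inflates variance along $w$.

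The key computational step is to guess and verify that the optimal $\cov_y^\star$ is a rank-one update of $\covsa_y$ along a specific direction. Concretely, I expect the optimizer to have the form $\cov_y^\star = (\covsa_y^{1/2} + \lambda\, u u^\top \covsa_y^{-1/2})(\cdots)$ — or more transparently, guided by the target answer $\tau_y^{\mathds B}(w) = \rho_y\|w\|_2 + \sqrt{w^\top\covsa_y w}$, I would posit that along the geodesic from $\covsa_y$ in the "direction of $w$," the quantity $\sqrt{w^\top \cov_y w}$ grows at unit rate per unit Bures length, so that $\sqrt{w^\top\cov_y^\star w} = \sqrt{w^\top\covsa_y w} + \sqrt{\rho_y}\cdot(\text{something})$. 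The cleanest verification: by Cauchy--Schwarz / triangle inequality for the quadratic form, $\big|\sqrt{w^\top\cov_y w} - \sqrt{w^\top\covsa_y w}\big| \le \|\cov_y^{1/2}w - \covsa_y^{1/2}w'\|$ for a suitable pairing, and the Bures distance controls exactly this kind of "operator displacement" via its Procrustes formula $\mathds B(\cov\parallel\covsa) = \min_{U \in O(d)} \|\cov^{1/2} - \covsa^{1/2}U\|_F^2$. Using $\|\cov_y^{1/2}w\|_2 \le \|\covsa_y^{1/2}U w\|_2 + \|(\cov_y^{1/2}-\covsa_y^{1/2}U)w\|_2 \le \sqrt{w^\top\covsa_y w} + \|\cov_y^{1/2}-\covsa_y^{1/2}U\|_F \|w\|_2$, optimizing over $U$ gives $\sqrt{w^\top\cov_y w} \le \sqrt{w^\top\covsa_y w} + \sqrt{\mathds B(\cov_y\parallel\covsa_y)}\,\|w\|_2 \le \sqrt{w^\top\covsa_y w} + \sqrt{\rho_y}\|w\|_2$.

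Wait — the target has $\rho_y\|w\|_2$, not $\sqrt{\rho_y}\|w\|_2$, so I must double-check the normalization convention: since $\mathds B$ here is the \emph{squared} Wasserstein distance, the radius $\rho_y$ in the paper's constraint $\mathds B \le \rho_y$ plays the role of a squared radius, and I suspect the intended reading makes the bound come out as $\rho_y\|w\|_2$; I would track this carefully, as the constant is the one place an error could slip in. Modulo that bookkeeping, the upper bound argument above is one half; for the matching lower bound I would exhibit an explicit $\cov_y^\star$ achieving equality — take $U=I$ and $\cov_y^\star = (\covsa_y^{1/2} + c\,\tfrac{ww^\top}{\|w\|_2^2}\covsa_y^{1/2})^2$-type construction, or more simply the rank-one inflation $\cov_y^\star{}^{1/2} = \covsa_y^{1/2} + c\,\tfrac{w w^\top \covsa_y^{1/2}}{\|\covsa_y^{1/2}w\|\,\|w\|}$ chosen so that $\|\cov_y^\star{}^{1/2} - \covsa_y^{1/2}\|_F^2 = \rho_y$, and verify the quadratic form attains the claimed value. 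Finally I would note the argument extends to singular $\covsa_y$ by continuity (perturb $\covsa_y \to \covsa_y + \eps I$ and let $\eps \downarrow 0$, using that both sides are continuous in $\covsa_y$). The main obstacle I anticipate is purely the constant-tracking in step three — reconciling the "squared distance / radius" convention so that the $\|w\|_2$ coefficient is exactly $\rho_y$ — together with cleanly justifying that the optimal orthogonal matrix $U$ in the Procrustes formulation can be taken to be the identity when pairing with a fixed vector $w$; both are routine but must be done with care.
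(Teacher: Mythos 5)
Your route is genuinely different from the paper's. The paper works on the dual side: it invokes a strong-duality result for Bures/Wasserstein-constrained maximization of $w^\top \cov_y w$ (Proposition~2.8 of the cited reference), simplifies the dual objective with the Sherman--Morrison formula, and solves a one-dimensional convex problem in the dual variable $\gamma$; you work on the primal side through the Procrustes representation $\mathds B(\cov_y \parallel \covsa_y)=\min_{U \text{ orthogonal}}\|\cov_y^{1/2}-\covsa_y^{1/2}U\|_F^2$, getting an upper bound from the triangle inequality and a matching lower bound from an explicit rank-one inflation. Your approach is more elementary and self-contained, while the paper's dual template transfers directly to the other divergences it treats. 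One step of your sketch needs repair: the inequality $\|\covsa_y^{1/2}Uw\|_2\le\sqrt{w^\top\covsa_y w}$ is false for a general orthogonal $U$. The correct pairing is to take the polar factor $A=\cov_y^{1/2}U$ attaining the Procrustes minimum, observe $w^\top\cov_y w=\|A^\top w\|_2^2$, and apply the triangle inequality to $A^\top w=\covsa_y^{1/2}w+(A-\covsa_y^{1/2})^\top w$, giving $\sqrt{w^\top\cov_y w}\le\sqrt{w^\top\covsa_y w}+\|A-\covsa_y^{1/2}\|_F\,\|w\|_2$. With that fix, your attaining construction $A=\covsa_y^{1/2}+c\,w(\covsa_y^{1/2}w)^\top/(\|w\|_2\,\|\covsa_y^{1/2}w\|_2)$ (with $\cov_y=AA^\top$) is correct and shows tightness, without ever inverting $\covsa_y$.

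The constant you flagged is a genuine issue, not bookkeeping you can defer. Under the conventions as literally stated ($\mathds B$ is the \emph{squared} $W_2$ distance and the constraint is $\mathds B(\cov_y\parallel\covsa_y)\le\rho_y$), your argument yields $\tau_y^{\mathds B}(w)=\sqrt{\rho_y}\,\|w\|_2+\sqrt{w^\top\covsa_y w}$ --- and indeed, evaluating the paper's own dual objective at its stationary point gives $h(\gamma^\star)=\big(\sqrt{\rho_y}\,\|w\|_2+\sqrt{w^\top\covsa_y w}\big)^2$, not $\big(\rho_y\|w\|_2+\sqrt{w^\top\covsa_y w}\big)^2$. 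The stated formula $\rho_y\|w\|_2+\sqrt{w^\top\covsa_y w}$ corresponds to reading $\rho_y$ as a bound on the \emph{unsquared} Bures/Wasserstein distance, i.e.\ the constraint $\mathds B(\cov_y\parallel\covsa_y)\le\rho_y^2$, which is the convention under which the $\ell_2$-regularization coefficient $\sum_y\rho_y$ in Theorem~\ref{thm:bures} is the natural one. So to prove the proposition exactly as written you must adopt that squared-radius reading of the ambiguity set; with it, the Frobenius budget in your argument becomes $\|A-\covsa_y^{1/2}\|_F\le\rho_y$ and your Procrustes proof delivers the claimed expression verbatim. As proposed, your proof establishes the $\sqrt{\rho_y}$ version of the statement, which is the one consistent with the paper's literal definition of $\mathds B$ and its radius constraint.
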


\subsection{Fisher-Rao Surrogate} \label{sec:fr}

The second new variant of a statistically-motivated surrogate is obtained by taking $\varphi$ as the Fisher-Rao divergence.

\begin{definition}[Fisher-Rao distance]\label{def:FR}
Given two positive definite matrices $\cov$, $\covsa \in \PD^d$, the Fisher-Rao distance between them is
$\FR(\cov , \covsa) = \| \log (\covsa^{-\frac{1}{2}} \cov \covsa^{-\frac{1}{2}}) \|_F$,
where $\log(\cdot)$ is the matrix logarithm.
\end{definition}
The Fisher-Rao distance enjoys many desirable properties. In particular, it is invariant to inversion and congruence, \ie, for any $\cov,~\covsa\in \PD^d$ and invertible $A\in \R^{d\times d}$, $\FR(\cov, \covsa) = \FR(\cov^{-1}, \covsa^{-1}) = \FR(A \cov A^\top , A \covsa A^\top) $. Such invariances are statistically meaningful because they imply that the results remain unchanged if we reparametrize the problem with an inverse covariance matrix (instead of the covariance matrix) or if we apply a change of basis to the data space $\mc X$. It is shown that $\FR$ is the unique Riemannian distance (up to scaling) on the cone $\PD^d$ with such invariances~\cite{savage1982space}. Next, we assert the form of the Fisher-Rao surrogate.

\begin{theorem}[Fisher-Rao surrogate] \label{thm:fr}
    Suppose that $\varphi \equiv \FR$. Let $w^{\FR}$ be the solution of the following problem
    \be \label{eq:FR}
        \Min{w \in \mc W}~\sum\nolimits_{y \in \mc Y} \exp\left(\frac{\rho_y}{2}\right) \sqrt{w^\top \covsa_y w}.
    \ee
    Then $\theta^{\FR} = (w^{\FR}, b^{\FR})$ solves the surrogate problem under model shifts~\eqref{eq:dro}, where 
    \begin{align*}
        \kappa^{\FR} &\!=\!\left(\sum_{y \in \mc Y} \exp\left(\frac{\rho_y}{2}\right) \sqrt{(w^{\FR})^\top \covsa_y w^{\FR}}\right)^{-1} \\
        \text{and} \quad b^{\FR} &\!=\!(w^{\FR})^\top \msa_{+1} - \kappa^{\FR} \exp\left( \frac{\rho_{+1}}{2} \right)\sqrt{(w^{\FR})^\top \covsa_{+1} w^{\FR}} = (w^{\FR})^\top \msa_{-1} + \kappa^{\FR} \exp\left( \frac{\rho_{-1}}{2} \right)\sqrt{(w^{\FR})^\top \covsa_{-1} w^{\FR}}.
    \end{align*}
\end{theorem}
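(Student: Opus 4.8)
The plan is to mirror the proof of Theorem~\ref{thm:bures}. By Proposition~\ref{prop:refor} the entire statement follows once the quantity $\tau_y^{\FR}$ defined in~\eqref{eq:tau} is available in closed form: substituting it into the reformulated problem of Proposition~\ref{prop:refor} yields the optimization problem~\eqref{eq:FR}, and substituting it into the formulas for $\kappa^\varphi$ and $b^\varphi$ yields $\kappa^{\FR}$ and $b^{\FR}$. So, in analogy with Proposition~\ref{prop:bures}, the first and only substantive step is to establish
\[
    \tau_y^{\FR}(w) \;=\; \exp\!\left(\frac{\rho_y}{2}\right)\sqrt{w^\top \covsa_y w}, \qquad y \in \mc Y ,
\]
under the standing assumption $\covsa_y \in \PD^d$ (needed for the Fisher--Rao constraint to be well defined; note also that $\FR(\cov_y,\covsa_y) < \infty$ already forces $\cov_y \succ 0$, so the feasibility constraint $\cov_y \in \PSD^d$ in~\eqref{eq:tau} is effectively $\cov_y \in \PD^d$).

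To prove the displayed identity I would invoke the congruence invariance of $\FR$ recorded just after Definition~\ref{def:FR}. Parametrize the feasible matrices as $\cov_y = \covsa_y^{1/2} M \covsa_y^{1/2}$ with $M \in \PD^d$; taking $A = \covsa_y^{-1/2}$ in $\FR(A\cov A^\top, A\covsa A^\top) = \FR(\cov,\covsa)$ gives $\FR(\cov_y,\covsa_y) = \FR(M, I) = \|\log M\|_F$. Writing $v = \covsa_y^{1/2} w$ (which is nonzero since $\covsa_y \succ 0$ and $w \neq 0$ on $\mc W$), the objective becomes $w^\top \cov_y w = v^\top M v$, so
\[
    \tau_y^{\FR}(w)^2 \;=\; \max\big\{\, v^\top M v \;:\; M \in \PD^d,\ \|\log M\|_F \le \rho_y \,\big\}.
\]
The feasible set here is the image under the matrix exponential of the closed Frobenius ball of radius $\rho_y$ in $\Sym^d$, hence compact, so the maximum is attained.

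Next I would bound and then attain this reduced maximum. By the Rayleigh quotient inequality, $v^\top M v \le \lambda_{\max}(M)\,\|v\|_2^2$. Since the eigenvalues of $\log M$ are exactly the numbers $\log\lambda_i(M)$, the value $\log\lambda_{\max}(M)$ is one coordinate of that vector of log-eigenvalues, so $\log\lambda_{\max}(M) \le |\log\lambda_{\max}(M)| \le \|\log M\|_F \le \rho_y$, i.e.\ $\lambda_{\max}(M) \le e^{\rho_y}$; hence $v^\top M v \le e^{\rho_y}\|v\|_2^2$. This is tight: the matrix $M^\star = e^{\rho_y}\,\frac{vv^\top}{\|v\|_2^2} + \big(I - \frac{vv^\top}{\|v\|_2^2}\big)$ lies in $\PD^d$, has $\|\log M^\star\|_F = \rho_y$, and satisfies $v^\top M^\star v = e^{\rho_y}\|v\|_2^2$. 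Therefore $\tau_y^{\FR}(w)^2 = e^{\rho_y}\|v\|_2^2 = e^{\rho_y}\, w^\top \covsa_y w$, proving the claim. Substituting $\tau_y^{\FR}(w) = \exp(\rho_y/2)\sqrt{w^\top \covsa_y w}$ into the reformulated problem of Proposition~\ref{prop:refor} gives exactly~\eqref{eq:FR}, and into the definitions of $\kappa^\varphi$ and $b^\varphi$ gives $\kappa^{\FR}$ together with the two displayed expressions for $b^{\FR}$, which coincide because Proposition~\ref{prop:refor} asserts the $b^\varphi$ formula holds for every $y \in \mc Y$.

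The only part that is more than bookkeeping is this closed-form computation of $\tau_y^{\FR}$, and within it the points requiring care are: the reduction by congruence invariance (which uses $\covsa_y$ nonsingular), the observation that finiteness of the Fisher--Rao distance excludes singular $\cov_y$, the compactness of the feasible set ensuring the supremum is a maximum, and the verification that the rank-one-plus-identity perturbation $M^\star$ is feasible and attains the eigenvalue bound. Once $\tau_y^{\FR}$ is in hand, the remainder of Theorem~\ref{thm:fr} is a direct substitution into Proposition~\ref{prop:refor}, exactly as in the proof of Theorem~\ref{thm:bures}.
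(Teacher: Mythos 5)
Your proposal is correct, and its overall architecture is exactly the paper's: everything reduces, via Proposition~\ref{prop:refor}, to the closed form of $\tau_y^{\FR}$, and your change of variables $\cov_y = \covsa_y^{1/2} M \covsa_y^{1/2}$, $v = \covsa_y^{1/2}w$ is the same transformation the paper uses in the proof of Proposition~\ref{prop:FR}. Where you differ is in how the reduced problem $\max\{v^\top M v : \|\log M\|_F \le \rho_y\}$ is solved. The paper first argues the constraint must be active at any maximizer (via a perturbation $Z^\star + \delta\,vv^\top$, with a geodesic-convexity alternative in a footnote), then passes to an eigendecomposition $M = Q\,\mathrm{Diag}(\lambda)Q^\top$, identifies the optimal $Q$, and optimizes the eigenvalues subject to $\sum_i(\log\lambda_i)^2 = \rho_y^2$. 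You instead get the upper bound in one step from the Rayleigh quotient together with $\log\lambda_{\max}(M) \le \|\log M\|_F \le \rho_y$, and then exhibit the feasible matrix $M^\star = e^{\rho_y}\,vv^\top/\|v\|_2^2 + \bigl(I - vv^\top/\|v\|_2^2\bigr)$ attaining it — which is precisely the paper's maximizer $UU^\top + e^{\rho_y}\,vv^\top/\|v\|_2^2$ in disguise. Your bound-and-attain argument is shorter and more elementary (no activeness argument, no optimization over $Q$ and $\lambda$ needed), at the cost of not characterizing the maximizer beyond exhibiting one; the paper's route yields the structure of the worst-case covariance as a by-product. Your bookkeeping of the technical points (nonsingularity of $\covsa_y$, finiteness of $\FR$ forcing $\cov_y \succ 0$, $v \neq 0$ on $\mc W$, and the two expressions for $b^{\FR}$ agreeing because Proposition~\ref{prop:refor} gives the intercept formula for each $y$) is also sound, so no gaps remain.
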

Theorem~\ref{thm:fr} divulges another foundational connection between robustness and regularization: if we construct the ambiguity sets for the covariance matrices using the Fisher-Rao distance, the resulting optimization problem~\eqref{eq:FR} is a \textit{reweighted} version of the nominal problem~\eqref{eq:mpm-equi}. Each term $(w^\top \covsa_y w)^\half$ is assigned a weight $\exp(\rho_y/2)$, which is proportional to the radius $\rho_y$. This connection aligns with previous observations highlighting the equivalence between reweighting schemes and distributional robustness~\cite{ref:ben2013robust, ref:bayraksan2015data, ref:namkoong2017variance, ref:hashimoto2018fairness}. To prove Theorem~\ref{thm:fr}, we derive an analytical expression of $\tau_y^{\FR}(w)$ in Proposition~\ref{prop:FR}. The proof of Theorem~\ref{thm:fr} follows by combining Proposition~\ref{prop:refor} and Proposition~\ref{prop:FR}.

\begin{proposition}[Fisher-Rao distance] \label{prop:FR}
   If $\varphi \equiv \FR$, then 
    $\tau_y^{\FR}(w) = \exp\left(\frac{\rho_y}{2}\right) (w^\top \covsa w)^\half$ for all $ y \in \mc Y$.
\end{proposition}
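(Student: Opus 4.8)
The goal is to compute $\tau_y^{\FR}(w)$, which by~\eqref{eq:tau} is the value of the maximization
\[
    \Max{\cov_y \in \PD^d:~\FR(\cov_y, \covsa_y) \le \rho_y}~\sqrt{w^\top \cov_y w}.
\]
The plan is to exploit the congruence invariance of the Fisher-Rao distance recorded just after Definition~\ref{def:FR}. First I would perform the change of variables $\cov_y = \covsa_y^{1/2} M \covsa_y^{1/2}$ with $M \in \PD^d$; then $\FR(\cov_y, \covsa_y) = \|\log M\|_F$ and the objective becomes $\sqrt{v^\top M v}$ where $v = \covsa_y^{1/2} w$. So the problem reduces to maximizing $v^\top M v$ over $M \succ 0$ with $\|\log M\|_F \le \rho_y$.

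Next I would diagonalize: write the eigendecomposition $M = U \diag(e^{\lambda_1}, \ldots, e^{\lambda_d}) U^\top$, so that $\|\log M\|_F^2 = \sum_i \lambda_i^2$ and $v^\top M v = \sum_i e^{\lambda_i} (u_i^\top v)^2$ where $u_i$ are the columns of $U$. For the maximization it is clearly optimal to put all of $v$ into the eigenvector carrying the largest $\lambda_i$: that is, choose $U$ so that $u_1 = v/\|v\|_2$, set $\lambda_1 = \rho_y$ (saturating the budget in that single direction), and $\lambda_i = 0$ for $i \ge 2$. This yields $v^\top M v \le e^{\rho_y}\|v\|_2^2$, with equality attained, hence the maximum of $\sqrt{v^\top M v}$ is $\exp(\rho_y/2)\,\|v\|_2 = \exp(\rho_y/2)\sqrt{w^\top \covsa_y w}$, which is exactly the claimed expression. (I note the statement of Proposition~\ref{prop:FR} writes $\covsa$ rather than $\covsa_y$ in the last factor, evidently a typo for $\covsa_y$.) A small point to handle carefully is that $\covsa_y$ must be invertible for the substitution $\cov_y = \covsa_y^{1/2} M \covsa_y^{1/2}$ and for the Fisher-Rao distance to be defined at all; this is consistent with Definition~\ref{def:FR} requiring $\PD^d$, and in the low-rank case one would first apply an inverse-type regularization as in the Quadratic surrogate.

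The main obstacle — though it is more a matter of rigor than of difficulty — is justifying the reduction to a single active eigendirection. Formally one should argue that among maximizers we may take $M$ to commute with the rank-one projector $vv^\top$, or equivalently solve the scalar problem $\max \sum_i e^{\lambda_i} c_i$ subject to $\sum_i \lambda_i^2 \le \rho_y^2$ and $\sum_i c_i = \|v\|_2^2$, $c_i \ge 0$: since $e^{\lambda}$ is increasing, mass $c_i$ should be moved to the coordinate with the largest admissible $\lambda_i$, and with the budget $\sum \lambda_i^2 \le \rho_y^2$ concentrated in one coordinate we get $\lambda = \rho_y$ there. A clean way to present this is via a supporting-hyperplane / Lagrangian argument, or simply by the chain of inequalities $v^\top M v \le \lambda_{\max}(M)\,\|v\|_2^2 = e^{\|\log M\|_{\mathrm{op}}}\|v\|_2^2 \le e^{\|\log M\|_F}\|v\|_2^2 \le e^{\rho_y}\|v\|_2^2$, followed by exhibiting the explicit $M$ that makes every inequality tight. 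Once $\tau_y^{\FR}$ is in hand, Theorem~\ref{thm:fr} is immediate by substituting into Proposition~\ref{prop:refor} and reading off $w^{\FR}$, $\kappa^{\FR}$, $b^{\FR}$.
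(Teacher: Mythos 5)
Your proof is correct and follows essentially the same route as the paper: the identical congruence change of variables reduces the problem to maximizing $v^\top M v$ over $\|\log M\|_F \le \rho_y$ with $v = \covsa_y^{1/2} w$, and the same spectral argument concentrates the log-eigenvalue budget $\rho_y$ on the direction $v/\|v\|_2$ (the paper's maximizer $Z_y^\star = UU^\top + e^{\rho_y}\,vv^\top/\|v\|_2^2$ is exactly your choice of $M$). One cosmetic remark: in your inequality chain, $\lambda_{\max}(M) = e^{\|\log M\|_{\mathrm{op}}}$ should be $\lambda_{\max}(M) \le e^{\|\log M\|_{\mathrm{op}}}$ (they differ when all eigenvalues of $\log M$ are negative), but the bound you need still holds, so the argument is unaffected.
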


\subsection{LogDet Surrogate} \label{sec:logdet}


Last, we consider the case when $\varphi$ is the Log-Determinant (LogDet) divergence, which is a distance-like measure closely related to information theory, see \cite{nielsen2013matrix, yue2020matrix}.

\begin{definition}[LogDet divergence]
    Given two positive definite matrices $\cov$, $\covsa \in \PD^d$, the log-determinant divergence between them is
$\mathds D(\cov \parallel \covsa) = \Tr{\cov \covsa^{-1}} - \log\det (\cov \covsa^{-1}) - d$.
\end{definition}

It can be shown that $\mathds D$ is a divergence because it is non-negative and vanishes to zero if and only if $\cov = \covsa$. However, $\mathds D$ is not symmetric, and in general, we have $\mathds D(\cov \parallel \covsa) \neq \mathds D(\covsa \parallel \cov)$. The LogDet divergence $\mathds D$ is related to the relative entropy: it is equal to the Kullback-Leibler divergence between two Gaussian distributions with the same mean vector and covariance matrices $\cov$ and $\covsa$. We now provide the form of the LogDet surrogate problem. 
\begin{theorem}[LogDet surrogate] \label{thm:logdet}
    Suppose that $\varphi \equiv \mathds D$. Let $w^{\mathds D}$ be the optimal solution of the following second-order cone problem
    \be \label{eq:logdet}
        \Min{w \in \mc W}~\sum_{y \in \mc Y} \sqrt{c_y} \sqrt{w^\top \covsa_y w},
    \ee
    where $c_y = -W_{-1}(-\exp(-\rho_y-1))$ and $W_{-1}$ is the Lambert-W function for the branch $-1$. Then $\theta^{\mathds D} = (w^{\mathds D}, b^{\mathds D})$ solves the surrogate problem under model shifts~\eqref{eq:dro}, where 
    \begin{align*}
        \kappa^{\mathds D} &= \ds \left(\sum_{y \in \mc Y} \sqrt{c_y} \sqrt{(w^{\mathds D})^\top \covsa_y w^{\mathds D}}\right)^{-1} \\
       \text{and} \quad b^{\mathds D} &= (w^{\mathds D})^\top \msa_{+1} - \kappa^{\mathds D} \sqrt{c_{+1}} \sqrt{(w^{\mathds D})^\top \covsa_{+1} w^{\mathds D}} = (w^{\mathds D})^\top \msa_{-1} + \kappa^{\mathds D} \sqrt{c_{-1}} \sqrt{(w^{\mathds D})^\top \covsa_{-1} w^{\mathds D}}.
    \end{align*}
\end{theorem}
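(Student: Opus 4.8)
The plan is to mirror the route used for the Bures and Fisher--Rao surrogates. By Proposition~\ref{prop:refor}, the covariance-robust CVAS problem~\eqref{eq:dro} is solved by $\theta^{\mathds D}=(w^{\mathds D},b^{\mathds D})$ as soon as the function $\tau_y^{\mathds D}$ of~\eqref{eq:tau} is known; concretely, it suffices to prove the LogDet analogue of Propositions~\ref{prop:bures} and~\ref{prop:FR}, namely that $\tau_y^{\mathds D}(w)=\sqrt{c_y}\,\sqrt{w^\top\covsa_y w}$ with $c_y=-W_{-1}(-\exp(-\rho_y-1))$. Granting this identity, Theorem~\ref{thm:logdet} follows by substituting $\tau_y^{\mathds D}$ into the formulas of Proposition~\ref{prop:refor} for $w^\varphi$, $\kappa^\varphi$, and $b^\varphi$, and writing $b^\varphi=(w^\varphi)^\top\msa_y-y\kappa^\varphi\tau_y^\varphi(w^\varphi)$ for both $y=+1$ and $y=-1$ to obtain the two equivalent expressions for $b^{\mathds D}$; in particular, $\min_{w\in\mc W}\sum_y\tau_y^{\mathds D}(w)$ becomes precisely problem~\eqref{eq:logdet}.

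To establish the identity I would compute $\tau_y^{\mathds D}(w)$ from its definition $\tau_y^{\mathds D}(w)=\max\{\sqrt{w^\top\cov_y w}:\ \cov_y\in\PD^d,\ \mathds D(\cov_y\parallel\covsa_y)\le\rho_y\}$, noting first that a singular $\cov_y$ makes $\mathds D$ infinite, so restriction to $\PD^d$ is without loss. The congruence substitution $\cov_y=\covsa_y^{1/2}M\covsa_y^{1/2}$ (valid since $\covsa_y\in\PD^d$) turns the constraint into $\mathrm{Tr}[M]-\log\det M-d\le\rho_y$ and the objective into $v^\top Mv$ with $v=\covsa_y^{1/2}w$, hence $\big(\tau_y^{\mathds D}(w)\big)^2=\max\{v^\top Mv:\ M\succ0,\ \mathrm{Tr}[M]-\log\det M-d\le\rho_y\}$. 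The feasible set here is compact (the constraint keeps every eigenvalue of $M$ bounded away from $0$ and $\infty$) and convex (because $M\mapsto-\log\det M$ is convex), so the maximum of the linear objective $\langle vv^\top,M\rangle$ is attained; averaging any maximizer over the orthogonal group fixing $v$ leaves the objective unchanged and, by Jensen, the constraint satisfied, so one may assume $M=a\,\hat v\hat v^\top+b(I-\hat v\hat v^\top)$ with $\hat v=v/\|v\|$. Then $v^\top Mv=a\|v\|^2$ and the constraint reads $(a-\log a-1)+(d-1)(b-\log b-1)\le\rho_y$; since $b-\log b-1\ge0$ with equality only at $b=1$, the optimal choice is $b=1$, leaving $\big(\tau_y^{\mathds D}(w)\big)^2=\|v\|^2\cdot\max\{a>0:\ a-\log a-1\le\rho_y\}$.

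It remains to solve this one-dimensional problem. The map $a\mapsto a-\log a$ is strictly convex on $(0,\infty)$ with minimum value $1$ at $a=1$, so for $\rho_y\ge0$ the inequality $a-\log a\le\rho_y+1$ describes an interval with endpoints $a_-\le1\le a_+$, and the maximum is $a_+$, attained where $a_+-\log a_+=\rho_y+1$. Rewriting this as $(-a_+)\exp(-a_+)=-\exp(-\rho_y-1)$ and noting that $-\exp(-\rho_y-1)\in[-1/e,0)$ while $a_+\ge1$ forces $-a_+\le-1$, the appropriate branch of the Lambert-$W$ function is $W_{-1}$, which yields $a_+=-W_{-1}(-\exp(-\rho_y-1))=c_y$. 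Hence $\big(\tau_y^{\mathds D}(w)\big)^2=c_y\|v\|^2=c_y\,w^\top\covsa_y w$, which is the claimed identity; as a sanity check, $\rho_y=0$ gives $c_y=1$ and recovers the nominal problem~\eqref{eq:mpm-equi}.

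I expect two points to need the most care. The first is the reduction to a rank-one perturbation of the identity: the symmetrization argument relies on convexity and orthogonal-invariance of $\mathrm{Tr}[M]-\log\det M$ together with compactness of the sublevel set; an alternative is a direct Lagrange/KKT computation, where stationarity gives $M^{-1}=I-\lambda^{-1}vv^\top$ and Sherman--Morrison again forces $M=I+c\,vv^\top$. The second, and genuinely delicate, point is the Lambert-$W$ bookkeeping: verifying that $-\exp(-\rho_y-1)$ lies in the domain $[-1/e,0)$ of $W_{-1}$, that $W_{-1}$ (rather than $W_0$) selects the larger root $a_+\ge1$, and hence that $\sqrt{c_y}\ge1$, so that problem~\eqref{eq:logdet} is a genuine reweighting (inflation) of~\eqref{eq:mpm-equi} with weights increasing in $\rho_y$.
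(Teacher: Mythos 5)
Your proposal is correct, and the overall architecture is identical to the paper's: establish the closed form $\tau_y^{\mathds D}(w)=\sqrt{c_y}\,\sqrt{w^\top\covsa_y w}$ (the content of Proposition~\ref{prop:logdet}) and then invoke Proposition~\ref{prop:refor} to read off $w^{\mathds D}$, $\kappa^{\mathds D}$, and the two equivalent expressions for $b^{\mathds D}$. Where you diverge is in how you prove the key identity. The paper starts from a dual reformulation imported from an external reference (a variational formula expressing $\tau_y^{\mathds D}(w)^2$ as an infimum over a scalar dual variable $\gamma$ with constraint $\gamma\covsa_y^{-1}\succ ww^\top$), applies the matrix determinant lemma to collapse it to a one-dimensional convex minimization, and solves the first-order condition with the Lambert-$W$ function. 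You instead attack the primal maximization directly: the congruence substitution $\cov_y=\covsa_y^{1/2}M\covsa_y^{1/2}$, compactness and convexity of the sublevel set of $\Tr{M}-\log\det M - d$, a symmetrization over the stabilizer of $v=\covsa_y^{1/2}w$ to reduce to $M=a\,\hat v\hat v^\top+b(I-\hat v\hat v^\top)$, the observation that $b=1$ is optimal, and the same scalar equation $a-\log a=\rho_y+1$ solved on the branch $W_{-1}$. This is essentially the same technique the paper itself uses for the Fisher--Rao case, so your route has the virtue of being self-contained (no strong-duality result needed from elsewhere) and of exhibiting the worst-case covariance explicitly as a rank-one inflation of $\covsa_y$ along $\covsa_y^{1/2}w$, which makes the geometry of the robustification transparent; the paper's dual route is shorter once the cited proposition is granted and sidesteps the attainment/symmetrization bookkeeping. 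Your Lambert-$W$ branch analysis ($-e^{-\rho_y-1}\in[-1/e,0)$, larger root $a_+\ge 1$ corresponds to $W_{-1}$, hence $c_y\ge 1$ and $c_y=1$ at $\rho_y=0$) is exactly right. One small simplification you might note: the symmetrization step can be avoided entirely by bounding $v^\top Mv\le\lambda_{\max}(M)\,\|v\|_2^2$ and observing that the constraint forces $\lambda_{\max}(M)-\log\lambda_{\max}(M)-1\le\rho_y$ (all other eigenvalue contributions being nonnegative), with equality attained by $M=I+(a_+-1)\hat v\hat v^\top$; this mirrors the upper-bound-then-attainment argument in the paper's Fisher--Rao proof and removes the need to characterize matrices invariant under the stabilizer of $v$.
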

Theorem~\ref{thm:logdet} shows that the LogDet divergence induces a similar reweighting scheme as the Fisher-Rao MPM. The proof of Theorem~\ref{thm:logdet} follows by combining Proposition~\ref{prop:logdet} below, which provides the analytical form of $\tau_y^{\mathds D}(w)$, with Proposition~\ref{prop:refor}.
\begin{proposition}[LogDet divergence] \label{prop:logdet}
    Suppose that $\varphi \equiv \mathds D$, then for any $y \in \mc Y$, we have
    \[
        \tau_y^{\mathds D}(w) = \sqrt{-W_{-1}(-\exp(-\rho_y-1))} \sqrt{w^\top \covsa_y w},
    \]
    where $W_{-1}$ is the Lambert-W function for the branch $-1$.
\end{proposition}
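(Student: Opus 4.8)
The goal is to prove Proposition~\ref{prop:logdet}: for $\varphi \equiv \mathds D$, the worst-case second-moment quantity
\[
    \tau_y^{\mathds D}(w) = \Max{\cov_y \in \PSD^d:\, \mathds D(\cov_y \parallel \covsa_y) \le \rho_y} \sqrt{w^\top \cov_y w}
\]
equals $\sqrt{-W_{-1}(-\exp(-\rho_y-1))}\,\sqrt{w^\top \covsa_y w}$. The plan is to reduce the matrix optimization to a scalar one by a change of variables that diagonalizes the constraint, then solve the resulting one-dimensional problem using Lagrangian duality, and finally recognize the optimality condition as the defining equation of the Lambert-$W$ function on the branch $-1$.

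First I would fix $y$, drop the subscript, and write $\covsa = \covsa_y \succ 0$, $\rho = \rho_y$. Since squaring is monotone on nonnegatives, it suffices to maximize $w^\top \cov w$. I would perform the congruence substitution $M = \covsa^{-1/2} \cov \covsa^{-1/2}$, so that $\cov = \covsa^{1/2} M \covsa^{1/2}$ ranges over $\PSD^d$ (indeed $\PD^d$, since the LogDet divergence forces $\cov \succ 0$) as $M$ does. Under this substitution the constraint becomes $\Trace[M] - \log\det M - d \le \rho$ because $\cov\covsa^{-1} = \covsa^{1/2} M \covsa^{-1/2}$ is similar to $M$ and thus has the same trace and determinant. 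Setting $v = \covsa^{1/2} w$, the objective becomes $v^\top M v$. So the problem is $\max\{ v^\top M v : M \succ 0,\ \Trace[M] - \log\det M - d \le \rho\}$.

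Next I would argue that the optimal $M$ can be taken to have a very restricted form. Write the eigendecomposition $M = \sum_i \lambda_i u_i u_i^\top$. The constraint depends only on the eigenvalues through $\sum_i (\lambda_i - \log\lambda_i) - d \le \rho$, while the objective $v^\top M v = \sum_i \lambda_i (u_i^\top v)^2$ is maximized, for fixed eigenvalues, by aligning the eigenvector with the largest eigenvalue to $v/\|v\|$. Moreover, since each summand $\lambda_i - \log \lambda_i$ is minimized at $\lambda_i = 1$ with value $1$, setting all eigenvalues in directions orthogonal to $v$ equal to $1$ consumes no extra budget, so all the budget should be spent inflating the single eigenvalue $\lambda$ in the direction of $v$. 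This collapses the problem to the scalar program $\max\{ \lambda \|v\|^2 : \lambda > 0,\ \lambda - \log\lambda - 1 \le \rho\}$, i.e. $\tau^{\mathds D}(w)^2 = \lambda^\star \|v\|^2 = \lambda^\star\, w^\top \covsa w$ where $\lambda^\star$ is the largest root of $\lambda - \log\lambda - 1 = \rho$. Since $h(\lambda) = \lambda - \log\lambda - 1$ is convex with minimum $0$ at $\lambda = 1$, for $\rho > 0$ there are exactly two roots, one in $(0,1)$ and one in $(1,\infty)$; the maximum picks the larger. I would then solve $\lambda - \log\lambda = 1 + \rho$ in closed form: exponentiating gives $e^\lambda / \lambda = e^{1+\rho}$, hence $\lambda e^{-\lambda} = e^{-(1+\rho)}$, hence $(-\lambda) e^{-\lambda} = -e^{-(1+\rho)}$, which by definition of the Lambert-$W$ function means $-\lambda = W(-e^{-(1+\rho)})$. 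The argument $-e^{-(1+\rho)} \in (-1/e, 0)$, so there are two real branches $W_0$ and $W_{-1}$; the branch $W_0$ yields $-\lambda \in (-1,0)$, i.e. the small root $\lambda \in (0,1)$, whereas $W_{-1}$ yields $-\lambda \le -1$, i.e. the large root $\lambda \ge 1$ that we want. Thus $\lambda^\star = -W_{-1}(-e^{-\rho-1})$, and taking square roots gives the claimed formula.

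The main obstacle, and the only step requiring genuine care rather than routine bookkeeping, is the reduction from the matrix problem to the scalar one — specifically, justifying rigorously that it is optimal to concentrate all the divergence budget on a single eigenvalue aligned with $v$ and leave the remaining $d-1$ eigenvalues equal to $1$. I would make this airtight by a two-part argument: (a) for any feasible $M$, replacing it by $\diag$ in the eigenbasis of $M$ reordered so the top eigenvalue aligns with $v$ only increases the objective without changing feasibility (a standard Rayleigh-quotient / rearrangement argument); (b) given eigenvalues $\lambda_1 \ge \cdots \ge \lambda_d$ with $\lambda_1$ in direction $v$, the map $(\lambda_2,\dots,\lambda_d) \mapsto 1$ strictly slackens the constraint (since each $\lambda_i - \log\lambda_i \ge 1$) while leaving the objective fixed, so we may then push $\lambda_1$ up until the constraint is tight. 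A minor secondary point is confirming the constraint qualification / that the supremum is attained (it is, by continuity and compactness of the sublevel set $\{\lambda : h(\lambda) \le \rho\}$ in $\lambda$), and correctly selecting the $W_{-1}$ branch; both are handled by the monotonicity/convexity remarks above.
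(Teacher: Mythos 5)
Your proof is correct, but it follows a genuinely different route from the paper's. The paper invokes a dual reformulation from an external reference (Proposition~3.4 of the cited work of Le et al.), which expresses $\tau_y^{\mathds D}(w)^2$ as $\inf_{\gamma>0,\ \gamma\covsa_y^{-1}\succ ww^\top}\ \gamma\rho_y-\gamma\log\det\bigl(I-\covsa_y^{1/2}ww^\top\covsa_y^{1/2}/\gamma\bigr)$, simplifies the determinant via the rank-one determinant identity, and then solves the resulting one-dimensional convex problem in $\gamma$; the Lambert-$W_{-1}$ function appears when solving the first-order condition for $\gamma\opt$. You instead attack the primal maximization directly: the congruence substitution $M=\covsa_y^{-1/2}\cov_y\covsa_y^{-1/2}$, $v=\covsa_y^{1/2}w$ turns the constraint into $\Tr{M}-\log\det M-d\le\rho_y$, and a spectral argument (align the top eigenvector with $v$, set the remaining eigenvalues to $1$ since $\lambda-\log\lambda\ge 1$) collapses the problem to maximizing $\lambda\|v\|_2^2$ over $\lambda-\log\lambda-1\le\rho_y$, whose larger root is $-W_{-1}(-e^{-\rho_y-1})$. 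This is essentially the same technique the paper uses for the Fisher--Rao case (Proposition~\ref{prop:FR}), transplanted to the LogDet ball. What each buys: your argument is self-contained, exhibits the worst-case covariance explicitly ($\cov_y\opt=\covsa_y^{1/2}\bigl(UU^\top+\lambda^\star vv^\top/\|v\|_2^2\bigr)\covsa_y^{1/2}$), and sidesteps any appeal to strong duality; the paper's argument is shorter given the citation, stays within a convex scalar minimization where monotonicity of the derivative settles optimality immediately, and produces the dual optimizer $\gamma\opt$. The reduction step you flag as the delicate one is indeed the crux, and your two-part justification (rotation at fixed spectrum, then resetting the orthogonal eigenvalues to $1$ and tightening the constraint in $\lambda_1$) is sound, as is the branch selection $\lambda^\star\ge 1\Rightarrow W_{-1}$; both results agree, since substituting the paper's $\gamma\opt$ back into its dual objective yields exactly $-W_{-1}(-e^{-\rho_y-1})\,w^\top\covsa_y w$.
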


\section{Asymptotic Surrogates and Recourse Robustness} \label{sec:asymptotic}

The preceding section shows that employing various divergences to prescribe the ambiguity sets leads to different regularizations of the vanilla CVAS. Yet, a fundamental question now arises: Are all these regularizations helpful for generating robust recourses under black-box model shifts? Unfortunately, the reformulations presented in the theorems of the preceding section do not provide any closed-form expression of the surrogates. This poses a critical challenge in comparing the robustness properties of these surrogates for recourse generation. Despite these obstacles, we manage to study the asymptotic surrogates obtained by inflating the ambiguity radii to infinity. Such an asymptotic analysis provides valuable insights into the impact of the covariance-robust surrogates on the recourse generation phase and guides the surrogate selection to promote robust recourse generation. The formal statement is presented in the following result.

\begin{proposition}[Asymptotic surrogates] \label{prop:asymptotic}
    Fix $y \in \mc Y$, let $-y$ be its opposite class, and let $a \Let \sum_{y' \in \mc Y} y' \msa_{y'}$. Suppose that $\rho_{-y}$ remains constant, then as $\rho_y \to \infty$, the optimal solution $\theta^\varphi_{\rho_y} = (w_{\rho_y}^\varphi, b_{\rho_y}^\varphi)$ of problem~\eqref{eq:dro}, parametrized by $\rho_y$, can be expressed as follows. 
    \begin{enumerate}[label=(\roman*)]
        \item \label{prop:asymptotic-1} If $\varphi$ is the Quadratic or Bures distance, then
        \[
            w_{\rho_y}^\varphi \to w_{\infty, y}^\varphi \Let \frac{a}{\| a\|_2^2} \quad \text{and} \quad b_{\rho_y}^\varphi \to b_{\infty, y}^\varphi \Let (w_{\infty, y}^\varphi)^\top \msa_{y} - y.
        \]
        \item \label{prop:asymptotic-2} If $\varphi$ is the Fisher-Rao or LogDet distance, then
        \[
            w_{\rho_y}^\varphi \to w_{\infty, y}^\varphi \Let \frac{\covsa_y^{-1} a}{a^\top \covsa_y^{-1} a} \quad \text{and} \quad b_{\rho_y}^\varphi \to b_{\infty, y}^\varphi \Let (w_{\infty, y}^\varphi)^\top \msa_{y} - y.
        \]
    \end{enumerate}
\end{proposition}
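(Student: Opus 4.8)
The plan is to leverage the reformulations of the covariance-robust CVAS given in Theorems~\ref{thm:quadratic}, \ref{thm:bures}, \ref{thm:fr}, and \ref{thm:logdet}, and analyze the limiting behavior of the corresponding second-order cone problems as $\rho_y \to \infty$. In all four cases, the optimization over $w \in \mc W$ has the generic form $\Min{w \in \mc W} \big( g_y(\rho_y)\sqrt{w^\top A_y w} + g_{-y}(\rho_{-y}) \sqrt{w^\top A_{-y} w} \big)$, where $A_{y}$ is either $\covsa_{y} + \sqrt{\rho_y} I$ (Quadratic), or $\covsa_y$ with an additive $\ell_2$ term (Bures), or a scalar multiple of $\covsa_y$ (Fisher-Rao, LogDet), and where $g_y(\rho_y) \to \infty$. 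The key observation is that as the coefficient on the $y$-term blows up, any minimizing $w$ must drive $\sqrt{w^\top A_y w} \to 0$ relative to the other term, subject to the normalization constraint $w \in \mc W$, i.e.\ $\sum_{y'} y' w^\top \msa_{y'} = w^\top a = 1$ with $a = \msa_{+1} - \msa_{-1}$.

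First I would treat the Fisher-Rao and LogDet cases together, since there the $y$-term is simply $\exp(\rho_y/2)\sqrt{w^\top \covsa_y w}$ (resp.\ $\sqrt{c_y}\sqrt{w^\top\covsa_y w}$ with $c_y \to \infty$). Dividing the objective by the divergent coefficient $g_y(\rho_y)$, the limiting problem becomes $\Min{w \in \mc W} \sqrt{w^\top \covsa_y w}$, i.e.\ minimize the $\covsa_y$-weighted norm of $w$ subject to $w^\top a = 1$. This is a standard equality-constrained quadratic minimization whose solution, by a Lagrange multiplier computation, is $w_{\infty,y}^\varphi = \covsa_y^{-1} a / (a^\top \covsa_y^{-1} a)$; one then checks this is indeed the limit of the minimizers (not merely that the limiting problem has this solution) via a standard epiconvergence / uniform-convergence-on-compacta argument, using that the feasible set can be restricted to a compact set without loss and that the objectives converge uniformly after rescaling. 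The intercept formula $b_{\infty,y}^\varphi = (w_{\infty,y}^\varphi)^\top \msa_y - y$ then follows by passing to the limit in the expression for $b^\varphi$ in Proposition~\ref{prop:refor}, noting that the term $y\kappa^\varphi \tau_y^\varphi(w^\varphi)$ converges to $y$ (because $\kappa^\varphi$ is the reciprocal of the sum $\sum_{y'}\tau_{y'}^\varphi(w^\varphi)$, which is asymptotically dominated by the $y$-term $\tau_y^\varphi(w^\varphi) = g_y(\rho_y)\sqrt{(w^\varphi)^\top\covsa_y w^\varphi}$).

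For the Quadratic and Bures cases, the $y$-term is $\sqrt{w^\top(\covsa_y + \sqrt{\rho_y}I)w}$ (resp.\ $\sqrt{w^\top\covsa_y w} + \rho_y\|w\|_2$), and after dividing by the divergent factor ($\rho_y^{1/4}$ resp.\ $\rho_y$), the limiting $y$-term is $\|w\|_2$ in both cases. The limiting problem is thus $\Min{w\in\mc W}\|w\|_2$, whose solution is the minimum-norm point on the hyperplane $w^\top a = 1$, namely $w_{\infty,y}^\varphi = a/\|a\|_2^2$; the intercept limit follows as before. The main obstacle I anticipate is the rigorous justification that the sequence of minimizers $w_{\rho_y}^\varphi$ actually converges to the minimizer of the limiting problem --- this requires care because the objective is only being rescaled by a $\rho_y$-dependent factor and one must show the $-y$-term (which stays bounded) does not perturb the limit, plus one must confirm the limiting minimizer is unique (which it is, since $\|\cdot\|_2$ and $\sqrt{(\cdot)^\top\covsa_y(\cdot)}$ are strictly convex on the hyperplane) so that the whole sequence, not just a subsequence, converges. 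A clean way to handle this uniformly across all four cases is to write the rescaled objective as $h_y(w) + \epsilon(\rho_y) r_y(w)$ where $h_y$ is the limiting strictly convex objective, $r_y$ is continuous, and $\epsilon(\rho_y)\to 0$, restrict to a compact neighborhood of the unique minimizer, and invoke stability of minimizers of strictly convex functions under vanishing continuous perturbations.
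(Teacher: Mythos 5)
Your proposal is correct and follows essentially the same route as the paper: rescale the reformulated second-order cone objectives from Theorems~\ref{thm:quadratic}--\ref{thm:logdet} by the divergent factor, identify the limiting problems $\min_{w \in \mc W} \|w\|_2$ (Quadratic/Bures) and $\min_{w \in \mc W} \sqrt{w^\top \covsa_y w}$ (Fisher-Rao/LogDet), pass the minimizers to the limit, and obtain the intercept from Proposition~\ref{prop:refor} via $\kappa^\varphi \tau_y^\varphi(w^\varphi) \to 1$. The only difference is cosmetic: the paper justifies convergence of the minimizers by compactifying $\mc W$ and invoking Berge's maximum theorem, whereas you use an equivalent epi-convergence/strict-convexity stability argument for the same step.
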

In all cases, the intercept $b_{\rho_y}^\varphi$ tends towards $b_{\infty, y}^\varphi = (w_{\infty, y}^\varphi)^\top \msa_{y} - y$; hence, the asymptotic hyperplane defined by ($w_{\infty, y}^\varphi, b_{\infty, y}^\varphi)$ is then characterized by the linear equation $(w_{\infty, y}^\varphi)^\top (x - \msa_{y}) + y = 0$. This equation identifies a hyperplane passing through the mean vector $\msa_{-y}$ because the slope satisfies the constraint $\sum_{y \in \mathcal{Y}} y w^\top \msa_y = 1$.

Proposition~\ref{prop:asymptotic}\ref{prop:asymptotic-1} shows that the Quadratic surrogate and the Bures surrogate are asymptotically equivalent even though they induce different regularizations of the vanilla surrogate~\eqref{eq:mpm-equi}. Moreover, for these two divergences, the asymptotic slope depends only on the aggregated quantity $\sum\nolimits_{y' \in \mc Y} y' \msa_{y'}$, but not on the specification of $y$. On the contrary, the asymptotic hyperplane of the Fisher-Rao and LogDet surrogate depends explicitly on the covariance matrix $\covsa_y$. 

Because there is no access to the analytical form of the surrogate, it is impossible to compare the worst-case perturbations of $\covsa_y$ across different divergences. To overcome this difficulty, we benchmark the coverage-validity trade-off using the \textit{nominal} values $\covsa_y$. The next result asserts the explicit trade-off between coverage and validity when calibrating the Fisher-Rao and LogDet radii.

\begin{proposition}[Coverage-validity trade-off]\label{prop:tradeoff}
Let $\rho = (\rho_{+1}, \rho_{-1})$ and $\rho' = (\rho'_{+1}, \rho'_{-1})$ be two sets of radii for the uncertainty sets, and $\theta_\rho = (w_\rho, b_\rho)$ and $\theta_{\rho'} = (w_{\rho'}, b_{\rho'})$ be the respective optimal hyperplanes obtained by solving the Fisher-Rao surrogate problem~\eqref{eq:FR} or the LogDet surrogate problem~\eqref{eq:logdet}. 
\begin{enumerate}[label=(\roman*), leftmargin = 5mm]
    \item\label{prop:tradeoff-i} If $\rho_{+1} = \rho_{+1}' = 0$ and $\rho_{-1} > \rho_{-1}' \ge 0$, then
    \[
    \mathrm{Co}_{\covsa_{+1}} (\theta_{\rho}) < \mathrm{Co}_{\covsa_{+1}}(\theta_{\rho'}) \quad \text{and} \quad \mathrm{Va}_{\covsa_{-1}} (\theta_{\rho}) > \mathrm{Va}_{\covsa_{-1}}(\theta_{\rho'}).
    \]

    \item\label{prop:tradeoff-ii} If $\rho_{-1} = \rho_{-1}' = 0$ and $\rho_{+1} > \rho_{+1}' \ge 0$, then
    \[
    \mathrm{Co}_{\covsa_{+1}} (\theta_{\rho}) > \mathrm{Co}_{\covsa_{+1}}(\theta_{\rho'}) \quad \text{and} \quad \mathrm{Va}_{\covsa_{-1}} (\theta_{\rho}) < \mathrm{Va}_{\covsa_{-1}}(\theta_{\rho'}).
    \]
\end{enumerate}
\end{proposition}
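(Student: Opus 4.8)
The plan is to treat the Fisher-Rao surrogate~\eqref{eq:FR} and the LogDet surrogate~\eqref{eq:logdet} simultaneously, exploiting that both are instances of the same template $\Min{w\in\mc W}~\lambda_{+1}\sqrt{w^\top\covsa_{+1}w}+\lambda_{-1}\sqrt{w^\top\covsa_{-1}w}$ with strictly positive, strictly-$\rho_y$-increasing weights $\lambda_y=\lambda_y(\rho_y)$ that equal $1$ at $\rho_y=0$: for Fisher-Rao $\lambda_y(\rho_y)=\exp(\rho_y/2)$, and for LogDet $\lambda_y(\rho_y)=\sqrt{c_y}=\sqrt{-W_{-1}(-\exp(-\rho_y-1))}$. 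Monotonicity and the normalization at $\rho_y=0$ are immediate for $\exp(\rho_y/2)$; for $\sqrt{c_y}$ they follow from the fact that $W_{-1}$ is strictly decreasing on $[-1/e,0)$ with $W_{-1}(-1/e)=-1$. Throughout, one uses that $\covsa_{+1},\covsa_{-1}\in\PD^d$ (needed anyway for~\eqref{eq:coverage}--\eqref{eq:validity} and for Definition~\ref{def:FR} and the LogDet divergence), so that $\sqrt{w^\top\covsa_y w}>0$ whenever $w\neq0$.

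The first step is to derive clean closed forms for the coverage and validity of the optimal surrogate $\theta_\rho=(w_\rho,b_\rho)$. Since the normalizing constant $\kappa_\rho$ in Theorems~\ref{thm:fr} and~\ref{thm:logdet} is positive, the displayed formulas for $b_\rho$ there give $(w_\rho)^\top\msa_{+1}-b_\rho=\kappa_\rho\lambda_{+1}\sqrt{(w_\rho)^\top\covsa_{+1}w_\rho}>0$ and $b_\rho-(w_\rho)^\top\msa_{-1}=\kappa_\rho\lambda_{-1}\sqrt{(w_\rho)^\top\covsa_{-1}w_\rho}>0$; hence both mean vectors are correctly classified (case~1 of Figure~\ref{fig:cases}), and by Lemma~\ref{lemma:maha-proj} the infima in~\eqref{eq:coverage}--\eqref{eq:validity} are attained on the boundary hyperplane $\mathbb H_{\theta_\rho}$, so that $\mathrm{Co}_{\covsa_{+1}}(\theta_\rho)=\kappa_\rho\lambda_{+1}$ and $\mathrm{Va}_{\covsa_{-1}}(\theta_\rho)=\kappa_\rho\lambda_{-1}$. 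Rescaling the objective does not change the minimizer over $\mc W$, so $(w_\rho,\kappa_\rho)$ depends on $(\lambda_{+1},\lambda_{-1})$ only through the ratio $r\Let\lambda_{-1}/\lambda_{+1}$. Introducing $g(r)\Let\Min{w\in\mc W}~\sqrt{w^\top\covsa_{+1}w}+r\sqrt{w^\top\covsa_{-1}w}$ — a well-defined function, with the minimum attained since the objective is coercive on the affine set $\mc W=\{w:w^\top(\msa_{+1}-\msa_{-1})=1\}$, which is nonempty (as $\msa_{+1}\neq\msa_{-1}$) and excludes $0$ — one obtains $\mathrm{Co}_{\covsa_{+1}}(\theta_\rho)=1/g(r)$ and $\mathrm{Va}_{\covsa_{-1}}(\theta_\rho)=r/g(r)$, expressions that do not depend on the choice of minimizer.

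The second step is to prove that $g$ and $r\mapsto r/g(r)$ are both strictly increasing on $(0,\infty)$. For $g$: given $r_1<r_2$ and any minimizer $w_2$ attaining $g(r_2)$, one has $g(r_2)=\sqrt{w_2^\top\covsa_{+1}w_2}+r_2\sqrt{w_2^\top\covsa_{-1}w_2}>\sqrt{w_2^\top\covsa_{+1}w_2}+r_1\sqrt{w_2^\top\covsa_{-1}w_2}\ge g(r_1)$, where strictness uses $w_2\neq0$ and $\covsa_{-1}\succ0$. For $r/g(r)$, the trick is to rewrite $g(r)/r=\Min{w\in\mc W}~\tfrac1r\sqrt{w^\top\covsa_{+1}w}+\sqrt{w^\top\covsa_{-1}w}$ and run the identical envelope argument with $1/r$ decreasing and $\covsa_{+1}\succ0$, obtaining $g(r_2)/r_2<g(r_1)/r_1$, i.e., $r/g(r)$ is strictly increasing. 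With these two facts, the proposition follows by a short case check: in case~\ref{prop:tradeoff-i}, $\rho_{+1}=\rho_{+1}'=0$ forces $\lambda_{+1}=\lambda_{+1}'=1$ while $\rho_{-1}>\rho_{-1}'$ forces $\lambda_{-1}>\lambda_{-1}'$, so $r>r'$ and the monotonicity of $g$ and $r/g(r)$ yields $\mathrm{Co}_{\covsa_{+1}}(\theta_\rho)=1/g(r)<1/g(r')=\mathrm{Co}_{\covsa_{+1}}(\theta_{\rho'})$ and $\mathrm{Va}_{\covsa_{-1}}(\theta_\rho)=r/g(r)>r'/g(r')=\mathrm{Va}_{\covsa_{-1}}(\theta_{\rho'})$; case~\ref{prop:tradeoff-ii} is the mirror image with $\lambda_{-1}=\lambda_{-1}'=1$, $\lambda_{+1}>\lambda_{+1}'$, hence $r<r'$ and all inequalities flip.

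The step I expect to require the most care is the monotonicity of $r\mapsto r/g(r)$ in the second step. It is intuitively clear that increasing the negative-class robustness budget should raise the validity, but since $g$ is only a value function with a possibly non-unique (hence non-smooth) minimizer, a direct differentiation argument is awkward; the clean route is the re-expression $g(r)/r=\Min{w}~\tfrac1r\sqrt{w^\top\covsa_{+1}w}+\sqrt{w^\top\covsa_{-1}w}$, which reduces it to exactly the same one-line inequality as for $g$ itself. Everything else — the reduction to $g(r)$ in the first step and the case analysis in the third — is bookkeeping with the formulas already supplied by Theorems~\ref{thm:fr}, \ref{thm:logdet} and Lemma~\ref{lemma:maha-proj}.
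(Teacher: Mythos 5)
Your proposal is correct and follows essentially the same route as the paper's proof: both rest on the closed forms $\mathrm{Co}_{\covsa_{+1}}(\theta_\rho)=\kappa_\rho\, c(\rho_{+1})$ and $\mathrm{Va}_{\covsa_{-1}}(\theta_\rho)=\kappa_\rho\, c(\rho_{-1})$ obtained from Lemma~\ref{lemma:maha-proj} together with Theorems~\ref{thm:fr} and~\ref{thm:logdet}, followed by the same exchange-of-minimizer (optimality) inequality that the paper uses first to compare $\kappa_\rho$ with $\kappa_{\rho'}$ and then $c(\rho_{-1})\kappa_\rho$ with $c(\rho'_{-1})\kappa_{\rho'}$; your ratio parametrization $r=\lambda_{-1}/\lambda_{+1}$ with the value function $g(r)$ simply packages those two chains as monotonicity of $1/g(r)$ and $r/g(r)$, so that cases (i) and (ii) follow at once. (One passing claim — that $\kappa_\rho$ itself depends on the weights only through the ratio — is inaccurate, since $\kappa_\rho=(\lambda_{+1}g(r))^{-1}$, but the formulas you actually use, $\mathrm{Co}=1/g(r)$ and $\mathrm{Va}=r/g(r)$, are correct, so nothing breaks.)
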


Proposition~\ref{prop:tradeoff} suggests that for the Fisher-Rao and LogDet surrogates, if we fix the radius of one of the two uncertainty sets to be zero, we can control the trade-off between coverage and validity by adjusting the radius of the other uncertainty set. Proposition~\ref{prop:tradeoff}\ref{prop:tradeoff-i} asserts that by ignoring the uncertainty in the positive clusters by setting $\rho_{+1} = 0$, then increasing the $\rho_{-1}$ increases the validity but decreases the coverage. Furthermore, by Proposition~\ref{prop:tradeoff}\ref{prop:tradeoff-ii}, if we neglect the covariance robustness of the validity by setting $\rho_{-1} = 0$, then increasing $\rho_{+1}$ also increases the coverage but decreases the validity. It is worth noting that the result does not hold universally for Quadratic and Bures divergences. To illustrate this, we consider the following counterexample.

\begin{counterexample} \label{counter}
Consider the following nominal mean vectors and covariance matrices:
\[
    \msa_{-1} = \begin{pmatrix}
        0 \\ 0
    \end{pmatrix},~\covsa_{-1} = \begin{bmatrix}
        5 & 2 \\
        2 & 1 
    \end{bmatrix} \quad \text{and} \quad \msa_{+1} = \begin{pmatrix}
        -10 \\
        0 \\ 
    \end{pmatrix},~\covsa_{+1} = \begin{bmatrix}
        5 & 2 \\
        2 & 1 
    \end{bmatrix}.
\]
The nominal surrogate obtained by Lemma~\ref{lemma:optimal} with $\rho_{+1}' = \rho_{-1}' = 0$ is given by $x_1 - 2 x_2 + 5 = 0$, which achieves $\mathrm{Va}_{\covsa_{-1}}(\theta_{\rho'}) = 5$. 
Fixing $\rho_{+1} = 0$ and letting $\rho_{-1} \to \infty$, Propositions~\ref{prop:asymptotic} asserts that both Quadratic and Bures surrogates converge to the hyperplane prescribed by $x_1 + 10 = 0$, which attains $\mathrm{Va}_{\covsa_{-1}}(\theta_{\rho}) = \frac{10}{\sqrt{5}}$. Therefore, $\mathrm{Va}_{\covsa_{-1}}(\theta_{\rho}) < \mathrm{Va}_{\covsa_{-1}}(\theta_{\rho'})$ even though $\rho_{-1} > \rho_{-1}'$, meaning that the Quadratic and Bures surrogates violate the validity inequality.

Meanwhile, if we use the Fisher-Rao surrogate, the optimal surrogate when $\rho_{-1} \to \infty$ is characterized by the hyperplane $x_1 - 2x_2 + 10 = 0$. This surrogate achieves $\mathrm{Va}_{\covsa_{-1}}(\theta_{\rho}) = 10$, consistent with Proposition~\ref{prop:tradeoff}. Figure~\ref{fig:asymptotic} illustrates and compares the asymptotic hyperplanes of three surrogates.
\end{counterexample}

\begin{figure}
    \centering
    \includegraphics[width=0.55\linewidth]{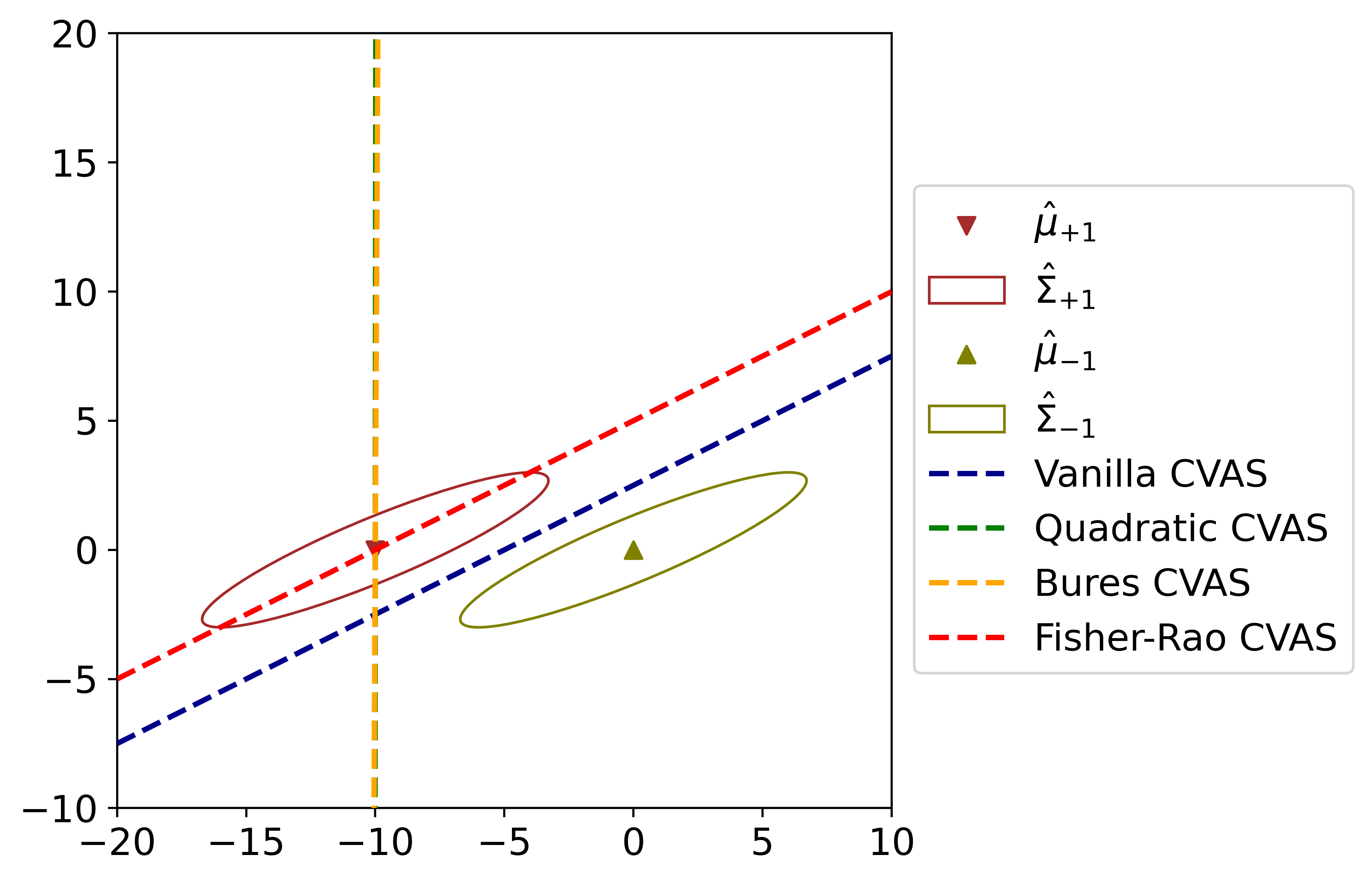}
    \caption{Comparison of the asymptotic hyperplanes of Quadratic, Bures, and Fisher-Rao surrogates as $\rho_{-1} \to \infty$.}
    \label{fig:asymptotic}
\end{figure}

Let us return to the graphical intuition presented in Figure~\ref{fig:robustify} to assess the impact of the surrogate on the recourse generation. To promote the robustness of the recourse, we could push the surrogate boundary to increase the distance from the (synthesized) negative samples to the boundary, and this is tantamount to increasing the validity $\mathrm{Va}_{\covsa_{-1}}$ defined in~\eqref{eq:validity}. Indeed, the validity measure is motivated by the precision metric in classification: it quantifies, over all samples predicted positive by the surrogate, how many of them are predicted positive by the black-box model. Hence, increasing the validity of a surrogate and employing the surrogate to guide the recourse generation can lead to an improvement in robustness. Blending Proposition~\ref{prop:tradeoff} and Counterexample~\ref{counter}, we observe that only the Fisher-Rao and the LogDet regularizations guarantee that the validity $\mathrm{Va}_{\covsa_{-1}}$ increases when we increase the ambiguity radius of the negative samples $\rho_{-1}$. One can, following the graphical intuition previously presented, expect that the Fisher-Rao and the LogDet surrogate can consistently promote robustness in the recourse generation phase. 
However, good things come at a price: increasing the validity $\mathrm{Va}_{\covsa_{-1}}$, or the distance from negative samples to the surrogate, means that the cost to implement the recourse will also increase. We will explore all these trade-offs empirically in the next section.

\section{Numerical Experiments} \label{sec:numerical}

The goals of the experiments are two-fold: First, we investigate the sensitivity and fidelity of our CVASes compared with LIME~\cite{ref:ribeiro2016why}, a widely-used surrogate in recourse literature. Second, we empirically demonstrate that CVAS can be integrated into the recourse generation problem to promote robustness. More extensive experiments, including comparisons with RBR~\cite{ref:nguyen2022robust} and DiRRAc~\cite{ref:nguyen2023distributionally}, are provided in Appendix~\ref{sec:app:exp:add}.

We provide information regarding the architecture we use for the black-box model, the dataset, and the data processing.

\textbf{Classifier.} To construct the black-box classifier, we use a three-layer MLP with 20, 50, and 20 nodes and ReLU activation in each consecutive layer. We use a sigmoid function in the last layer to produce probabilities. We use the binary cross-entropy to train this classifier, solved using the Adam optimizer and 1000 epochs.

\textbf{Dataset.} We evaluate our framework using popular real-world datasets for algorithmic recourse: \textit{German Credit} \cite{ref:dua2017uci, ref:groemping2019south}, \textit{Small Bussiness Administration (SBA)} \cite{ref:li2018should}, and \textit{Student performance} \cite{ref:cortez2008using}. Each dataset contains two sets of data (the present data $D_1$ and the shifted data $D_2$). The shifted dataset $D_2$ could capture the correction shift (for the German dataset), temporal shift (SBA), or geospatial shift (Student). For each dataset, we use $80\%$ of the instances in the present data $D_1$ to train an underlying classifier, and the remaining instances are used as input to generate recourses. The shifted data $D_2$ is used to train future classifiers to evaluate the validity of the recourse; see further details in Section~\ref{sec:expt:robust}. 

\textbf{Naming convention.} The Quadratic surrogate obtained in Theorem~\ref{thm:quadratic} is denoted QUAD-CVAS, the Bures surrogate obtained in Theorem~\ref{thm:bures} is denoted BW-CVAS, and the Fisher-Rao surrogate obtained in Theorem~\ref{thm:fr} is denoted FR-CVAS.

\textbf{Hyperparameters for the local sampler.} In all experiments, we choose $k=10$ examples in the favorable class with the smallest $\ell_1$ distance to the input instance $x_0$. The perturbation radius $r_p$ is set to 5\% of the maximum distance between instances in the available data.

\subsection{Sensitivity and Fidelity of CVAS}\label{sec:expt:fid-stab}

In the first set of experiments, we assess the quality of our covariance-robust CVASes in terms of their capacity as local surrogates to a black-box model. We compare the covariance-robust CVASes against LIME~\cite{ref:ribeiro2016why}, a well-known linear surrogate, in terms of the sensitivity with respect to the input and the fidelity with respect to the black-box model. Both sensitivity~\cite{ref:agarwal2021towards} and local fidelity~\citep{ref:laugel2018defining} are standard metrics in the literature to measure the quality of local surrogates.

\textbf{Sensitivity.} We use the procedure in~\cite{ref:agarwal2021towards} to measure the sensitivity of the surrogates with respect to small perturbations in the input instance. For a given instance $x$, we draw a set $\mc U_x$ of 10 neighbors of $x$ from $\mathcal{N}(x, 0.001 I)$ independently. We use the above-mentioned methods to find the linear surrogate $\theta_{x'} = (w_{x'}, b_{x'})$ for each $x' \in \mc U_x$. We report the maximum difference between the surrogates for $x'$ and that for $x$. Precisely, the sensitivity of a surrogate $\theta_x$ can be computed by 
\[
    \mathrm{Sensitivity}(\theta_x) = \max\nolimits_{x' \in \mc U_x} \| w_x -  w_{x'} \|_2.
\]
Ideally, the smaller the sensitivity, the better because it indicates that the surrogate is more stable to perturbations in the input instance $x_0$. 

\begin{figure}[!t]
    \centering
    \includegraphics[width=1.\linewidth]{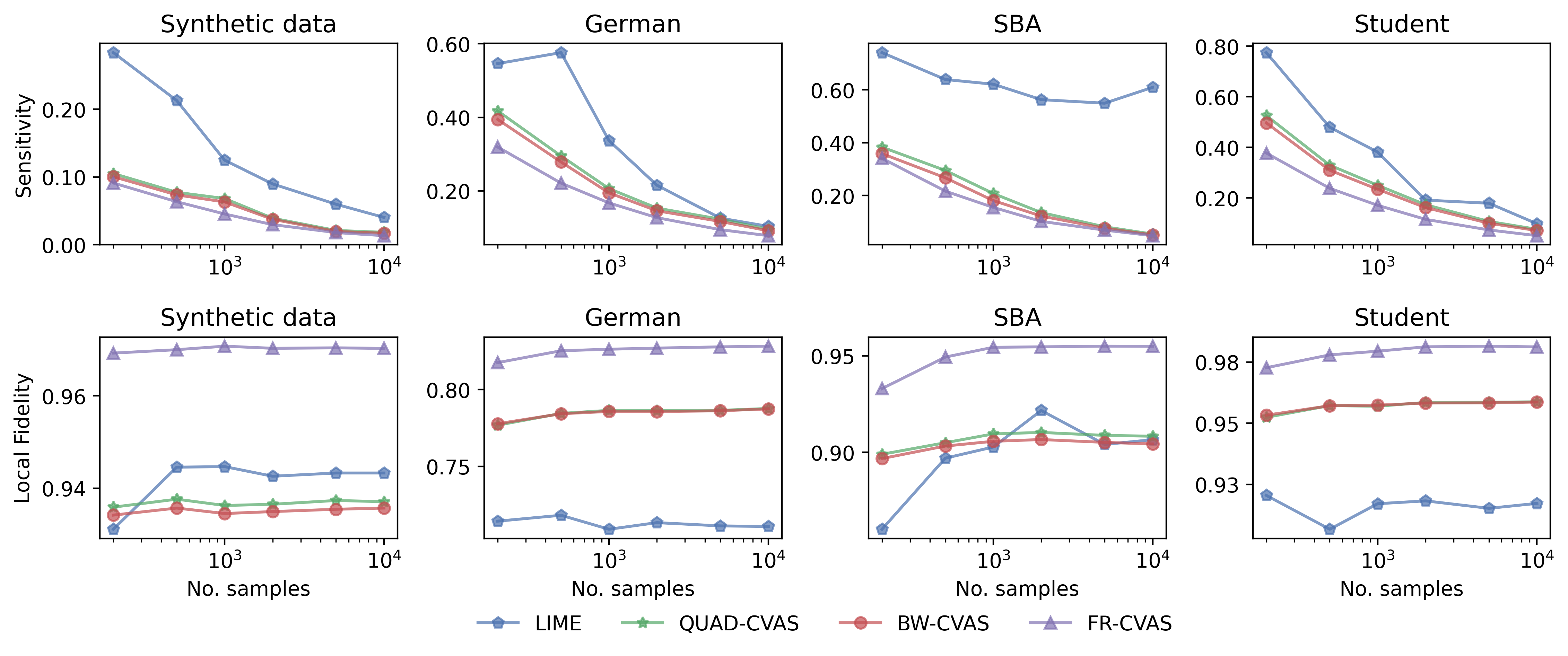}
    \caption{Benchmarks of sensitivity (top row) and local fidelity (bottom row) on four datasets. Lower sensitivity and higher local fidelity are better.}
    \label{fig:fid_stab_syge}
\end{figure}

\textbf{Local Fidelity.} We use the $\mathrm{LocalFid}$ criterion as in~\cite{ref:laugel2018defining} to measure the fidelity of a local surrogate model to the underlying model. For a given instance $x$ and a constructed linear surrogate $\mc C_{\theta_x}$, we draw a set $\mc V_x$ of 1000 instances uniformly from an $l_2$-ball of radius $r_{\mathrm{fid}}$ centered on $x$. The local fidelity of the surrogate $\theta_x$ is then measured as:
\[
    \mathrm{LocalFid}(\theta_x) =\frac{1}{|\mc V_x|} \sum_{x' \in \mathcal{V}_x} \mathbb{I}_{f(x') = \mc C_{\theta_x}(x')},
\]
where $f$ is the original black-box classifier and $\mathbb{I}$ is the indicator function. The metric $\mathrm{LocalFid}$ measures the fraction of instances where the output class of $f$ and $\mc C_{\theta_x}$ agree. The higher local fidelity value indicates that the linear surrogate $\mc C_{\theta_x}$ better approximates the local decision boundary of $f$. Here, we set $r_{\mathrm{fid}}$ to 10$\%$ of the maximum distance between instances in the available data. Note that $\mathcal{V}_x$ is for evaluation only, independent of the perturbation samples used to train the local surrogate.

To construct our CVASes, we set $\rho_{+1} = 0$, $\rho_{-1} = 1.0$. For LIME, we use the default parameters recommended in the LIME source code\footnote{https://github.com/marcotcr/lime} and return $\theta = (w, b - 0.5)$ as the LIME's surrogate, similar to~\cite{ref:laugel2018defining}. We vary the number of perturbation samples in a $[500, 10000]$ range to measure the fidelity and sensitivity of constructed surrogates under small sampling sizes. The results in Figure~\ref{fig:fid_stab_syge} show the superiority of CVASes to LIME in sensitivity and local fidelity metrics. Meanwhile, FR-CVAS provides higher-fidelity surrogates compared to QUAD-CVAS and BW-CVAS. These results assert that CVAS variants can serve as competitive linear surrogates to approximate the nonlinear decision boundary of black-box classifiers. The family of (covariance-robust) CVASes thus possesses a great opportunity to be an algorithmic explainer. However, we will focus on integrating CVASes into the recourse generation workflow for the remainder of the experiments.

\begin{figure}[!t]
    \centering    \includegraphics[width=\linewidth]{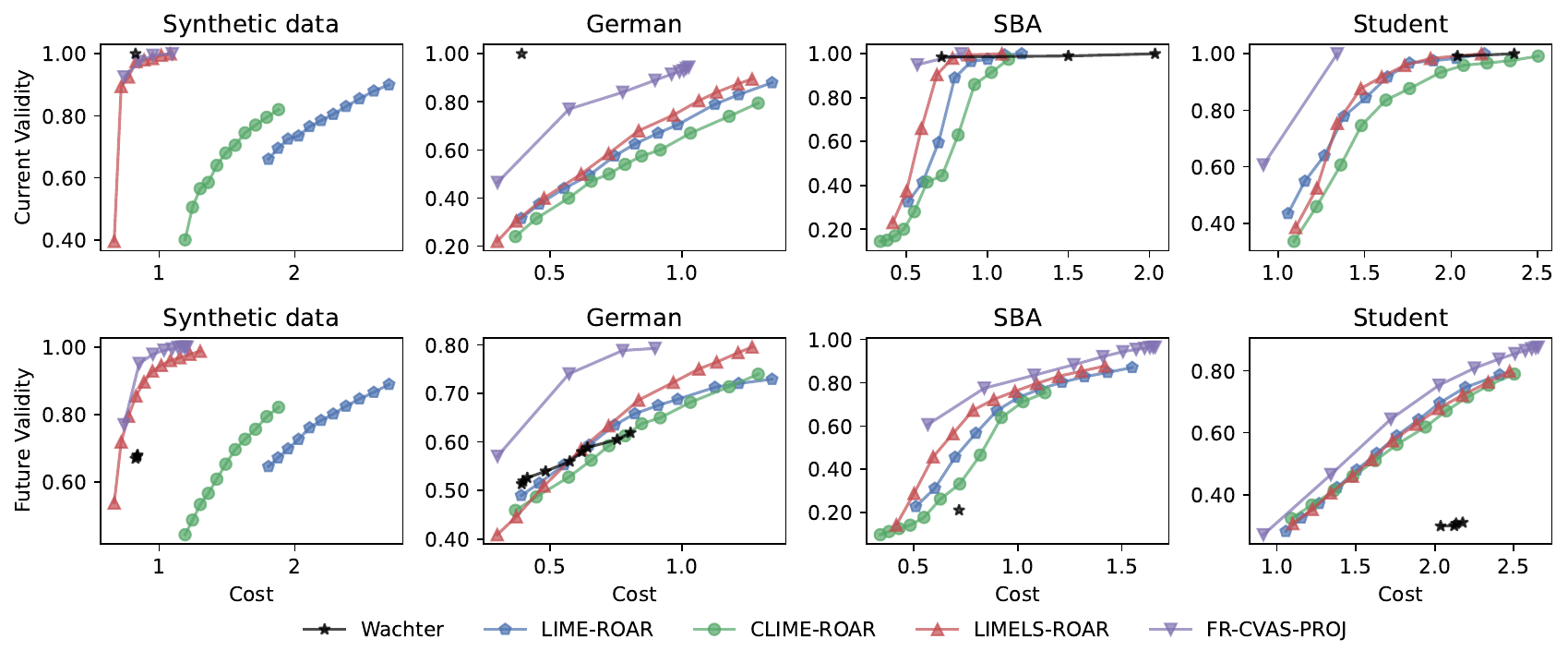}
    \caption{Pareto frontier of the cost-validity trade-off on four datasets. Each method has 11 configurations for the ambiguity size, but only non-dominated configurations are presented for clarity.}
    \label{fig:mlp_pareto}
\end{figure}
\subsection{CVAS for Robust Recourse Generation}\label{sec:expt:robust}

We now study the integration of the covariance-robust CVASes into the recourse generation scheme; we study the related robustness of our recourse against shifts of the black-box model and the cost-validity trade-off of the recourse. Proposition~\ref{prop:tradeoff} suggests that the Fisher-Rao model is a good candidate to form the surrogate for the recourse generation. In this section, we use FR-CVAS as the linear surrogate $\theta^\FR = (w^\FR, b^\FR)$, and we integrate $\theta^\FR$ with two methods of recourse generation: gradient-based recourse method (Section~\ref{sec:robust-proj}) and actionable recourse method (Section~\ref{sec:robust-AR}). Notice that $\theta^\FR$ inherently depends on the ambiguity size $\rho$, but this dependence is omitted to avoid clutter.

\textbf{Metrics.} Throughout the experiment, we use standard cost and validity metrics from~\cite{ref:ustun2019actionable, ref:upadhyay2021towards} to evaluate the quality of the generated recourses.
\begin{itemize}[leftmargin=5mm]
    \item \textit{Cost}: we use the $\ell_1$-distance between the constructed recourse $x_r$ and the input instance $x_0$, \ie, $\|x_r - x_0\|_1$ to measure the difficulty to implement recourse suggestions. The lower the cost, the better.
    \item \textit{Current validity \cite{ref:upadhyay2021towards, ref:rawal2020can}}: We define the \textit{current validity} as the percentage of recourses generated using the surrogate model remains valid with respect to the current black-box model $f$, which is known at the time the recourse is generated. The higher the current validity, the better.
    \item \textit{Future validity \cite{ref:upadhyay2021towards, ref:rawal2020can}}: To assess the robustness of recourses to model shifts, we leverage the shifted datasets $D_2$ to simulate future-shifted models. Specifically, we sample $80\%$ instances of the shifted data $D_2$ repeatedly 100 times to train 100 realizations of the `future' black-box models. Future validity is then computed as the average percentage of recourses remaining valid with respect to those future models. Note that this approach to computing the future validity metric differs from~\cite{ref:upadhyay2021towards}, which only trains one future model using the entire dataset $D_2$ and thus does not capture the uncertainty in future models. The higher the future validity, the better.
\end{itemize}

\subsubsection{Robust Projection-based Recourse}   \label{sec:robust-proj}

Because CVAS is a linear surrogate, the most simple recourse search is a projection onto the decision boundary of the CVAS surrogate. We thus consider the following recourse search mechanism 
\[
x_r = \arg\min \{ \| x - x_0 \|_1:  x^\top w^\FR + b^\FR \ge 0 \},
\]
which is the 1-norm projection onto the hyperplane $\{x: x^\top w^\FR + b^\FR \ge 0\}$.
This method is named FR-CVAS-PROJ. We compare this simple recourse against several baselines: 
\begin{enumerate}
\item [(i)] Wachter~\cite{ref:wachter2017counterfactual} (Wachter).  We suppose that the binary classifier $f$ takes the form
\[
    f(x) = \begin{cases}
        +1 & \text{if } g(x) \ge 0.5, \\
        -1 & \text{otherwise,}
    \end{cases}
\]
where $g(x)$ is the probability output of the model. Then Wachter solves
\be \label{eq:wachter}
 \Min{x}~\left\{(g(x) - 0.5)^2 + \lambda \| x - x_0 \|_1 \right\},
\ee
where the first term is a quadratic loss function between the probability output of $x$ and the threshold $0.5$, and the second term is the $\ell_1$-distance between $x$ and $x_0$. The parameter $\lambda > 0$ is the weight balancing the validity and the implementation cost.
We use a well-known CARLA's implementation\footnote{https://github.com/carla-recourse/CARLA} for Wachter~\cite{ref:wachter2017counterfactual}. This repository employs an adaptive scheme to adjust the hyperparameter $\lambda$ if no valid recourse is found.
\item [(ii)] a simple 1-norm projection onto the LIME surrogate (LIME-PROJ), 
\item [(iii-v)] a min-max formulation ROAR~\cite{ref:upadhyay2021towards} coupled with three local surrogates LIME~\cite{ref:ribeiro2016why}, CLIME~\cite{ref:agarwal2021towards}, and LIMELS~\cite{ref:laugel2018defining} as the nominal surrogate (LIME-ROAR, CLIME-ROAR, and LIMELS-ROAR, respectively). 
While LIME trains a weighted linear regression,  CLIME trains a linear regression, and LIMELS trains a ridge regression to find the surrogate. Because there is no publicly available implementation for CLIME~\cite{ref:agarwal2021towards} and LIMELS~\cite{ref:laugel2018defining}, we implement according to the original papers and based on LIME's source code. We use the CARLA's implementation for ROAR and set the initial $\lambda$ to $0.1$ as suggested in~\cite{ref:upadhyay2021towards}.
\end{enumerate}
Note, once again, that Wachter and LIME-PROJ are simple gradient-based methods that are not necessarily robust, while ROAR is a robust method using a min-max formulation.

\begin{table}[!t]
    \centering
    \caption{Performance of competing algorithms on the German, SBA, and Student datasets. For the current and future validity, higher is better. For the cost, lower is better. Bold indicates the best performance. FR-CVAS-PROJ has similar future validity to ROAR-related methods but has higher current validity and lower cost.}
    \label{tab:realdata}
    \resizebox{\linewidth}{!}{
    \pgfplotstabletypeset[
        col sep=comma,
        string type,
        every head row/.style={before row={%
            \toprule \multirow{2}{*}{Method}  &
            \multicolumn{3}{c}{German} & \multicolumn{3}{c}{SBA} & \multicolumn{3}{c}{Student} \\
            \cmidrule(r){2-4} \cmidrule(r){5-7} \cmidrule(r){8-10}
            },
            after row=\midrule},
        every last row/.style={after row=\bottomrule},
        columns/dataset/.style={column name=Dataset, column type={l}},
        columns/method/.style={column name={}, column type={l}},
        columns/cost-ge/.style={column name=\textit{Cost} $\downarrow$, column type={c}},
        columns/cur-vald-ge/.style={column name=\textit{Cur Validity} $\uparrow$, column type={c}},
        columns/fut-vald-ge/.style={column name=\textit{Fut Validity} $\uparrow$, column type={c}},
        columns/cost-sb/.style={column name=\textit{Cost} $\downarrow$, column type={c}},
        columns/cur-vald-sb/.style={column name=\textit{Cur Validity} $\uparrow$, column type={c}},
        columns/fut-vald-sb/.style={column name=\textit{Fut Validity} $\uparrow$, column type={c}},
        columns/cost-st/.style={column name=\textit{Cost} $\downarrow$, column type={c}},
        columns/cur-vald-st/.style={column name=\textit{Cur Validity} $\uparrow$, column type={c}},
        columns/fut-vald-st/.style={column name=\textit{Fut Validity} $\uparrow$, column type={c}},
    ]{data/ept3_mlp_hor_gesbst.csv}
    }
\end{table}
The main comparison herein is the cost-validity trade-off among different methods. We fix the number of perturbation samples to $1000$ and vary the ambiguity size with $\rho_{+1} = 0$, $\rho_{-1} \in [0, 10]$ with step size $0.1$ for FR-CVAS-PROJ, and $\delta_{\max} \in [0, 0.2]$ with step size $0.02$ for the uncertainty size of ROAR. We then plot the Pareto frontiers of the cost-validity trade-off in Figure~\ref{fig:mlp_pareto}. Generally, increasing the ambiguity size $\rho_{-1}$ will increase the current and future validity of the FR-CVAS-PROJ recourse but will also sacrifice in terms of the implementation cost. A similar observation applies to ROAR-related recourses when we increase the uncertainty size $\delta_{\max}$. These results are consistent with the analysis in~\cite{ref:rawal2020can}. However, the Pareto frontiers of FR-CVAS-PROJ dominate the frontiers of ROAR-related methods on all evaluated datasets. In other words, with the same cost (or validity), our method will provide recourses with a higher validity (or lower cost) compared to ROAR. 
For the experiment in Table~\ref{tab:realdata}, we choose $\rho_{+1} = 0, \rho_{-1}=10$ for FR-CVAS-PROJ and $\delta_{\max} = 0.2$ for ROAR with different surrogates. Table~\ref{tab:realdata} demonstrates that our method has similar validity but a much smaller cost than the best baseline LIMELS-ROAR on German datasets. Meanwhile, our method achieves higher validity with reasonable cost on SBA and Student datasets.

\subsubsection{Robust Actionable Recourse} \label{sec:robust-AR}

Because our proposed covariance-robust CVASes can explicitly hedge against model shifts, our method can be integrated with AR~\cite{ref:ustun2019actionable} to promote robust and actionable recourses. To ensure that recourses are actionable, AR restricts each feature to a \textit{discrete} set of feasible values predefined using the available data. We follow the specification in~\cite{ref:ustun2019actionable} and model the actionability using mixed-integer constraints. The robust actionable recourse is 
\[
x_{\text{ar}} = \arg\min \{ \| x - x_0 \|_1:  x^\top w^\FR + b^\FR \ge 0,~\delta = x - x_0,~\delta \text{~actionable} \}.
\]
Here, each feature $\delta_j$ is restricted to a grid of $m_j + 1$ feasible values $\delta_j \in \{0, \delta_{j1},\ldots, \delta_{jm_{j}}\}$ via the indicator variables. Following the same setup in~\cite{ref:pawelczyk2021carla}, we consider the actionability constraints such as immutable race, gender, or non-decrease age; see Appendix \ref{sec:app:exp:detail} for the specification of the actionability constraints. We use the original authors' implementations for AR.\footnote{https://github.com/ustunb/actionable-recourse}

For the baselines, we will compare our above robust actionable recourse against three different surrogates, where $(w^\FR, b^\FR)$ are replaced by the LIME, CLIME, and LIMELS, respectively. For FR-CVAS, we set $\rho_{+1} = 0$, $\rho_{-1} = 1.0$. Table~\ref{tab:mlp_realdata_ar:ge} demonstrates that the actionable recourse with FR-CVAS surrogate increases the current and future validity substantially compared to other surrogates. 

\begin{table}[!t]
    \centering
    \caption{Performance of AR using different local surrogates.}
    \label{tab:mlp_realdata_ar:ge}
    \resizebox{\linewidth}{!}{
    \pgfplotstabletypeset[
        col sep=comma,
        string type,
        every head row/.style={before row={%
            \toprule \multirow{2}{*}{Method}  &
            \multicolumn{3}{c}{German} & \multicolumn{3}{c}{SBA} & \multicolumn{3}{c}{Student} \\
            \cmidrule(r){2-4} \cmidrule(r){5-7} \cmidrule(r){8-10}
            },
            after row=\midrule},
        every last row/.style={after row=\bottomrule},
        columns/dataset/.style={column name=Dataset, column type={l}},
        columns/method/.style={column name={}, column type={l}},
        columns/cost-ge/.style={column name=\textit{Cost} $\downarrow$, column type={c}},
        columns/cur-vald-ge/.style={column name=\textit{Cur Validity} $\uparrow$, column type={c}},
        columns/fut-vald-ge/.style={column name=\textit{Fut Validity} $\uparrow$, column type={c}},
        columns/cost-sb/.style={column name=\textit{Cost} $\downarrow$, column type={c}},
        columns/cur-vald-sb/.style={column name=\textit{Cur Validity} $\uparrow$, column type={c}},
        columns/fut-vald-sb/.style={column name=\textit{Fut Validity} $\uparrow$, column type={c}},
        columns/cost-st/.style={column name=\textit{Cost} $\downarrow$, column type={c}},
        columns/cur-vald-st/.style={column name=\textit{Cur Validity} $\uparrow$, column type={c}},
        columns/fut-vald-st/.style={column name=\textit{Fut Validity} $\uparrow$, column type={c}},
    ]{data/ept3_mlp_hor_ar_gesbst.csv}
      }
\end{table}

\section{Conclusions} \label{sec:conclusion}
This paper developed a new recourse design framework that is robust to future model shifts, with the flexibility to incorporate additional mixed-integer constraints for capturing various practical considerations. A notable feature of our framework is the novel linear surrogate approximating the nonlinear decision boundary of a black-box machine learning model called the coverage-validity-aware surrogate. This CVAS surrogate allows us to balance the two criteria, coverage and validity, in a geometrically intuitive manner. Theoretically, we established strong connections between this surrogate and the MPM classifier. Moreover, we studied multiple robust variants which can hedge against boundary shifts. We showed that these robust variants correspond to new forms of regularizations of the MPM classifier. We also investigated the asymptotic properties of the robust variants of the CVAS surrogate. Through extensive numerical results using real datasets, we demonstrated that our surrogates exhibit higher fidelity to the underlying model and lower sensitivity to the problem inputs than well-known baseline surrogates. Consequentially, our recourses achieved robustness with a better cost-validity trade-off while enabling actionability through mixed-integer constraints.

\noindent\textbf{Acknowledgments.} Viet Anh Nguyen gratefully acknowledges the generous support from the UGC ECS Grant 24210924, the CUHK’s Improvement on Competitiveness in Hiring New Faculties Funding Scheme and the CUHK's Direct Grant Project Number 4055191.

\bibliographystyle{siam}
\bibliography{bibliography}


\appendix

\section{Additional Experiments} \label{sec:app:exp}

\subsection{Experimental Details}\label{sec:app:exp:detail}

\textbf{Synthetic data generation.} For synthetic data, we generate 2-dimensional data by sampling instances uniformly in a rectangle $x = (x_1, x_2) \in [-2, 4] \times [-2, 7]$. Each sample is labeled using the following function: 
\[
    f(x) = \left\{
            \begin{array}{cl}
                1 & \mathrm{if} \quad x_2 \ge 1 + x_1 + 2 x_1^2 + x_1^3 - x_1^4 + \varepsilon, \\
                -1 & \mathrm{otherwise},
            \end{array}
        \right.
\]
where $\varepsilon$ is a random noise. We generate a present data set $D_1$ with $\varepsilon = 0$ and a shifted data set $D_2$ with $\varepsilon \sim \mc N(0, 1)$. The current decision boundary of the MLP classifier for the synthetic data is illustrated in Figure~\ref{fig:synthetic_example}. 

\begin{figure}[!ht]
    \centering
    \includegraphics[width=0.5\linewidth]{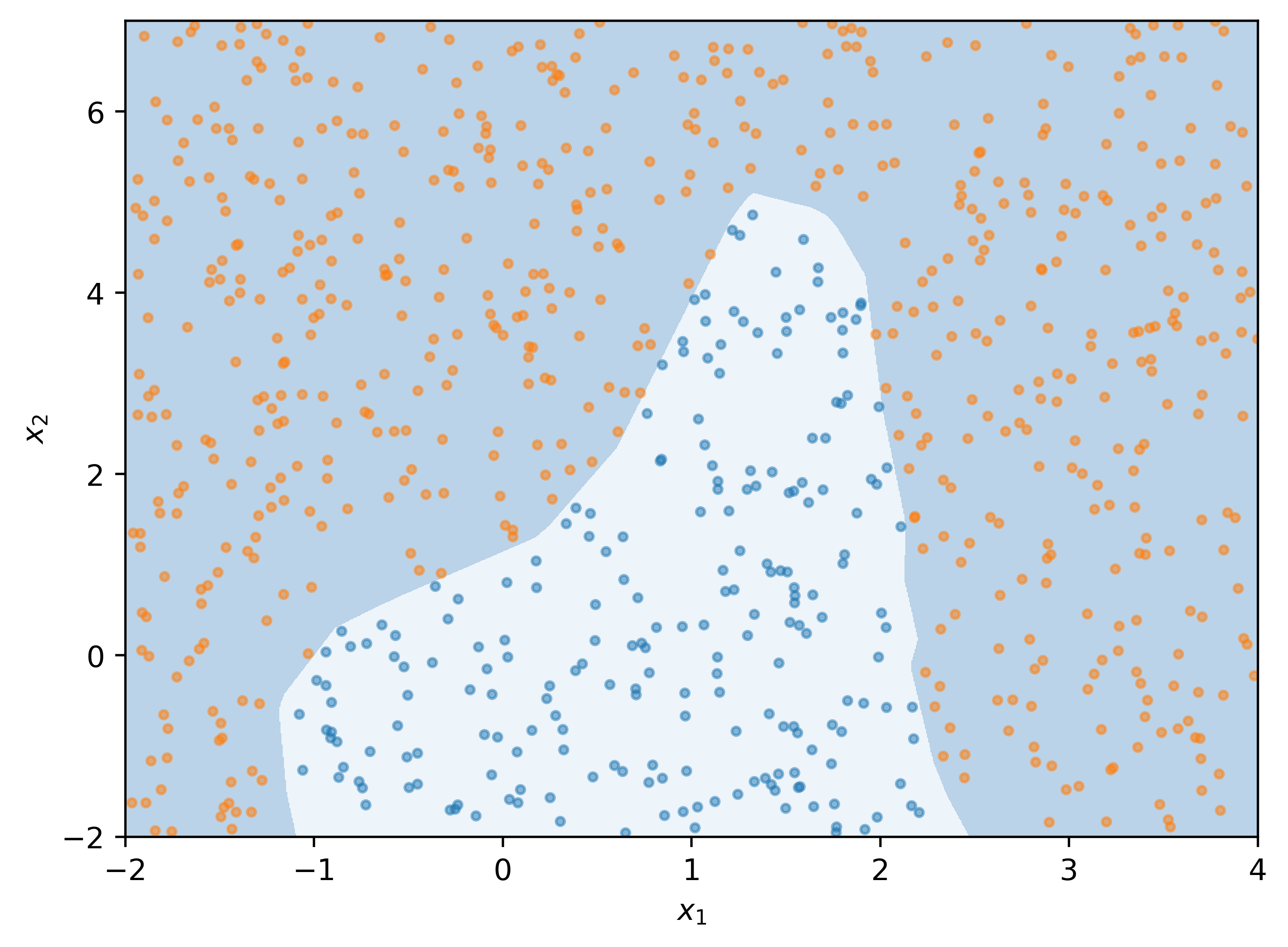}
    \caption{An illustration of MLP's decision boundary for the synthetic data.}
    \label{fig:synthetic_example}
\end{figure}

\textbf{Real-world datasets.} The details of three real-world datasets are listed below: 

\begin{enumerate}[label=(\roman*), leftmargin=5mm, partopsep=0pt,topsep=0pt]
    \item \textit{German Credit} \cite{ref:dua2017uci}. The dataset contains information (e.g., age, gender, financial status, etc.) on 1000 customers who took out bank loans. The classification task predicts an individual's risk (good or bad). There is another version of this dataset regarding corrections of coding error \cite{ref:groemping2019south}. We use the corrected version of this dataset as shifted data to capture the correction shift. The features we used in this dataset include `duration', `amount', `personal\_status\_sex', and `age'. When considering actionability constraints in Section \ref{sec:expt:robust}, we set `personal\_status\_sex' as immutable and `age' as non-decreasing.
    \item \textit{Small Bussiness Administration (SBA)} \cite{ref:li2018should}. This data includes 2,102 observations with historical data of small business loan approvals from 1987 to 2014. We divide this dataset into two datasets (one is instances from 1989 - 2006, and one is instances from 2006 - 2014) to capture temporal shifts. We use the following features: selected, `Term', `NoEmp', `CreateJob', `RetainedJob', `UrbanRural', `ChgOffPrinGr', `GrAppv', `SBA\_Appv', `New', `RealEstate', `Portion', `Recession'. When considering actionability constraints, we set `UrbanRural' as immutable.
    \item \textit{Student performance} \cite{ref:cortez2008using}. This data includes the performance records of 649 students in two schools: Gabriel Pereira (GP) and Mousinho da Silveira (MS). The classification task is to determine whether their final score is above average. We split this dataset into two sets in two schools to capture geospatial shifts. The features we used are: `age', `Medu', `Fedu', `studytime', `famsup', `higher', `internet', `romantic', `freetime', `goout', `health', `absences', `G1', `G2'.  When considering actionability constraints, we set `romantic' as immutable and `age' as non-decreasing.
\end{enumerate}

For categorical features, we convert them to binary features using the same one-hot encoding procedure proposed by~\cite{ref:mothilal2020explaining}. We also normalize continuous features to zero mean and unit variance before training the classifier. The classifier's performance on all datasets is reported in Table \ref{tab:clf_prfm}. 

\begin{table}[!ht]
    \centering
    \caption{Accuracy and AUC results of the classifiers on the synthetic and three real-world datasets.}
    \small
    \begin{tabular}{llcccc}
        \toprule
         \multirow{2}{*}{Classifier} & \multirow{2}{*}{Dataset} & \multicolumn{2}{c}{Present data $D_1$} & \multicolumn{2}{c}{Shift data $D_2$} \\
         \cmidrule(r){3-4} \cmidrule(r){5-6}
         && \textit{Accuracy $\uparrow$} & \textit{AUC $\uparrow$} & \textit{Accuracy $\uparrow$} & \textit{AUC $\uparrow$} \\
         \midrule
         \multirow{4}{*}{MLP}&Synthetic data & 0.99 $\pm$ 0.00 & 1.00 $\pm$ 0.00 & 0.94 $\pm$ 0.01 & 0.99 $\pm$ 0.01 \\
         &German credit & 0.67 $\pm$ 0.02 & 0.60 $\pm$ 0.03 & 0.66 $\pm$ 0.23 & 0.60 $\pm$ 0.04 \\
         &SBA & 0.96 $\pm$ 0.00 & 0.99 $\pm$ 0.00 & 0.98 $\pm$ 0.01 & 0.96 $\pm$ 0.01 \\
         &Student & 0.86 $\pm$ 0.02 & 0.93 $\pm$ 0.01 & 0.91 $\pm$ 0.04 & 0.97 $\pm$ 0.02 \\
         \bottomrule
    \end{tabular}
    \label{tab:clf_prfm}
\end{table}

\textbf{Reproducibility.} We release all source code and scripts to replicate our experimental results at \url{https://anonymous.4open.science/r/cvas}. The repository includes source code, datasets, configurations, and instructions. 

The hyperparameter configurations for our methods and other baseline are clearly stated in Section~\ref{sec:numerical}, Appendix~\ref{sec:app:exp:detail}, and Appendix~\ref{sec:app:exp:add} and also stored in the repository. The surrogates sharing the same local sampler have the same random seed and, therefore, have the same synthesized samples. The hyperparameters that affect the baselines' performance, such as $\lambda$ and the probabilistic threshold of Wachter and ROAR, will also be studied in Appendix~\ref{sec:app:exp:add}.

\subsection{Additional Experimental Results}\label{sec:app:exp:add}

\subsubsection{Local Fidelity and Sensitivity Comparisons} 
We run with a different setting for the sensitivity and local fidelity metric to assess the sensitivity of the results to the parameter choices.  Specifically, we sample 10 neighbors in the distribution $\mc N(x, 0.0001I)$ instead of $\mc N(x, 0.001I)$ to measure the sensitivity. Meanwhile, we set $r_{fid}$ to 20\% and the radius $r_p$ to 10\% of the maximum distance between available instances. The result in Figure~\ref{fig:fid_stab_sbst_other} is consistent with Figure~\ref{fig:fid_stab_syge}, showing that our evaluation is not sensitive to the choice of hyper-parameters used to compute the metrics.

\begin{figure}[!ht]
    \centering
    \includegraphics[width=\linewidth]{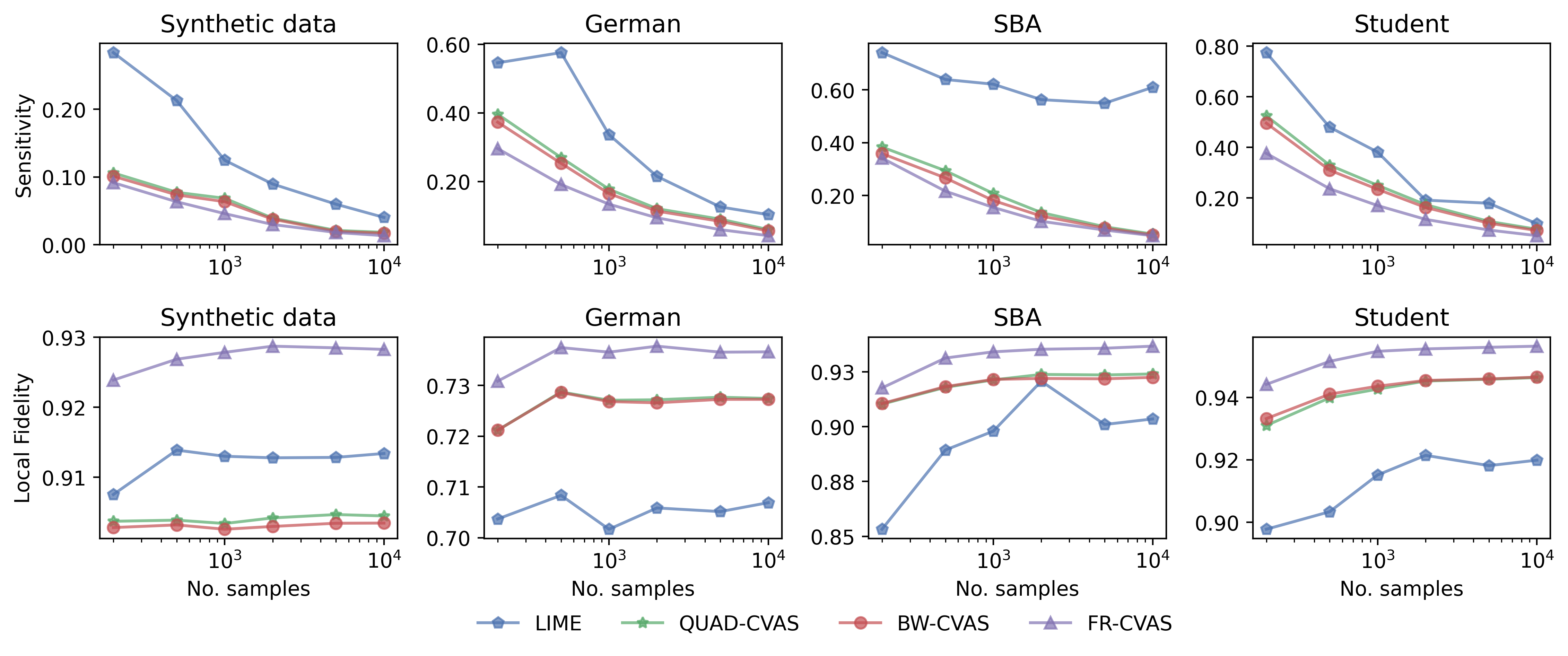}
    \caption{Benchmarks of sensitivity and local fidelity of LIME and CVAS variants on four datasets.}
    \label{fig:fid_stab_sbst_other}
\end{figure}

\begin{figure*}[!t]
    \centering
    \includegraphics[width=\linewidth]{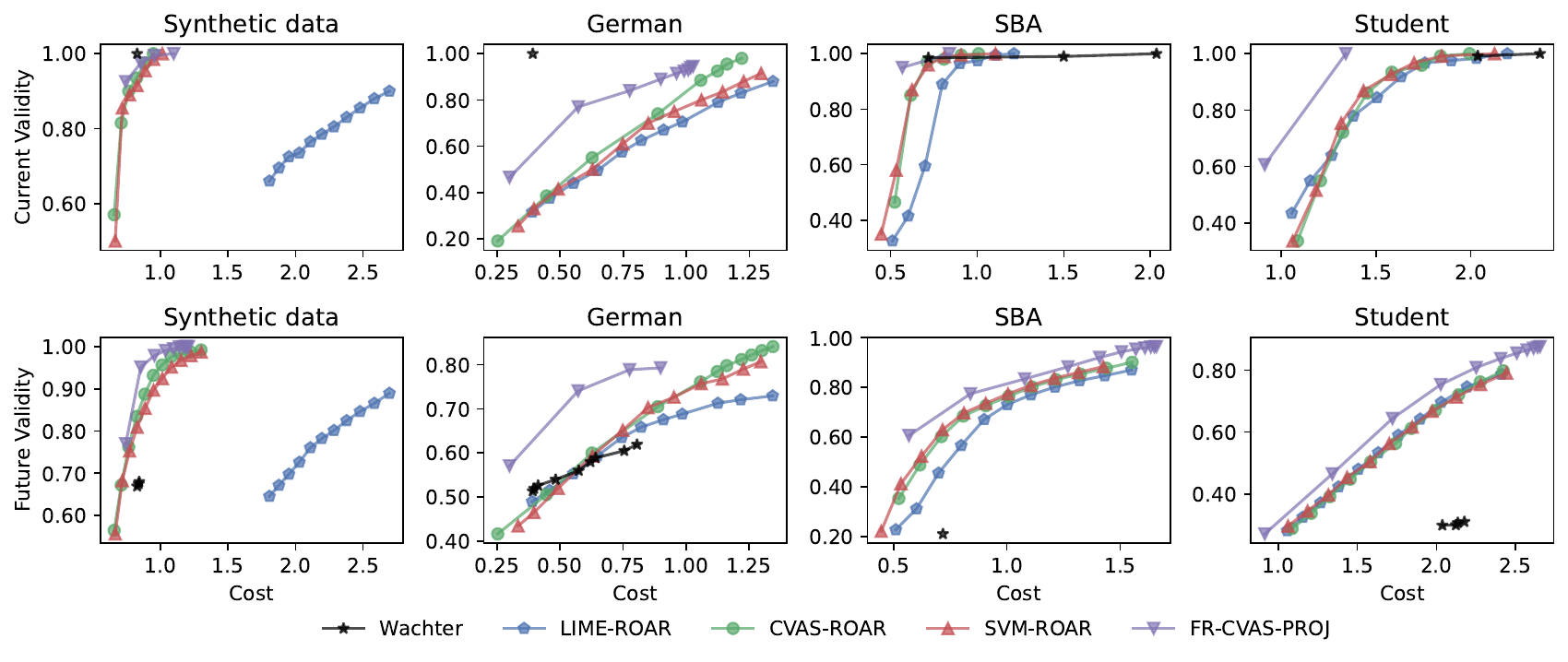}
    \caption{Pareto frontiers of our method compared with ROAR using LIME, non-robust CVAS, and SVM as the surrogate model. The recourses are generated with respect to the MLP classifier on synthetic and three real-world datasets.}
    \label{fig:vanillaexpt}
\end{figure*}

\subsubsection{Comparison with ROAR Using the Non-robust CVAS and SVM as the Surrogate Model.} Here, we compare the FR-CVAS-PROJ with ROAR using LIME, vanilla CVAS, and SVM~\cite{ref:hearst1998support} as the surrogate model. Both vanilla CVAS and SVM use the same boundary sample procedure (with the same seed number) as FR-CVAS. The settings are similar to the experiment in Section~\ref{sec:expt:robust}. Figure~\ref{fig:vanillaexpt} shows that the Pareto frontiers of FR-CVAS dominate the Pareto frontiers of both SVM-ROAR and CVAS-ROAR, demonstrating the merits of our covariance-robust surrogates in balancing cost-validity trade-off.

\begin{figure*}[!ht]
    \centering
    \includegraphics[width=\linewidth]{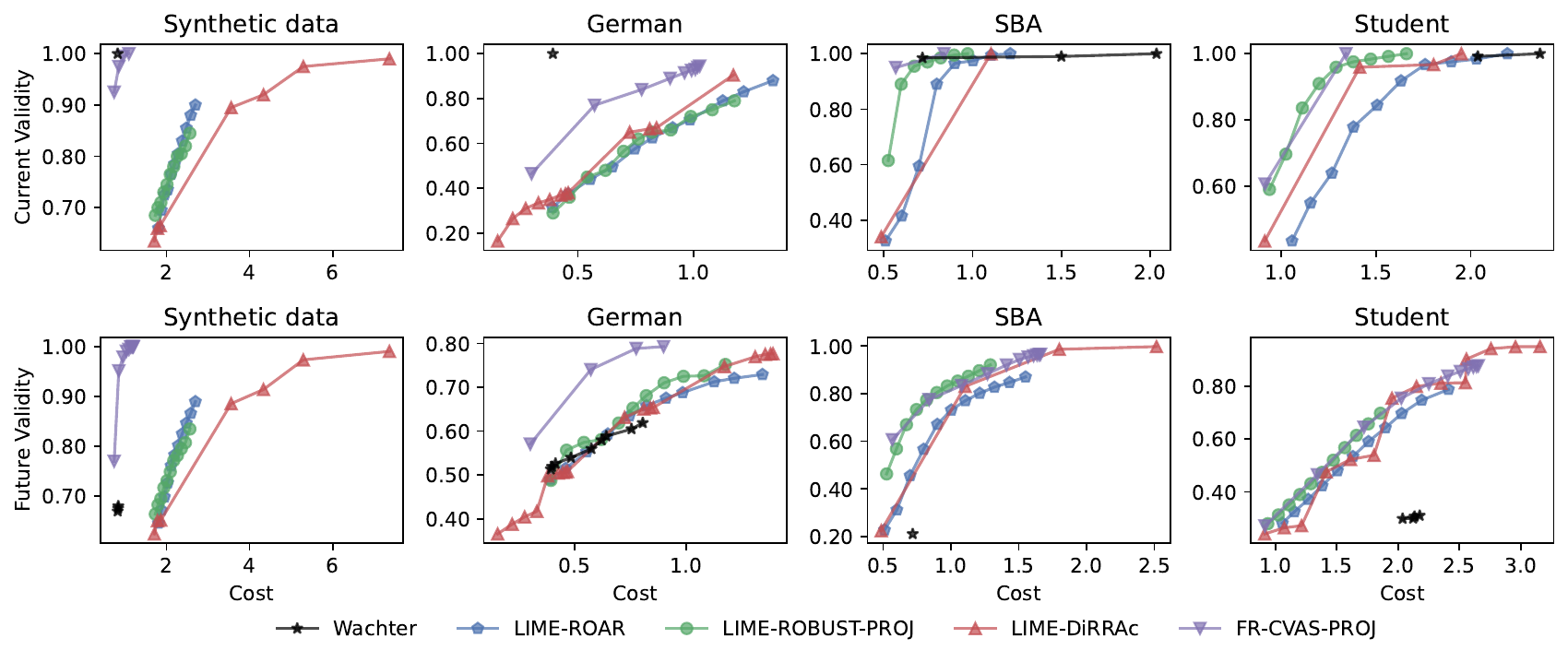}
    \caption{Ablation study: Pareto frontiers of FR-CVAS-PROJ compared to its ablations by alternating the FR-CVAS by LIME and alternating ROAR by DiRRAc and the robust projection. The recourses are generated with respect to the MLP classifier on synthetic and three real-world datasets.}
    \label{fig:ablationexpt}
\end{figure*}

\subsubsection{Ablation Study.} 
We conduct an ablation study to understand the contribution of local surrogate models and recourse-generation approaches in our method. Figure~\ref{fig:vanillaexpt} compares our method with ROAR using vanilla CVAS and SVM as the local surrogate. Figure~\ref{fig:ablationexpt} shows the Pareto frontiers of FR-CVAS-PROJ compared to its ablations by substituting the FR-CVAS with other surrogates (LIME, CVAS) or substituting ROAR~\cite{ref:upadhyay2021towards} by DiRRAc~\cite{ref:nguyen2023distributionally}. We also compare our method with LIME-ROBUST-PROJ, which uses LIME as the surrogate model and then solves the robustified projection: 
\[
    x_r = \arg\min \{ \| x - x_0 \|_1 ~:~ x \in \R^d,~x^\top w + b - \delta_{\mathrm{max}} \|x\|_2 \ge 0 \},
\]
where $(w, b)$ is the weight and bias of the LIME's surrogate, and $\delta_{\mathrm{max}}$ is similar to the uncertainty size of ROAR~\cite{ref:upadhyay2021towards}. 

The recourses are generated with respect to the MLP classifier on synthetic and three real-world datasets. This result demonstrates the usefulness of the FR-CVAS in promoting the generation of robust recourses. Note that, for $\rho_{-1} = 0$ and $\rho_{+1} = 0$, the hyperplane of the FR-CVAS classifier coincides with the vanilla CVAS's hyperplane.

\subsubsection{Comparison with the Probabilistic Threshold Shiftings.} 

A probabilistic classifier uses a threshold to convert the probability to a binary outcome: if the probability value is above the threshold value, the outcome is considered `favorable' (class $+1$). The standard threshold is 0.5, which is used as in~\eqref{eq:wachter}, and one possible approach to generate robust recourse is simply running the recourse generation problem with the same black-box classifier but with a higher threshold value. In doing so, the recourse will have a higher probability of being classified in the $+1$ class with respect to the \textit{current} classifier. This can hopefully be translated to a higher probability of being classified in the $+1$ class with respect to the \textit{future} classifier. While this method may not provide any guarantee, it is a simple and intuitive baseline.

In Figure~\ref{fig:linearshift}, we compare the proposed method with Wachter, LIME, and vanilla CVAS with various probabilistic thresholds in the range $[0.5, 0.9]$. For vanilla CVAS, we first obtain the solution $\wh \theta = (\wh w, \wh b)$ for the coverage-validity-aware surrogate problem~\eqref{eq:surrogate} and calibrate the intercept $\wh b$. Figure~\ref{fig:linearshift} demonstrates that FR-CVAS-PROJ consistently achieves the best performance compared to other baselines. Interestingly, Wachter improves its future validity significantly on the synthetic and German datasets as the threshold increases. However, our method still dominates Wachter in all datasets. These results also suggest that robustification via distributionally robust optimization is more effective than that via the calibration of linear surrogate thresholds.

\begin{figure*}[!ht]
    \centering
    \includegraphics[width=\linewidth]{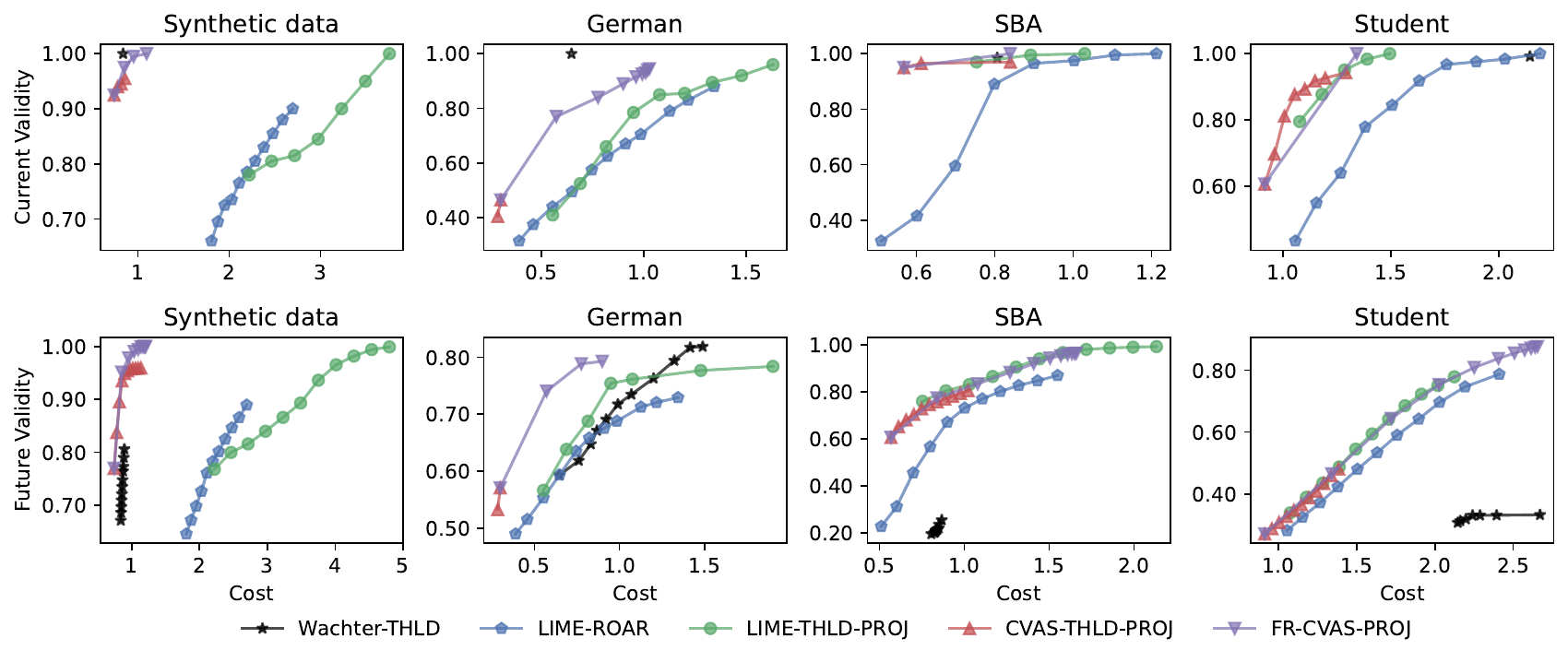}
    \caption{Comparison of FR-CVAS-PROJ, LIME-ROAR, and the probabilistic  shiftings (Wachter-THLD, LIME-THLD-PROJ and CVAS-THLD-PROJ).}
    \label{fig:linearshift}
\end{figure*}

\begin{figure}[ht]
    \centering
    \includegraphics[width=\linewidth]{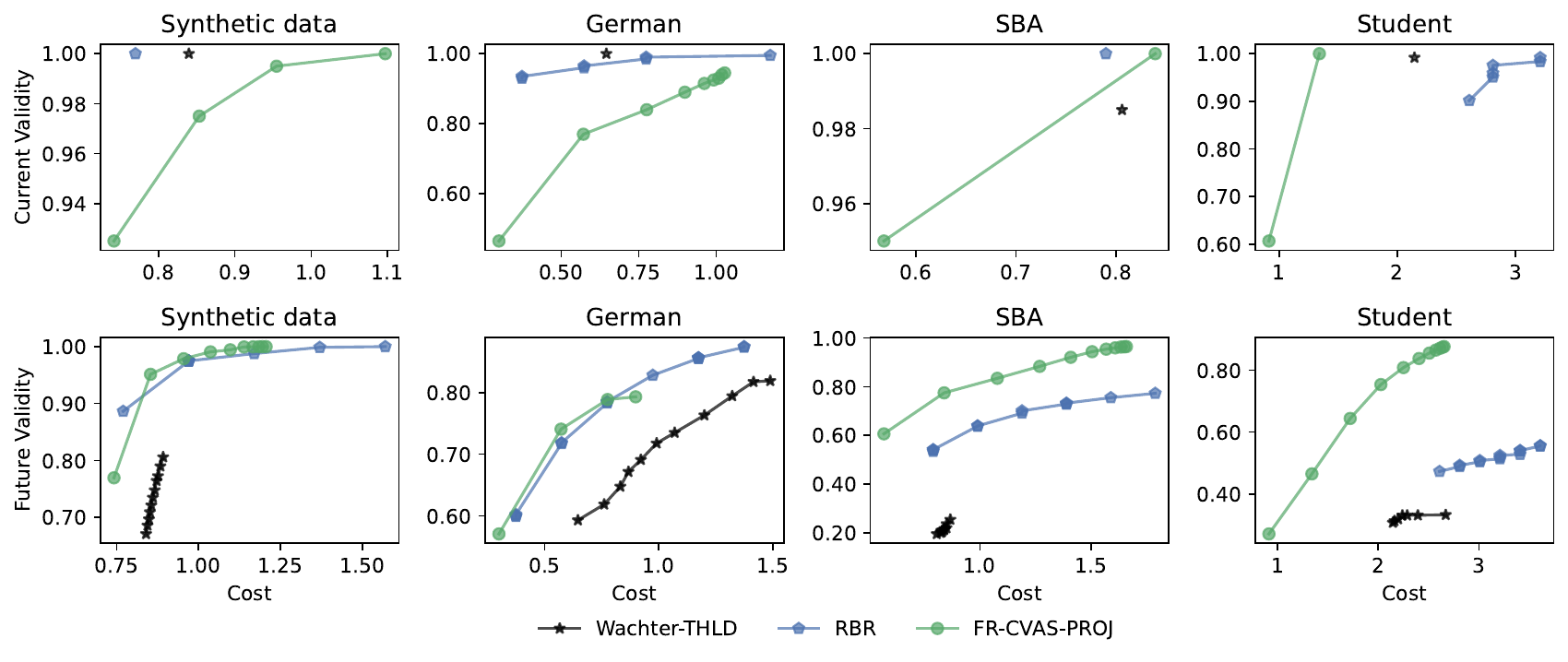}
    \caption{Comparison of our FR-CVAS-PROJ against RBR. We also plot Wachter with threshold shiftings as a comparison in the plot.}
    \label{fig:cost_validity_rbr}
\end{figure}

\subsubsection{Comparison with Model Agnostic Approaches} 
We also compare FR-CVAS-PROJ with Robust Bayesian Recourse (RBR), a model-agnostic approach~\cite{ref:nguyen2022robust}. RBR does not use a surrogate model; instead, it directly optimizes the \textit{quasi-likelihood} of the recourse using the synthesized samples. The frontiers for RBR are obtained by varying RBR's parameters with $\delta_{+} \in [0, 0.2]$ and $\varepsilon_{1} \in [0, 1.0]$. Figure~\ref{fig:cost_validity_rbr} shows that FR-CVAS-PROJ provides a better cost-validity trade-off than RBR, especially on SBA and Student datasets.

\subsubsection{Robust CVASes with Different Divergences} \label{sec:app:disc}

\begin{figure}[ht]
    \centering
    \begin{subfigure}{0.325\linewidth}
        \includegraphics[width=\linewidth]{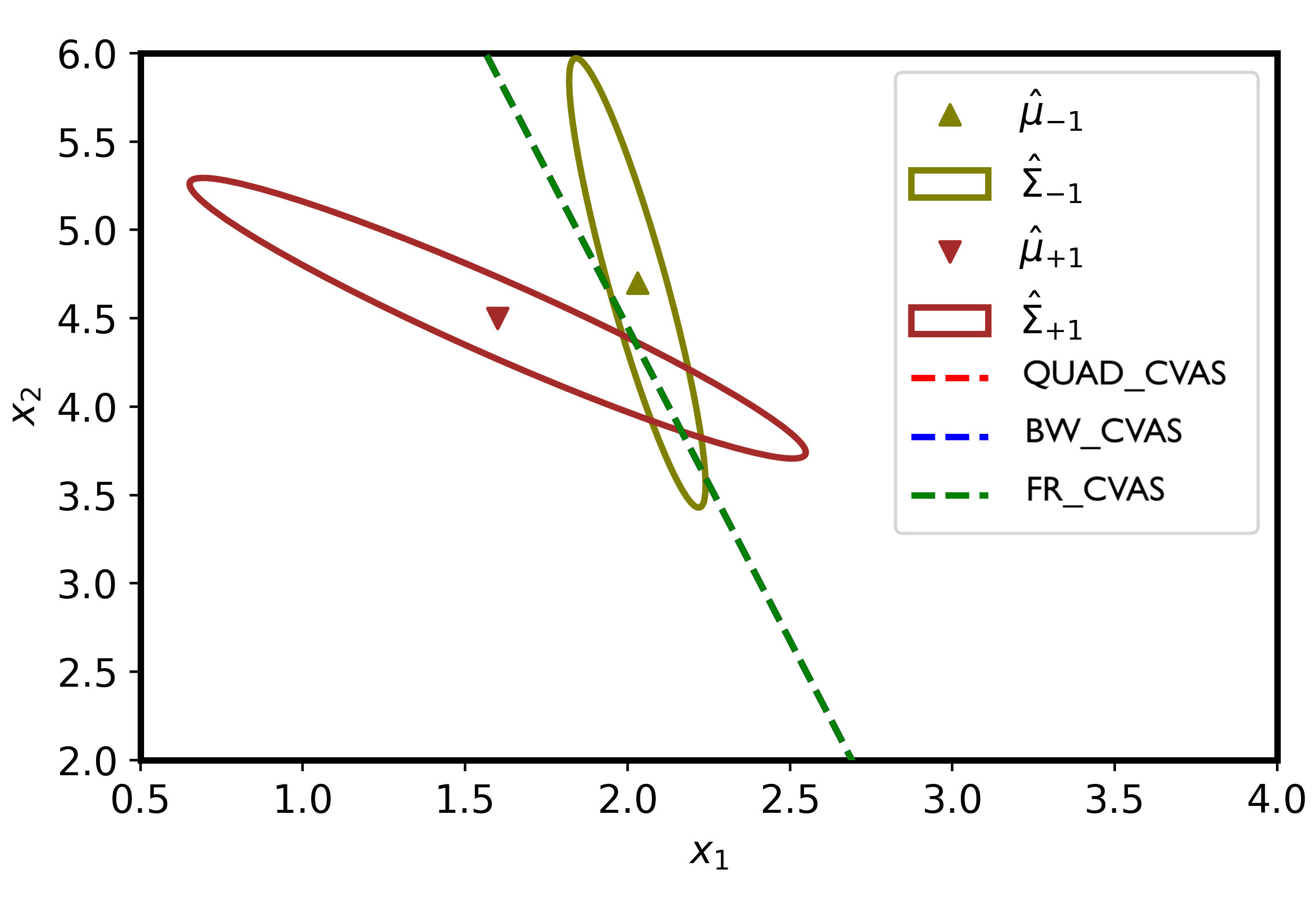}
        \caption{$\rho_{+1} = \rho_{-1} = 0$.}
        \label{fig:asym_ex_0}
    \end{subfigure}
    \begin{subfigure}{0.325\linewidth}
        \includegraphics[width=\linewidth]{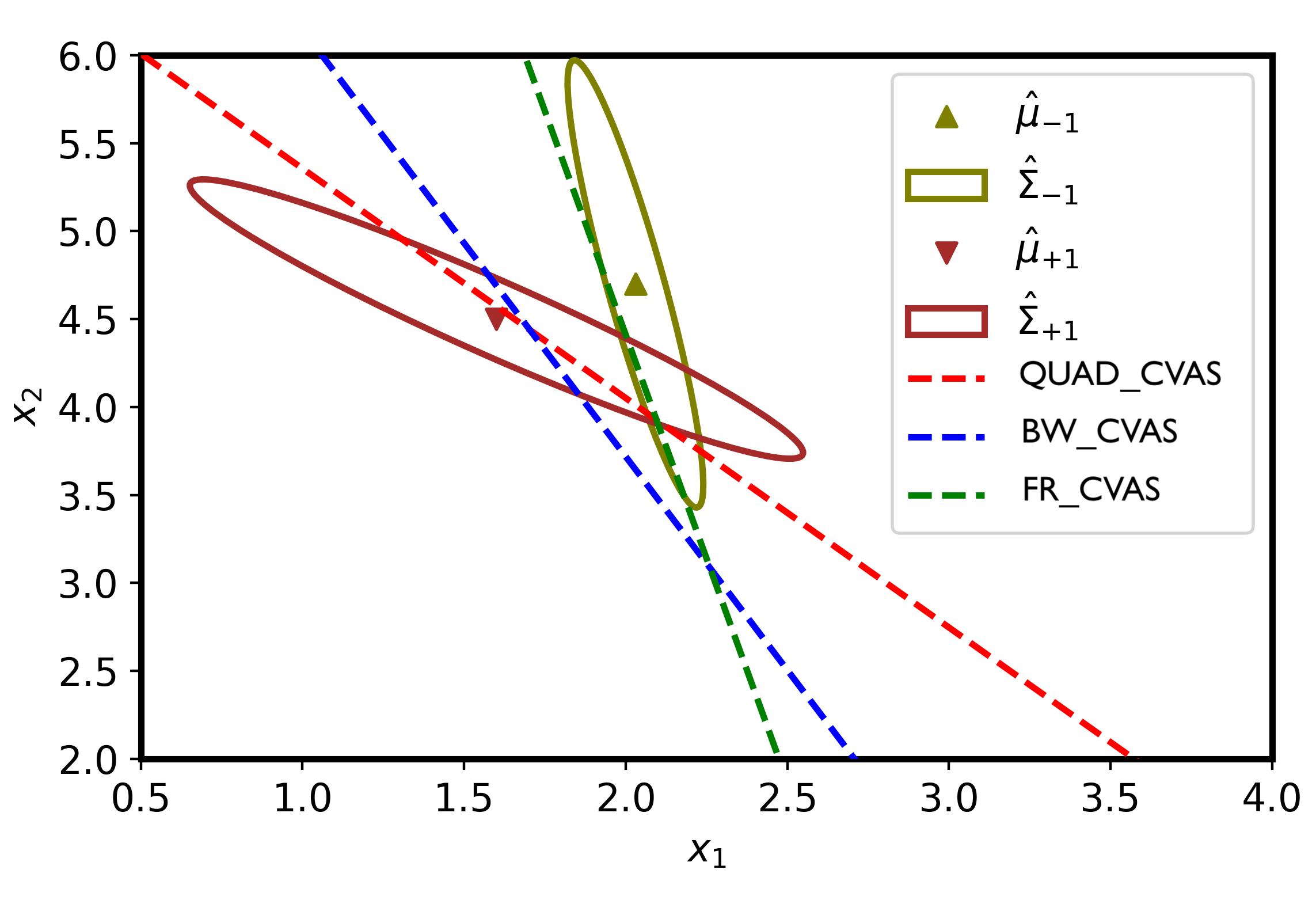}
        \caption{$\rho_{+1} = 0, \rho_{-1} = 1$.}
        \label{fig:asym_ex_1}
    \end{subfigure}
    \begin{subfigure}{0.325\linewidth}
        \includegraphics[width=\linewidth]{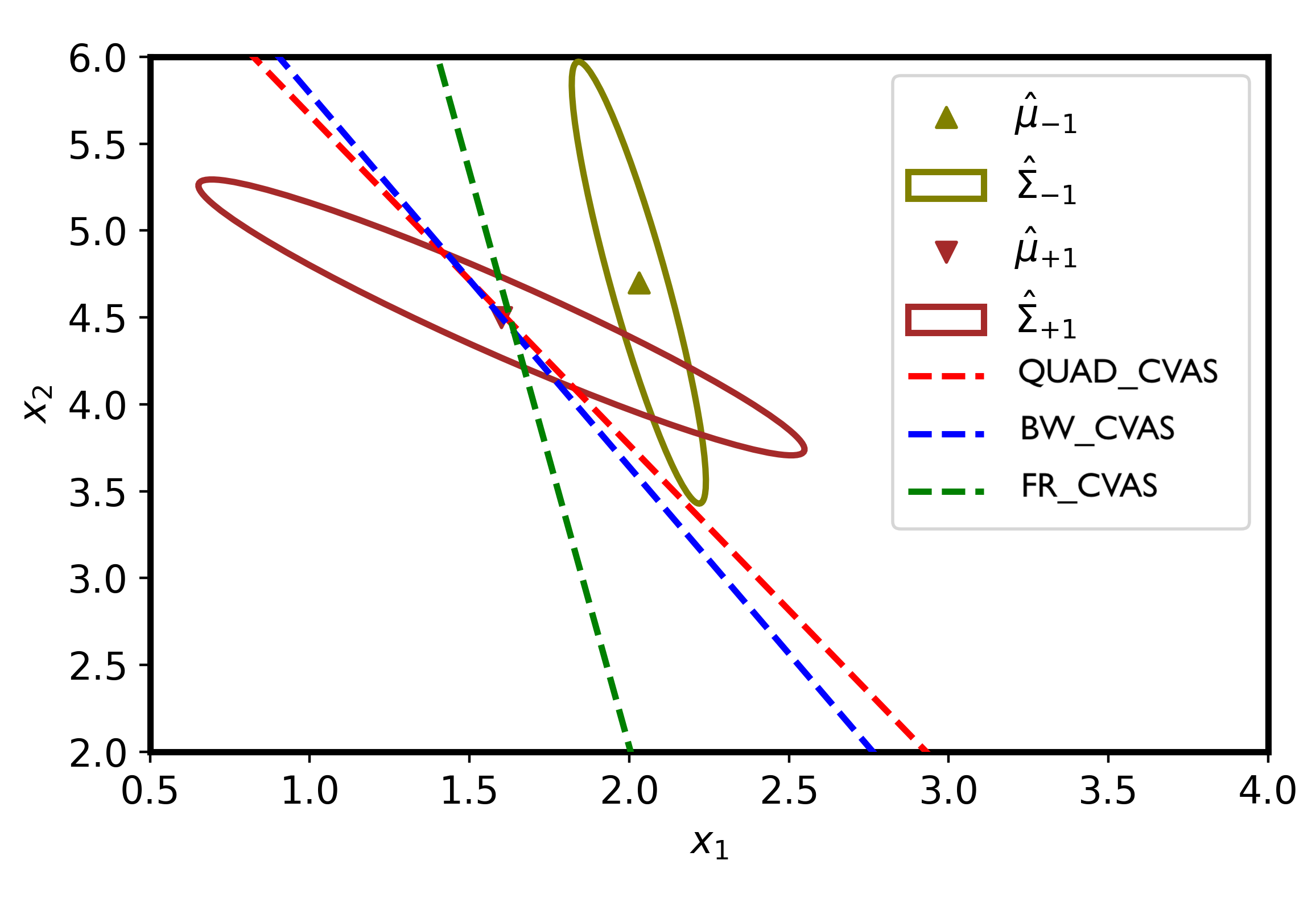}
        \caption{$\rho_{+1} = 0, \rho_{-1} = 10$.}
        \label{fig:asym_ex_2}
    \end{subfigure}
    \caption{Visualization of CVAS's hyperplanes with Quadratic, Bures, and Fisher-Rao distances.}
    \label{fig:asymptotic1}
\end{figure}

This section discusses the variants of covariance-robust CVASes with different divergences. This discussion aims to provide guidance for choosing the surrogate model in practice, especially at a low sample size. 

Proposition~\ref{prop:asymptotic} showed that Quadratic surrogate and Bures surrogate coincide when one of the radii $\rho_y$ grows to infinity, and they are independent of the covariance matrices $\covsa_y$. Meanwhile, the asymptotic hyperplane of the Fisher-Rao surrogate when $\rho_y \to \infty $ aligns with axes of the covariance matrices $\covsa_y$ (see Proposition~\ref{prop:asymptotic} and Figure~\ref{fig:asymptotic1}). We can observe that when $\rho_{+1} = \rho_{-1} = 0$, all hyperplanes coincide and recover the non-robust CVAS. All hyperplanes move towards the favorable class as the radius for the \textit{un}favorable class $\rho_{-1}$ increases. At $\rho_{-1} = 10$ in Subfigure~\ref{fig:asym_ex_2}, the hyperplanes of the Quadratic and Bures surrogates come close together, which is distinct from the Fisher-Rao hyperplane. Notice that the Fisher-Rao surrogate in Subfigure~\ref{fig:asym_ex_2} tends to position in parallel to the major axis of the \textit{un}favorable covariance matrix, which shows the dependence on $\covsa_{-1}$. The Bures and Quadratic hyperplanes in Subfigure~\ref{fig:asym_ex_2} do not depend on the covariance matrix, which aligns with the results in Proposition~\ref{prop:asymptotic}. Coupled with Proposition~\ref{prop:tradeoff}, we postulate that the Fisher-Rao surrogate is not a suitable surrogate at low sample sizes as it relies on the estimate of the covariance matrices. On the other hand, when the number of samples is sufficient to estimate the covariance matrices accurately, the Fisher-Rao surrogate would be better than the Quadratic and Bures surrogate as it considers the geometry of the data during robustification.

To demonstrate our claim above, we probe the performance of CVASes with different divergences at low sample sizes. 

\begin{figure}[ht]
    \centering
    \begin{subfigure}{\linewidth}
        \includegraphics[width=\linewidth]{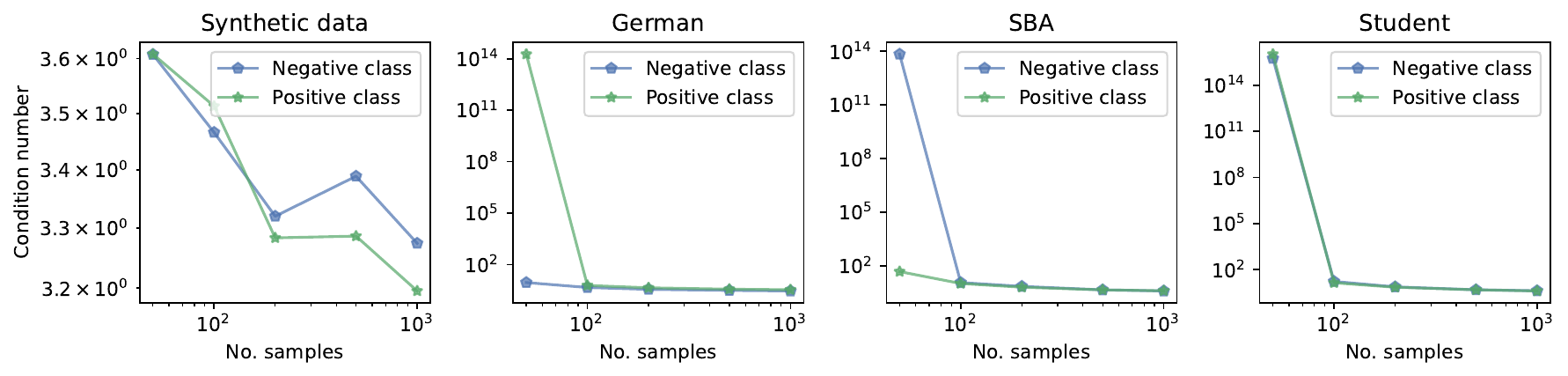}
        \caption{The average condition number of the generated covariance matrices for positive and negative classes.}
        \label{fig:cond_low}
    \end{subfigure}
    \begin{subfigure}{\linewidth}
        \includegraphics[width=\linewidth]{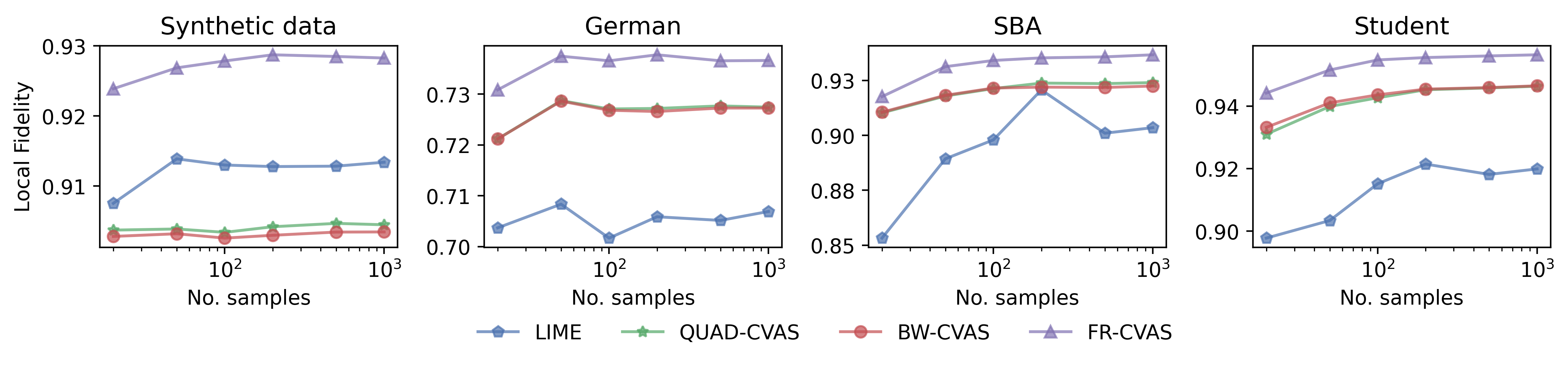}
        \caption{Local fidelity of CVAS variants at low sample sizes.}
        \label{fig:fid_low}
    \end{subfigure}
    \caption{The comparison among CVAS variants with different distances at low sample sizes.}
    \label{fig:cond_fid_low}
\end{figure}

\textbf{Local fidelity.} We probe the local fidelity at low sample sizes and plot the result in Figure~\ref{fig:cond_fid_low}. The experiment settings are similar to those in Section~\ref{sec:expt:fid-stab}. The number of samples is set in the range of $[50, 1000]$. We also measure the average condition number of estimated covariance matrices for both positive and negative classes in Figure~\ref{fig:cond_low}. This figure shows that the covariance matrices are ill-conditioned at 50 samples on SBA and Student datasets. The fidelity of FR-CVAS is slightly better than that of QUAD-CVAS and BW-CVAS. When the number of samples increased, FR-CVAS benefited the most, and the gap between FR-CVAS and QUAD-CVAS (or BW-CVAS) became more significant. It supports our claim that FR-CVAS would better approximate the decision boundary when the number of samples is sufficient for estimating the covariance matrices.

\begin{figure}[ht]
    \centering
    \begin{subfigure}{0.45\linewidth}
        \includegraphics[width=\linewidth]{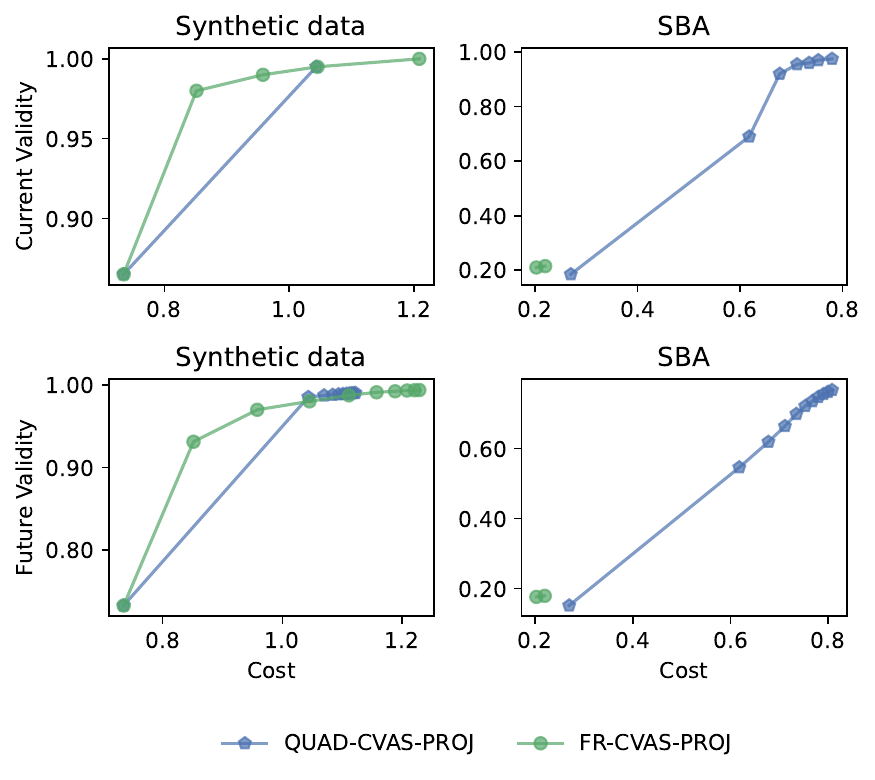}
        \caption{50 synthesized samples.}
        \label{fig:cost-val-50}
    \end{subfigure}
    \hfill
    \begin{subfigure}{0.45\linewidth}
        \includegraphics[width=\linewidth]{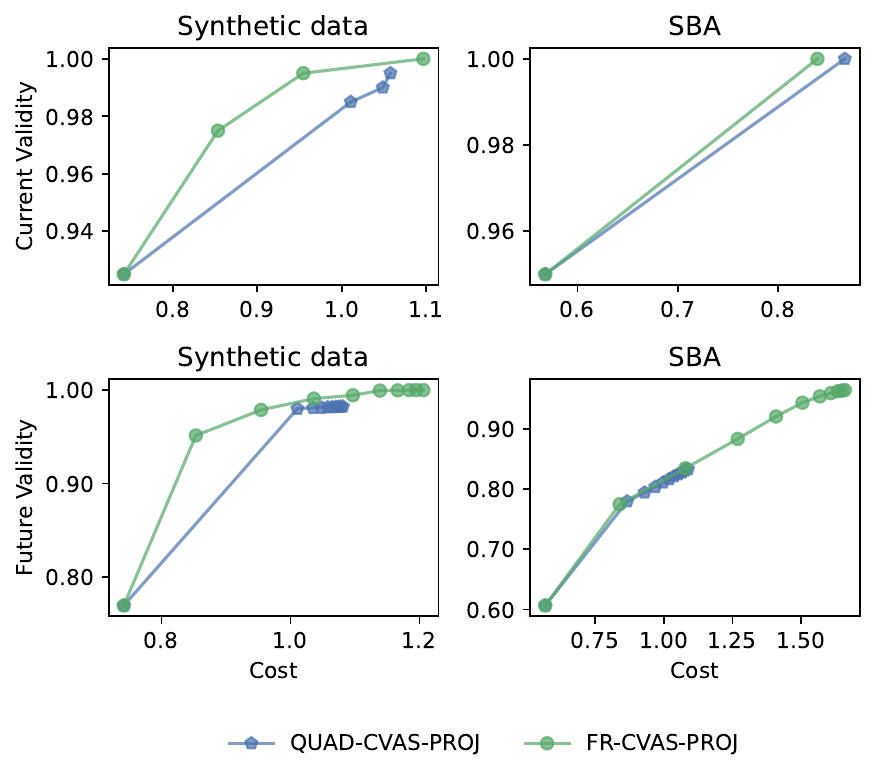}
        \caption{1000 synthesized samples.}
        \label{fig:cost-val-1000}
    \end{subfigure}
    \caption{The comparison of QUAD-CVAS-PROJ and FR-CVAS-PROJ at different sample sizes.}
    \label{fig:cost-val-low}
\end{figure}

\textbf{Robust recourses.} We revisit the recourse generation with covariance-robust CVASes using Quadratic distance (QUAD-CVAS-PROJ) and Fisher-Rao distance (FR-CVAS-PROJ), at which the surrogate is estimated with 50 and 1000 synthesized samples. We omit the comparison with the Bures distance to ease the presentation as it behaves asymptotically like the Quadratic surrogate. The results are shown in Figure~\ref{fig:cost-val-low}. The results showed that QUAD-CVAS-PROJ would be better at a low sample size. When increasing the number of samples, the recourses constructed with the Fisher-Rao surrogate exhibit a better cost-validity trade-off. This result is consistent with our previous observation in the local fidelity experiment.

\subsubsection{Sensitivity to Sampling Radius $r_p$}
To study the sensitivity of sampling radius hyperparameter $r_p$ (step (i)) to the recourse generation phase (step (iii)), we conduct an additional experiment by varying $r_p$. In this experiment, we fix $\rho_{+1} = 0$ and $\rho_{-1} = 10$, similar to the experiments described in Section~\ref{sec:expt:robust}. The results, illustrated in Figure~\ref{fig:r_p-recourse}, show a significant increase in cost for all evaluated datasets as $r_p$ increases. However, current and future validity exhibit different trends: validity decreases for the German dataset but increases for the SBA and Student datasets. 

\begin{figure}[h]
        \centering    
        \includegraphics[width=\linewidth]{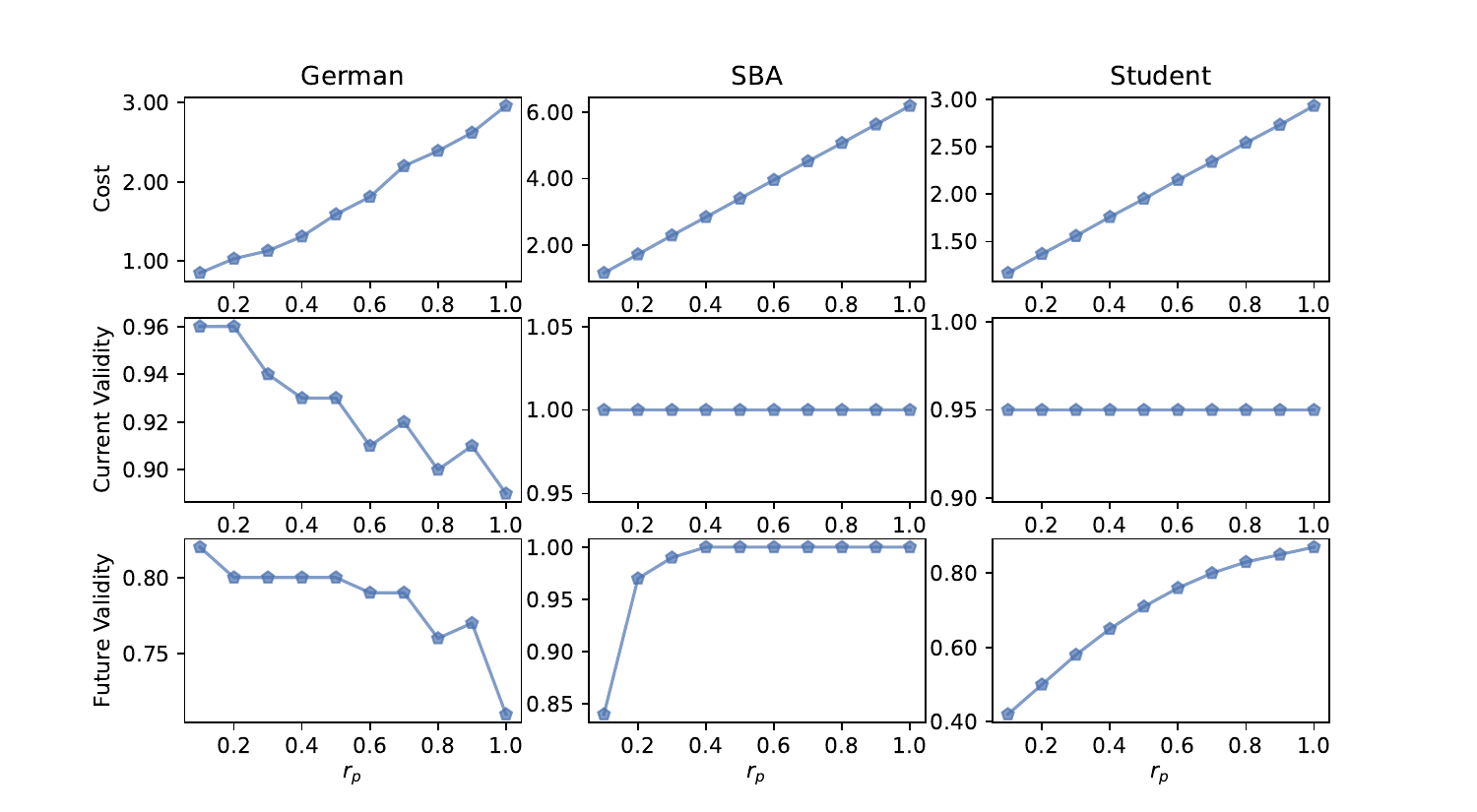}
        \caption{Impact of $r_p$ to cost, current validity, and future validity of recourse.}
        \label{fig:r_p-recourse}
\end{figure}

\section{Proofs}\label{sec:app:proofs}

\subsection{Proofs of Section~\ref{sec:MPM}}

\begin{proof}[Proof of Proposition~\ref{prop:MPM}]
    It follows from \cite[Theorem 10]{ref:bertsimas2000moment} that
    \[
        \Max{\PP_y \sim (\msa_y, \covsa_y)} \PP_y (X \in \mbb H_{\theta, y} ) = \frac{1}{1 + \nu_y^2},
    \]
    where $\nu_y^2 = \inf_{x \in \mbb H_{\theta, y}} (\msa_y - x)^\top \covsa_y^{-1} (\msa_y - x)$ is the squared distance from $\msa_y$ to the set $\mbb H_{\theta, y}$, under the Mahalanobis distance induced by the matrix $\covsa^{-1}_y$. Thus, the equivalence follows from the monotonicity of the square root and negative exponent functions:
    \begin{align*}
        \argmin_{\theta \in \Theta}~\Max{y \in \mc Y}~\Max{\Pnom_y\sim (\msa_y, \covsa_y)}\Pnom_y(\mc C_\theta(X) \neq y) 
        =& \argmin_{\theta \in \Theta} ~\max_{y \in \mc Y} ~\frac{1}{1 + \nu_y^2} \\
        =& \argmax_{\theta \in \Theta} ~\min_{y \in \mc Y} ~\inf_{x \in \mbb H_{\theta, y}} ~(\msa_y - x)^\top \covsa_y^{-1} (\msa_y - x) \\
        =& \argmax_{\theta \in \Theta} ~\min_{y \in \mc Y} ~\inf_{x \in \mbb H_{\theta, y}} ~\sqrt{(\msa_y - x)^\top \covsa_y^{-1} (\msa_y - x)} \\
        =& \argmax_{\theta \in \Theta} ~\min ~\{\mathrm{Va}_{\covsa_{-1}}(\theta),~ \mathrm{Co}_{\covsa_{+1}}(\theta)\}.
    \end{align*}
    Thus, the optimal solution of the CVAS obtained by solving~\eqref{eq:surrogate} coincides with the solution of the MPM problem~\eqref{eq:mpm}.
\end{proof}

\begin{proof}[Proof of Proposition~\ref{prop:refor}]
    Recall that $\mbb H_{\theta, y} = \{x\in\mathbb{R}^d: y (w^\top X - b) \ge 0 \} = \{x\in \mathbb{R}^d: \mc C_\theta (X) = y\}$ and that the ambiguity set is defined as
    \[ \mbb U_{y}^\varphi(\Pnom_y) =  \{ 
            \PP_y :~\PP_y \sim (\msa_y, \cov_y) \text{ for some }\cov_y \in \PSD^d \text{ with }\varphi(\cov_y \parallel \covsa_y) \le \rho_y \},
    \]
    where $\PP_y \sim (\msa_y, \cov_y)$ means that the the distribution $\PP_y$ has mean $\msa_y$ and covariance $\cov_y$. 
    In other words, each element $ \PP_y $ in the ambiguity $\mbb U_{y}^\varphi(\Pnom_y)$ is determined by first choosing a covariance matrix $\cov_y$ satisfying the divergence constraint $\varphi(\cov_y \parallel \covsa_y) \le \rho_y$ and then picking a distribution $\PP_y$ having mean $\msa_y$ and covariance $\cov_y$. 
    Therefore, the worst-case probability admits a two-layer decomposition
    \be\label{eq:max-decompose}
        \Max{\PP_y \in \mbb U_{y}^\varphi(\Pnom_y)}~\PP_y( \mc C_\theta(X) \neq y) =  \Max{\cov_y \in \PSD^d: \varphi(\cov_y \parallel \covsa_y) \le \rho_y}~\Max{\PP_y \sim (\msa_y, \cov_y)}~\PP_y(\mc C_\theta(X) \neq y).
    \ee
    Hence,
    \begin{align*}
        &\, \Max{y \in \mc Y}~\Max{\PP_y \in \mbb U_{y}^\varphi(\Pnom_y)}~\PP_y(\mc C_\theta(X) \neq y)\\
        = &\, \Max{y \in \mc Y}~\Max{\cov_y \in \PSD^d:\, \varphi(\cov_y \parallel \covsa_y) \le \rho_y}~\Max{\PP_y\sim (\msa_y, \cov_y)}~\PP_y(X \in \mbb H_{\theta, -y})\\
        = &\, \left(1 + \Min{y \in \mc Y}~\Min{\cov_y \in \PSD^d:\, \varphi(\cov_y \parallel \covsa_y) \le \rho_y}~\inf_{x \in \mbb H_{\theta, -y}} (\msa_y - x)^\top \cov_y^{-1} (\msa_y - x) \right)^{-1} \\
        = &\, \Bigg(1 + \min\Bigg\lbrace \Min{\cov_{+1} \in \PSD^d:\, \varphi(\cov_{+1} \parallel \covsa_{+1}) \le \rho_{+1}}~\inf_{x \in \mbb H_{\theta, {-1}}} (\msa_{+1} - x)^\top \cov_{+1}^{-1} (\msa_{+1} - x) ,\\
        &\,\qquad\qquad\qquad \Min{\cov_{-1} \in \PSD^d:\, \varphi(\cov_{-1} \parallel \covsa_{-1}) \le \rho_{-1}}~\inf_{x \in \mbb H_{\theta, {+1}}} (\msa_{-1} - x)^\top \cov_{-1}^{-1} (\msa_{-1} - x)\Bigg\rbrace  \Bigg)^{-1}\\
        = &\, \left( 1 + \min\left\{ \Min{\cov_{+1} \in \PSD^d:\, \varphi(\cov_{+1} \parallel \covsa_{+1}) \le \rho_{+1}}~\mathrm{Co}_{\cov_{+1}}(\theta), \Min{\cov_{-1} \in \PSD^d:\, \varphi(\cov_{-1} \parallel \covsa_{-1}) \le \rho_{-1}}~\mathrm{Va}_{\cov_{-1}}(\theta) \right\}\right)^{-1} \\
        = &\, \left( 1 + \Min{\cov_y \in \PSD^d:\, \varphi(\cov_y \parallel \covsa_y) \le \rho_y~\forall y\in \mc Y}~\min\left\{ \mathrm{Va}_{\cov_{-1}}(\theta),~\mathrm{Co}_{\cov_{+1}}(\theta)\right\}\right)^{-1},
    \end{align*}
    where the first equality follows the definition of $\mbb H_{\theta, y}$ and equality~\eqref{eq:max-decompose}, the second from~\cite[Theorem 10]{ref:bertsimas2000moment}, the third from the fact that $\mc Y = \{+1, -1\}$, the fourth from the definitions of $\mathrm{Co}_{\cov_{+1}}(\theta)$ and $\mathrm{Va}_{\cov_{-1}}(\theta)$, and the last from the fact that the minimization over $\cov_{+1}$ and $\cov_{-1}$ is separable. Thus, the optimization problems
    \[
        \Min{\theta \in \Theta}~\Max{y \in \mc Y}~\Max{\PP_y \in \mbb U_{y}^\varphi(\Pnom_y)}~\PP_y(\mc C_\theta(X) \neq y) \quad\text{and}\quad \Max{\theta \in \Theta}~\Min{\cov_y \in \PSD^d: \varphi(\cov_y \parallel \covsa_y) \le \rho_y~\forall y}~\min\left\{~\mathrm{Co}_{\cov_{+1}}(\theta), \mathrm{Va}_{\cov_{-1}}(\theta)\right\} ,
    \]
    share the same optimal solutions.
    In other words, the hyperplane obtained by solving~\eqref{eq:dro} coincides with the MPM under probability misspecification obtained by solving~\eqref{eq:dro_mpm}. 
    
    The remainder of this proof requires showing the optimal solution of the problem~\eqref{eq:dro_mpm}. 
    Towards that end, we note that
    \begin{equation}
        \begin{split}\label{eq:proof10}
            &\, \Min{\theta \in \Theta}~\Max{y \in \mc Y}~\Max{\PP_y \in \mbb U_{y}^\varphi(\Pnom_y)}~\PP_y(\mc C_\theta(X) \neq y) \\
            = &\, \Min{w\neq 0,\, b}~\Max{y\in \mc Y}~\Max{\PP_y \in \mbb U_{y}^\varphi(\Pnom_y)} \mbb P_y (y(w^\top X - b) \le 0) \\
            = &\, \Min{w\neq 0,\, b}~\Max{y\in \mc Y}~\Max{\cov_y \in \PSD^d: \varphi(\cov_y \parallel \covsa_y) \le \rho_y}~\Max{\PP_y\sim (\msa_y, \cov_y)}~\mbb P_y (y(w^\top X - b) \le 0) \\
            = &\, \Min{w\neq 0,\, b}~\Max{y\in \mc Y}~\Max{\cov_y \in \PSD^d: \varphi(\cov_y \parallel \covsa_y) \le \rho_y}~\left( 1 + \frac{(b - w^\top \msa_y )^2}{w^\top \cov_y w} \right)^{-1} \\
            = &\, \Min{w\neq 0,\, b}~\Max{y\in \mc Y}~\left( 1 + \frac{(b - w^\top \msa_y )^2}{\Max{\cov_y \in \PSD^d: \varphi(\cov_y \parallel \covsa_y) \le \rho_y}~w^\top \cov_y w} \right)^{-1} \\
            = &\, \Min{w\neq 0,\, b}~\Max{y\in \mc Y}~\left( 1 + \frac{(b - w^\top \msa_y )^2}{ (\tau_y^\varphi(w))^2 } \right)^{-1} \\
            = &\, \left( 1 + \left(\Max{w\neq 0,\, b}~\Min{y\in \mc Y}~\frac{\sign(b - w^\top \msa_y)(b - w^\top \msa_y) }{ \tau_y^\varphi(w) } \right)^2 \right)^{-1} ,
        \end{split}
    \end{equation}
    where the first equality follows from the definition of the classification rule $\mc C_\theta (X)$, the second from the decomposition~\eqref{eq:max-decompose}, the third from~\cite[Equation~(6)]{ref:lanckriet2001minimax}, the fourth from the fact that the map $t \mapsto (1+ t^{-1})^{-1}$ is monotonically increasing, the fifth from the definition of $\tau_y^\varphi(w)$, and the sixth from the fact that the map $t\mapsto (1 + t)^{-1}$ is monotonically decreasing. Using the same argument as in~\cite{ref:lanckriet2001minimax} (see equation~(4) and the discussions following it in~\cite{ref:lanckriet2001minimax}), we can show that the optimal $\theta = (w,b)$ must classify $\msa_y$ correctly, \ie, $y = \sign( w^\top \msa_y - b )$. Therefore, the max-min problem in the last line in the last display becomes
    \begin{equation}\label{eq:proof2}
        \Max{w\neq 0,\, b}~\Min{y\in \mc Y}~\frac{(w^\top \msa_y - b)y }{ \tau_y^\varphi(w) } ,
    \end{equation}
    which is equivalent to
    \be
    \label{opt:1}
        \begin{array}{cl}
            \max & \kappa \\
            \st & \kappa \in \R_+,~w \in \R^d\setminus\{0\},~b \in \R \\
            & y(w^\top \msa_y -b ) \ge \kappa \, \tau_y^\varphi(w) \qquad \forall y \in \mc Y.
        \end{array}
    \ee
    From the constraints, we get
    \begin{equation}\label{ineq:proof1}
        w^\top \msa_{+1} - \kappa \,\tau_{+1}^\varphi(w)  \ge b  \ge   w^\top\msa_{-1} + \kappa \,\tau_{-1}^\varphi(w).
    \end{equation}
    Since the objective value does not depend of $b$, we can eliminate the variable $b$ and reduce problem~\eqref{opt:1} to
    \be
    \label{opt:2}
        \begin{array}{cl}
            \max & \kappa \\
            \st & \kappa \in \R_+,~w \in \R^d\setminus\{0\} \\
            & w^\top \msa_{+1} - \kappa \,\tau_{+1}^\varphi(w)    \ge   w^\top\msa_{-1} + \kappa \,\tau_{-1}^\varphi(w).
        \end{array}
    \ee
    We claim that one can add the extra constraint $\sum_{y\in \mc Y } y\, w^\top\msa_y = 1$ to the problem without affecting the optimal value. Too see this, we first note that $\tau_y^\varphi(w)$ is positively homogeneous in $w$, \ie, $ \tau_y^\varphi(t\, w) = |t|\, \tau_y^\varphi( w) $ for any $t\in \mathbb{R}$. If $(\kappa^\star, w^\star)\in \mathbb{R}_+\times (\mathbb{R}^d\setminus \{0\})$ is an optimal solution, then the pair $ (\kappa^\star, t^\star w^\star )$ with $t^\star = (\sum_{y\in \mc Y } y\, {w^\star}^\top\msa_y)^{-1}$ has the same (optimal) objective value since the objective function is $\kappa$. Also, the pair $ (\kappa^\star, t^\star w^\star )$ satisfies the inequality constraint of \eqref{opt:2} since
    \begin{align*}
        &\, t^\star {w^\star}^\top \msa_{+1} - \kappa^\star \,\tau_{+1}^\varphi \left( t^\star w^\star  \right) \ge 
    t^\star\left( {w^\star}^\top \msa_{+1} - \kappa^\star \,\tau_{+1}^\varphi \left( w^\star  \right)\right) \\
    \ge &\,
    t^\star\left( {w^\star}^\top \msa_{-1} - \kappa^\star \,\tau_{-1}^\varphi \left( w^\star  \right)\right) \ge
    t^\star {w^\star}^\top\msa_{-1} + \kappa^\star \,\tau_{-1}^\varphi(t^\star w^\star) .
    \end{align*}
    So, $ (\kappa^\star, t^\star w^\star )$ is also an optimal solution to problem~\eqref{opt:2}. On the other hand, this optimal solution satisfies that
    \[
    \sum_{y\in \mc Y } y\, t^\star {w^\star}^\top\msa_y = (\sum_{y\in \mc Y } y\, {w^\star}^\top\msa_y)^{-1} \sum_{y\in \mc Y } y\,  {w^\star}^\top\msa_y = 1.
    \]
    This proves the claim. Hence, problem~\eqref{opt:2} is further equivalent to 
    \be
    \label{opt:3}
        \begin{array}{cl}
            \max & \kappa \\
            \st & \kappa \in \R_+,~w \in \R^d\setminus\{0\} \\
            & w^\top \msa_{+1} - \kappa \,\tau_{+1}^\varphi(w)    \ge   w^\top\msa_{-1} + \kappa \,\tau_{-1}^\varphi(w)\\
            & \displaystyle \sum_{y\in \mc Y } y\, w^\top\msa_y = 1.
        \end{array}
    \ee
    The inequality constraint in problem~\eqref{opt:3} is equivalent to
    \be\label{ineq:proof2}
        \kappa \le \frac{\sum_{y\in \mc Y } y\, w^\top \msa_y}{\sum_{y\in\mc Y} \tau_y^\varphi(w) }.
    \ee
    Thus, we can eliminate the variable $\kappa$ and rewrite problem~\eqref{opt:3} as
    \[
           \min~\left\{\displaystyle\sum_{y\in\mc Y} \tau_y^\varphi(w) ~:~ w \in \R^d,~\displaystyle\sum_{y\in \mc Y } y\, w^\top\msa_y = 1 \right\}.
    \]
    Finally, note that from \eqref{ineq:proof1} and \eqref{ineq:proof2}, at optimality, we have
    \[
    \kappa = \frac{ \sum_{y\in \mc Y } y\, w^\top \msa_y}{ \sum_{y\in\mc Y} \tau_y^\varphi(w) } = \frac{1}{\sum_{y\in\mc Y} \tau_y^\varphi(w)} ,
    \]
    and 
    \[ b = w^\top \msa_{+1} - \kappa \,\tau_{+1}^\varphi(w)  = w^\top\msa_{-1} + \kappa \,\tau_{-1}^\varphi(w). \]
    This completes the proof.
\end{proof}

\begin{proof}[Proof of Proposition~\ref{prop:gauss}]
    Let $\Phi$ be the cumulative distribution function of the standard Gaussian random variable, we have
    \begin{equation}\label{eq:proof1}
        \begin{split}
            &\, \Min{\theta \in \Theta}~\Max{y \in \mc Y}~\Max{\PP_y \in \mc U_{y}^{\mc N}(\Pnom_y)}~\PP_y(\mc C_\theta(X) \neq y) \\
            = &\, \Min{w\neq 0,\, b}~\Max{y\in \mc Y}~\Max{\PP_y \in \mc U_{y}^{\mc N}(\Pnom_y)}~\mbb P_y (y(w^\top X - b) \le 0) \\
            = &\, \Min{w\neq 0,\, b}~\Max{y\in \mc Y}~\Max{\substack{\cov_y \in \PSD^d:\, \varphi(\cov_y \parallel \covsa_y) \le \rho_y \\ \PP_y \sim \mathcal{N}(\msa_y, \cov_y)}}~\mbb P_y (y(w^\top X - b) \le 0) \\
            = &\, \Min{w\neq 0,\, b}~\Max{y\in \mc Y}~\Max{\cov_y \in \PSD^d:\, \varphi(\cov_y \parallel \covsa_y) \le \rho_y }~1 - \Phi\left( \frac{y(w^\top \msa_y -b)}{\sqrt{w^\top \cov_y w}} \right)  ,
        \end{split}
    \end{equation}
    where the first equality follows from the definition of the classification rule $\mc C_\theta (X)$, the second from the definition of the Gaussian ambiguity set $\mc U_{y}^{\mc N}(\Pnom_y)$, the third from the elementary fact that for Gaussian distribution $\mbb P_y\sim \mc N (\msa_u, \cov_y)$ the probability is given by
    \[
        \PP_y( y (w^\top X - b) \le 0) = 1 - \Phi\left( \frac{y(w^\top \msa_y -b)}{\sqrt{w^\top \cov_y w}} \right).
    \]
    We claim that $y(w^\top \msa_y -b) \ge 0$ at optimal $\theta^\star = (w^\star, b^\star)$. To see this, assume $y({w^\star}^\top \msa_y -b^\star) < 0$ for some $y\in \mc Y$. Then the optimal value is strictly bigger than $\frac{1}{2}$, but the pair $(-y \, \sign({w^\star}^\top (\msa_{-y} - \msa_{y})) w^\star, -y \, \sign({w^\star}^\top (\msa_{-y} - \msa_{y})) {w^\star}^\top \msa_y)$ would yield an objective value strictly smaller than $\frac{1}{2}$.
    Therefore,
    \begin{align*}
        &\,\Min{\theta \in \Theta}~\Max{y \in \mc Y}~\Max{\PP_y \in \mc U_{y}^{\mc N}(\Pnom_y)}~\PP_y(\mc C_\theta(X) \neq y) \\
        = &\, \Min{w\neq 0,\, b}~\Max{y\in \mc Y}~1 - \Phi\left( \frac{y(w^\top \msa_y -b)}{\sqrt{\Max{\cov_y \in \PSD^d:\, \varphi(\cov_y \parallel \covsa_y) \le \rho_y }~w^\top \cov_y w}} \right) \\
        = &\, \Min{w\neq 0,\, b}~\Max{y\in \mc Y}~1 - \Phi\left( \frac{y(w^\top \msa_y -b)}{\tau_y^\varphi(w)} \right) .
    \end{align*}
    where the first equality follows from~\eqref{eq:proof1} and the monotonicity of the map $t \mapsto 1 - \Phi (y(w^\top \msa_y -b)/\sqrt{t})$ established above and the second from the definition of $\tau_y^\varphi$. Since the cumulative distribution function $\Phi$ is monotonically increasing, the last min-max problem is equivalent to problem~\eqref{eq:proof2}, which from the proof of Proposition~\ref{prop:refor}, is equivalent to both problems~\eqref{eq:dro} and~\eqref{eq:dro_mpm}. Hence, problem~\eqref{eq:dro-Gauss} shares the same optimal solution as problem~\eqref{eq:dro}.
    This completes the proof.
\end{proof}

\subsection{Proofs of Section~\ref{sec:bures}}

We first prove Proposition~\ref{prop:bures} to lay the foundation for the proof of Theorem~\ref{thm:bures}.
\begin{proof}[Proof of Proposition~\ref{prop:bures}]
    By~\cite[Proposition~2.8]{ref:nguyen2018distributionally}, we have
    \[
        \tau_y^{\mathds B}(w)^2 = \Inf{ \dualvar I \succ ww^\top }~\dualvar (\rho_y - \Tr{\covsa_y}) + \dualvar^2 \inner{(\dualvar I - ww)^{-1}}{\covsa_y}.
    \]
    Using the Sherman-Morrison formula~\cite[Corollary~2.8.8]{ref:bernstein2009matrix}, we find
    \[
        (I - \frac{1}{\gamma}ww)^{-1} = I + \frac{ww^\top}{\gamma - \| w \|_2^2}.
    \]
    Notice that the constraint $\dualvar I \succ ww^\top$ is equivalent to $\dualvar > \| w\|_2^2$ by Schur complement. Thus, we have
    \[
        \tau_y^{\mathds B}(w)^2 =\Inf{\gamma > \| w \|_2^2}~ \gamma \rho_y + \gamma \frac{w^\top \covsa_y w}{\gamma - \| w\|_2^2}.
    \]
    Let $h(\gamma)$ be the objective function in the last display. Then,
    \begin{equation*}
        h' (\gamma) = \rho_y + \frac{w^\top \covsa_y w}{\gamma - \| w\|_2^2} - \gamma \frac{w^\top \covsa_y w}{(\gamma - \| w\|_2^2)^2} = \rho_y - \frac{ \| w\|_2^2 w^\top \covsa_y w }{(\gamma - \| w\|_2^2)^2},
    \end{equation*}
    and 
    \begin{equation*}
        h''(\gamma) = \frac{ 2 \| w\|_2^2 w^\top \covsa_y w }{(\gamma - \| w\|_2^2)^3} > 0 .
    \end{equation*}
    So, $h$ is a strictly convex function on $(\|w\|_2^2, +\infty)$. Since $h'(\gamma) < 0 $ as $\gamma \downarrow \|w\|_2^2$, the infimum is not attained at $\gamma = \|w\|_2^2$. The minimizer can be found by solving the first-order condition
    \[
        0 = h'(\gamma) = \rho_y - \frac{ \| w\|_2^2 w^\top \covsa_y w }{(\gamma - \| w\|_2^2)^2},
    \]
    which is equivalent to 
    \[
        \rho_y (\gamma - \| w\|_2^2)^2 = \| w\|_2^2 w^\top \covsa_y w.
    \]
    Thus, the optimal $\gamma$ is given by
    \[
        \gamma\opt = \| w\|_2^2 + \sqrt{\frac{w^\top \covsa_y w \|w \|_2^2}{\rho_y}},
    \]
    with the corresponding optimal value
    \[
        \tau_y^{\mathds B}(w)^2 = h(\gamma^\star) = \left(\rho_y \| w \|_2 + \sqrt{w^\top \covsa_y w} \right)^2.
    \]
    We thus have the desired result.
\end{proof}

We now prove Theorem~\ref{thm:bures}.
\begin{proof}[Proof of Theorem~\ref{thm:bures}]
    Using the Bures divergence $\mathds B$, the optimization problem
    \[
    \Min{w \in \mc W}~ \sum\nolimits_{y \in \mc Y} \tau_y^{\mathds B}(w)
    \]
    becomes problem~\eqref{eq:bures} by exploiting the analytical form of $\tau_y^{\mathds B}(w)$ in Proposition~\ref{prop:bures}. By invoking Proposition~\ref{prop:refor}, we obtain the postulated results on the optimal solution $\theta^{\mathds B}$  for the case of the Bures divergence.
\end{proof}

\subsection{Proofs of Section~\ref{sec:fr}}

We first provide the proof of Proposition~\ref{prop:FR}.

\begin{proof}[Proof of Proposition~\ref{prop:FR}]
Notice that
\[
    \tau_y^{\FR}(w)^2 = \max\left\{
    w^\top \cov_y w ~:~ \cov_y \in \PD^d,~\| \log (\covsa_y^{-\frac{1}{2}} \cov_y \covsa_y^{-\frac{1}{2}}) \|_F \le \rho_y
    \right\}.
\]
Using the transformation $Z_y \leftarrow \covsa_y^{-\frac{1}{2}} \cov_y \covsa_y^{-\frac{1}{2}}$,  we have 
\[
        \tau_y^{\FR}(w)^2  = \max \left\{ v^\top Z_y v~:~ \| \log Z_y \|_F \le \rho_y \right\} 
\]
with $v = \covsa_y^\half w$. We now show that the above optimization problem admits the maximizer
    \[
        Z_y\opt = UU^\top + \exp(\rho_y) \frac{ v v^\top }{ \| v \|_2^2 }, 
    \]
    where $U$ is an $d \times (d-1)$ orthonormal matrix whose columns are orthogonal to $v$. First, by~\cite[Lemma C.1]{ref:nguyen2019calculating}, the feasible region is compact. Since the objective function $v^\top Z_y v$ is continuous in $Z_y$, an optimal solution $Z_y\opt$ exists. Next, we first claim that the constraint holds with equality at optimality.\footnote{
    Alternatively, one can also prove this by using the theory of geodesic convexity. Indeed, it is well known that the set of positive definite matrices equipped with the Fisher-Rao distance $\mathbb{F}$ is a Hadamard manifold, see~\cite{yue2024geometric} for example. Although the set $\mc S =\{ \cov_y\in \PD^d: \| \log Z_y \|_F \le \rho_y \}$ is not convex in the Euclidean sense, it is a geodesically convex subset with respect to the Fisher-Rao distance~\cite{nguyen2019optimistic}. By~\cite[Lemma 13]{yue2024geometric}, the objective function $\cov_y \mapsto w^\top \cov_y w$ is geodesically convex. The Krein-Milman theorem for Hadamard manifolds~\cite{niculescu2007krein} then implies that the maximum is attained at the extreme points of $\mc S$~\cite{niculescu2007krein}. By the NPC inequality (see~\cite[Definition~1]{niculescu2007krein}), the extreme points of $\mc S$ are precisely $\mc S$ is $\{\cov_y\in \PD^d: \| \log Z_y \|_F = \rho_y \}$.
    Therefore, the maximizer $\cov^\star_y$ satisfies that $\| \log Z_y^\star \|_F = \rho_y$.} 
    Suppose that $ \| \log Z_y\opt \|_F < \rho_y$. Then, for some small $\delta >0$, the matrix $Z_y\opt + \delta\, vv^\top$ is feasible due to the continuity of the constraint function $\| \log Z_y \|_F$ and has a strictly better objective value than the optimal solution $Z_y\opt$. This violates the optimality of $Z_y\opt$. Hence, $ \| \log Z_y\opt \|_F = \rho_y $ for any optimal solution $Z_y\opt$, and the problem becomes
    \[
        \tau_y^{\FR}(w)^2  = \max \left\{ v^\top Z_y v~:~ \| \log Z_y \|_F = \rho_y \right\} ,
    \]
    which by eigenvalue decomposition $ Z_y = Q\text{Diag}(\lambda) Q^\top$, is equivalent to
    \[ 
        \begin{array}{cl}
        \max & v^\top Q \text{Diag}(\lambda) Q^\top v \\
        \st &  \sum_{i=1}^d (\log \lambda_i)^2 = \rho_y^2, \\
        & \lambda_1 \ge \cdots\ge \lambda_d >0,\ Q\in \mathcal{O}(d),
        \end{array} 
    \]
    where $\mathcal{O}(d)$ is the set of $d\times d$ orthogonal matrices.
    The objective function admits an upper bound $v^\top Q \text{Diag}(\lambda) Q^\top v\le \lambda_1 \|v\|_2^2$, 
    which can be attained by setting
    \begin{equation}
        \label{eq:proof3}
        Q = \left(\frac{v}{\|v\|_2}, \ U \right)\in \mathcal{O}(d).
    \end{equation}
    Hence, the optimal $Q$ is of the form~\eqref{eq:proof3} and the optimization problem is further equivalent to
    \[ 
        \begin{array}{cl}
        \max & \lambda_1 \|v\|_2^2 \\
        \st &  \sum_{i=1}^d (\log \lambda_i)^2 = \rho_y^2, \\
        & \lambda_1 \ge \cdots\ge \lambda_d >0.
        \end{array} 
    \]
    Since the objective function in the last display optimization problem depends only on $\lambda_1$ and is strictly increasing on $\lambda_1$, we thus want to have $\log \lambda_2 = \cdots = \log \lambda_d = 0$ and $\log \lambda_1$ to be as big as possible. The optimal $\lambda\in\R^d_{++}$ must satisfy $\lambda_2 = \cdots = \lambda_d = 1$ and $(\log \lambda_1)^2 = \rho_y^2$. Since $\lambda_1 \ge \lambda_2 = 1$, we have $\log \lambda_1 = \rho_y$ and hence $\lambda_1 = \exp(\rho_y)$. In other words,
    \begin{align*}
        Z_y\opt& = Q\text{Diag}(\lambda) Q^\top = \left(\frac{v}{\|v\|_2}, \ U \right)
        \begin{pmatrix}
        \exp(\rho_y) & & &\\
        & 1 &&\\
        && \ddots & \\
        && & 1
        \end{pmatrix}
        \left(\frac{v}{\|v\|_2}, \ U \right)^\top = UU^\top + \exp(\rho_y) \frac{ v v^\top }{ \| v \|_2^2 }.
    \end{align*}
    The corresponding optimal value is 
    \[\tau_y^{\FR}(w)^2 = v^\top Z_y\opt v = \exp(\rho_y)\, \|v \|_2^2  = \exp(\rho_y)\, w^\top \covsa_y w. \]
    This completes the proof.
\end{proof}

We are now ready to prove Theorem~\ref{thm:fr}.
\begin{proof}[Proof of Theorem~\ref{thm:fr}]
    Using the Fisher-Rao divergence $\mathds F$, the optimization problem
    \[
    \Min{w \in \mc W}~ \sum\nolimits_{y \in \mc Y} \tau_y^{\mathds F}(w)
    \]
    becomes problem~\eqref{eq:FR} by exploiting the analytical form of $\tau_y^{\mathds F}(w)$ in Proposition~\ref{prop:FR}. By invoking Proposition~\ref{prop:refor}, we obtain the postulated results on the optimal solution $\theta^{\mathds F}$ for the case of the Fisher-Rao divergence.
\end{proof}

\subsection{Proof of Section~\ref{sec:logdet}}

\begin{proof}[Proof of Proposition~\ref{prop:logdet}]
    By~\cite[Proposition~3.4]{ref:le2021adversarial}, we have
    \[
        \tau_y^{\mathds D}(w)^2 = \Inf{\substack{\gamma > 0 \\ \gamma \covsa_y^{-1} \succ ww^\top }}~ \gamma \rho_y - \gamma \log \det (I - \covsa_y^\half ww^\top \covsa_y^\half /\gamma).
    \]
    Using the matrix determinant formula~\cite{ref:bernstein2009matrix}, we have
    \[
        \det (I - \covsa_y^\half ww^\top \covsa_y^\half /\gamma) = (1 - w^\top \covsa_y w / \gamma).
    \]
    Notice that the constraint $\gamma \covsa_y^{-1} \succ ww^\top$ is equivalent to $\gamma > w^\top \covsa_y w$. Thus, the optimization problem is simplified to
    \[
        \tau_y^{\mathds D}(w)^2 =\Inf{\gamma > w^\top \covsa_y w}~ \gamma \rho_y - \gamma \log (1 - w^\top \covsa_y w / \gamma).
    \]
    The derivative of the objective function is
    \[
    \rho - \log \big(1 - \frac{w^\top \covsa_y w }{ \gamma} \big) - \frac{w^\top \covsa_y w} {\gamma - w^\top \covsa_y w},
    \]
    which is increasing for $\gamma > w^\top \covsa_y w$. The optimization problem is, therefore, a convex optimization. Noting that the objective value tends to $+\infty$ as $\gamma \to w^\top \covsa_y w$, the optimal solution must be given by the first-order optimality condition:
    \[
        \rho - \log \big(1 - \frac{w^\top \covsa_y w }{ \gamma} \big) - \frac{w^\top \covsa_y w} {\gamma - w^\top \covsa_y w} = 0,
    \]
    Solving this equation yields the optimal solution
    \[
    \gamma\opt = \frac{w^\top \covsa_y w}{1 + 1/W_{-1}(-\exp(-\rho_y-1))}.
    \]
    Replacing the value of $\gamma\opt$ into the objective function leads to the desired result.
\end{proof}

\subsection{Proof of Section~\ref{sec:asymptotic}}

\begin{proof}[Proof of Proposition~\ref{prop:asymptotic}]
First, consider the case that $\varphi$ is the Quadratic distance. Because the objective function of the problem~\eqref {eq:quad} is strictly convex and coercive in $w$, it has a unique optimal solution, and this solution coincides with the optimal solution $w\opt(\lambda)$ of the following second-order cone program
\[
    \Min{w \in \mc W}~ \ds \sqrt{\lambda w^\top \covsa_y w + w^\top w} + \sqrt{\lambda w^\top \covsa_{-y} w + \lambda \sqrt{\rho_{-y}} w^\top w},
\]
where $\lambda = 1/\sqrt{\rho_{y}}$. By a compactification of $\mc W$ and applying Berge's maximum theorem~\cite[pp.~115-116]{ref:berge1963topological}, the function $w\opt(\lambda)$ is continuous on a non-negative compact range of $\lambda$, and converges to $w\opt(0)$ as $\lambda \to 0$. The optimal solution $w\opt(0)$ coincides with the solution of
\be \label{eq:asymptotic}
    \Min{w \in \mc W}~\|w\|_2,
\ee
which is the Euclidean projection of the origin onto the hyperplane $\mc W$. 
Recall that the set of feasible slope is $\mc W = \{ w \in \R^d \backslash\{0\} : \sum_{y \in \mc Y} y w^\top \msa_y  = 1\}$.
An elementary argument via optimality condition confirms that
\[
    w\opt(0) = \frac{\sum_{y \in \mc Y} y \msa_y}{ \| \sum_{y \in \mc Y} y \msa_y\|_2^2 }.
\]
Letting $w_{\infty, y}^{\mathds Q} = w\opt(0)$, we have the asymptotic slope of the Quadratic surrogate. Since $\rho_{-y}$ remains constant and $\rho_y \to \infty$,
\begin{align*}
    \kappa \tau_{y}(w) &= \frac{\sqrt{w^\top \covsa_y w + \sqrt{\rho_y} \|w\|_2^2}}{\sqrt{w^\top \covsa_y w + \sqrt{\rho_y} \|w\|_2^2} + \sqrt{w^\top \covsa_{-y} w + \sqrt{\rho_{-y}} \|w\|_2^2}}  \\
    &= \frac{\sqrt{ \frac{w^\top \covsa_y w}{\sqrt{\rho_y}} +  \|w\|_2^2}}{\sqrt{ \frac{w^\top \covsa_y w}{\sqrt{\rho_y}} +  \|w\|_2^2} + \sqrt{ \frac{w^\top \covsa_{-y} w}{\sqrt{\rho_y}} + \frac{\sqrt{\rho_{-y}}}{\sqrt{\rho_y}} \|w\|_2^2}}  \to \frac{\|w\|_2}{\|w\|_2} = 1.
\end{align*}
By Proposition~\ref{prop:refor}, we have $b^\varphi = (w^\varphi)^\top \msa_{y} - y \kappa^\varphi \tau_{y}^\varphi(w^\varphi)$ for any $y\in \mc Y$. Therefore, $b_{\rho_y} \to b_{\infty, y} = w_{\infty, y}^\top \msa_{y} - y$.  

In case $\varphi$ is the Bures distance, the optimal solution of the problem~\eqref{eq:dro} coincides with the problem~\eqref{eq:bures}, which, by the same argument as for the Quadratic distance, admits a unique optimal solution $w\opt(\lambda)$ that coincides with the optimal solution of the second-order cone program
\[
    \Min{w \in \mc W}~ \ds \lambda \sum_{y \in \mc Y}\sqrt{w^\top \covsa_y w} + \|w\|_2,
\]
where $\lambda = 1/(\sum_{y \in \mc Y} \rho_y)$. The optimal solution $w\opt(0)$ also coincides with the solution of
\[ 
    \Min{w \in \mc W}~\|w\|_2.
\]
Hence, the Quadratic and Bures surrogates are asymptotically equivalent when one radius grows to infinity while the other is fixed.

Consider the case that $\varphi$ is Fisher-Rao or LogDet divergence. As the objective functions of the problem~\eqref{eq:FR} and~\eqref{eq:logdet} are both strictly convex and coercive, it has a unique solution. Thus, its optimal solution coincides with the optimal solution $w\opt(\lambda)$ of the following second-order cone program
\[
    \Min{w \in \mc W}~ \ds  \sqrt{w^\top \covsa_y w} + \lambda \sqrt{w^\top \covsa_{-y} w},
\]
where $\lambda = \exp \big( \frac{\rho_{-y} - \rho_{y}}{2} \big)$ if $\varphi$ is the Fisher-Rao distance and $\lambda = \sqrt{\frac{W_{-1}(-\exp(-\rho_{-y}-1))}{W_{-1}(-\exp(-\rho_y-1))}}$ if $\varphi$ is the LogDet distance. By a compactification of $\mc W$ and applying Berge's maximum theorem~\cite[pp.~115-116]{ref:berge1963topological}, the function $w\opt(\lambda)$ is continuous on a non-negative compact range of $\lambda$, and converges to $w\opt(0)$ as $\lambda \to 0$. The optimal solution $w\opt(0)$ coincides with the solution of
\[
    \Min{w \in \mc W}~\sqrt{w^\top \covsa_y w}.
\]
Because the square-root function is monotonically increasing, $w\opt(0)$ also solves
\[
    \Min{w \in \mc W}~w^\top \covsa_y w,
\]
which is a convex, quadratic program with a single linear constraint. Then a convex optimization argument implies
\[
w\opt(0) = \frac{1}{a^\top \covsa_y^{-1} a} \covsa_y^{-1} a,
\]
where $a = \sum_{y \in \mc Y} y \msa_{y}$ is as defined in the statement.
Thus, the asymptotic slope $w_{\infty, y}$ of the Fisher-Rao and LogDet surrogates converges to $w^\star(0)$. 

Using a similar calculation as in the case of Quadratic asymptotic surrogate, we observe $\kappa \tau_y(w) \to 1$ when $\rho_{-y}$ remains constant and $\rho_y \to \infty$. Consequently, the intercept $b_{\rho_y}$ also tends towards $b_{\infty, y} = w_{\infty, y}^\top \msa_{y} - y$. The asymptotic hyperplane defined by ($w_{\infty, y}, b_{\infty, y})$ is then characterized by the linear equation $w_{\infty, y}^\top x - w_{\infty, y}^\top \msa_{y} + y = 0$. This equation identifies a hyperplane passing through $\msa_{-y}$ as $\sum_{y \in \mathcal{Y}} y w^\top \msa_y = 1$.
\end{proof}

Before proving Proposition~\ref{prop:tradeoff}, we present Lemma~\ref{lemma:maha-proj} that computes the Mahalanobis distance from a vector to a set specified by a hyperplane. A short proof is provided for completeness.

\begin{lemma}[Projection distance]\label{lemma:maha-proj}
    Given a positive definite matrix $\covsa \in \PD^d$ and $(w, b) \in \R^{d+1}$ such that $w \neq 0$, the Mahalanobis distance from a vector $\msa \in \R^d$ to the set $\{x \in \R^d: w^\top x - b \geq 0\}$ is
    \[
        \min\left\{\sqrt{(\msa - x)^\top \covsa^{-1} (\msa - x)}: x \in \R^d,~w^\top x - b \geq 0 \right\}
        = \begin{dcases}
            \frac{|w^\top \msa - b |}{\sqrt{w^\top \covsa w}} & ~\text{if}~ w^\top \msa - b < 0, \\
            0 & ~\text{otherwise}.
        \end{dcases}
    \]
\end{lemma}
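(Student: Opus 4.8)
The plan is to treat the two branches of the formula separately. If $w^\top \msa - b \ge 0$, then $\msa$ itself lies in the halfspace $\{x \in \R^d : w^\top x - b \ge 0\}$, so the infimum of the Mahalanobis distance is attained at $x = \msa$ and equals $0$; this disposes of the ``otherwise'' case.

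For the remaining case $w^\top \msa - b < 0$, I would prove the formula by a matching lower bound and an explicit minimizer. For the lower bound, let $x$ be any feasible point and write
\[
    b - w^\top \msa \;\le\; w^\top(x - \msa) \;=\; (\covsa^\half w)^\top \big( \covsa^{-\half}(x - \msa) \big) \;\le\; \sqrt{w^\top \covsa w}\,\sqrt{(\msa - x)^\top \covsa^{-1}(\msa - x)},
\]
where the last step is the Cauchy--Schwarz inequality and $\covsa^{\pm \half}$ are well defined because $\covsa \succ 0$. Since $w \ne 0$ and $\covsa \succ 0$ force $w^\top \covsa w > 0$, and since $b - w^\top \msa > 0$ in this case, dividing shows that every feasible $x$ satisfies $\sqrt{(\msa - x)^\top \covsa^{-1}(\msa - x)} \ge (b - w^\top \msa)/\sqrt{w^\top \covsa w} = |w^\top \msa - b|/\sqrt{w^\top \covsa w}$. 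For the reverse inequality I would exhibit the candidate $x^\star = \msa + \frac{b - w^\top \msa}{w^\top \covsa w}\,\covsa w$; a direct computation gives $w^\top x^\star - b = 0$, so $x^\star$ is feasible (it sits on the boundary), while $\msa - x^\star = -\frac{b - w^\top \msa}{w^\top \covsa w}\covsa w$ yields $(\msa - x^\star)^\top \covsa^{-1}(\msa - x^\star) = (b - w^\top \msa)^2/(w^\top \covsa w)$, so the lower bound is attained and the claimed value is indeed the minimum.

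An equivalent route is the change of variables $u = \covsa^{-\half}(x - \msa)$, which reduces the problem to computing the Euclidean distance from the origin to the halfspace $\{u : (\covsa^\half w)^\top u \ge b - w^\top \msa\}$ and then quoting the standard distance-to-a-halfspace formula. Either way, there is no genuine obstacle here: the only points needing a word of care are that positive definiteness of $\covsa$ makes $\covsa^{\pm \half}$ (and hence the whole argument) legitimate, in particular $w^\top \covsa w > 0$, and that the sign hypothesis $w^\top \msa - b < 0$ is precisely what makes $b - w^\top \msa$ a positive quantity, so that the division preserves the inequality and the Cauchy--Schwarz equality case is realizable.
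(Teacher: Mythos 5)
Your proof is correct. It differs from the paper's in execution: the paper whitens the problem via the substitution $x' = \covsa^{-\half}(\msa - x)$, $w' = \covsa^{\half} w$, reducing it to the Euclidean distance from the origin to a halfspace, and then simply quotes the standard distance-to-hyperplane formula; you instead argue from first principles, obtaining the lower bound $\sqrt{(\msa - x)^\top \covsa^{-1} (\msa - x)} \ge (b - w^\top \msa)/\sqrt{w^\top \covsa w}$ for every feasible $x$ via Cauchy--Schwarz after the same $\covsa^{\pm\half}$ factorization, and then exhibiting the boundary point $x^\star = \msa + \frac{b - w^\top \msa}{w^\top \covsa w}\,\covsa w$ that attains it (you also note the whitening route as an alternative, which is exactly the paper's argument). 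The trade-off is minor: the paper's reduction is shorter but leans on the quoted Euclidean fact, whereas your version is self-contained and has the side benefit of producing the explicit minimizer, which is the same expression that reappears as the optimal mean $\m\opt$ in the paper's Lemma on mean robustification. Both treatments of the trivial branch $w^\top \msa - b \ge 0$ coincide, and your verification that $x^\star$ is feasible and achieves the bound is accurate.
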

\begin{proof}[Proof of Lemma~\ref{lemma:maha-proj}]
    If $w^\top \msa - b \geq 0$, then clearly $x = \msa$ is the optimal solution to the minimization problem, and the optimal value is $0$. It suffices now to consider the case when $\w^\top \msa - b < 0$. Using a transformation $x' \leftarrow \covsa^{-\half}(\msa - x)$, $w'\leftarrow \covsa^{\half} w$, and $b' \leftarrow b - w^\top \msa$, we find
    \begin{align*}
        \min\left\{(\msa - x)^\top \covsa^{-1} (\msa - x): x \in \R^d,~w^\top x - b \geq 0 \right\} &= \min \left\{x'^\top x'~:~x' \in \R^d,~w'^\top x' - b' \geq 0\right\} = \frac{(b')^2}{\| w' \|_2^2},
    \end{align*}
        where the last equality follows from the geometric fact that the distance from the origin to a hyperplane $u^\top x - t=0$ defined by the unit-length normal vector $u$ and intercept $t$ is precisely $t$.
\end{proof}

We are now ready to prove Proposition~\ref{prop:tradeoff}.

\begin{proof}[Proof of Proposition~\ref{prop:tradeoff}]
    From the definitions of $\mathrm{Co}$ and Lemma~\ref{lemma:maha-proj}, we have
    \[
        \mathrm{Co}_{\covsa_{+1}}(\theta)
        = \begin{dcases}
            \frac{|w^\top \msa_{+1} - b |}{\sqrt{w^\top \covsa_{+1} w}} & ~\text{if}~ w^\top \msa_{+1} - b > 0, \\
            0 & ~\text{otherwise}.
        \end{dcases}
    \]
    As both $\theta_\rho = (w_\rho, b_\rho)$ and $\theta_{\rho'} = (w_{\rho'}, b_{\rho'})$ are the optimal solutions of the problem~\eqref{eq:dro}, we can deduce that the hyperplanes induced by $\theta_\rho$ and $\theta_{\rho'}$ classify $\msa_{+1}$ correctly, thus eliminating the case $\mathrm{Co}(\theta) = 0$, which would never be the optimal value. So, at the optimal $\theta = (w,b)$, we have
    \begin{equation}
        \label{eq:proof7}
        \mathrm{Co}_{\covsa_{+1}}(\theta) = \frac{|w^\top \msa_{+1} - b |}{\sqrt{w^\top \covsa_{+1} w}} .
    \end{equation}
    Similarly, at the optimal $\theta = (w,b)$, we have 
    \begin{equation}
        \label{eq:proof8}
         \mathrm{Va}_{\covsa_{-1}}(\theta) = \frac{|w^\top \msa_{-1} - b |}{\sqrt{w^\top \covsa_{-1} w}}.
    \end{equation}
    
    We will only prove~\ref{prop:tradeoff-i}, as \ref{prop:tradeoff-ii} can be proved almost verbatim.
    We first prove the coverage inequality in~\ref{prop:tradeoff-i}. Let $c(\rho_y) = \exp(\frac{\rho_y}{2})$ for the Fisher-Rao divergence and $c(\rho_y) = \sqrt{- W_{-1} (-\exp(-\rho_y - 1)) }$ for the LogDet divergence. Note that in both cases, $c(\rho_y)$ is a strictly increasing function on $[0,+\infty)$.
    Since $\rho_{+1} = \rho'_{+1} = 0$, 
    \begin{equation}
        \begin{split}\label{eq:proof4}
            \tau_{\rho', +1} (w) &= \Max{\cov_{+1} \in \PSD^d: \varphi(\cov_{+1} \parallel \covsa_{+1}) \le \rho'_{+1}} \sqrt{w^\top \cov_{+1} w} = \sqrt{w^\top \covsa_{+1} w} \\
            & = \Max{\cov_{+1} \in \PSD^d: \varphi(\cov_{+1} \parallel \covsa_{+1}) \le \rho_{+1}} \sqrt{w^\top \cov_y w} = \tau_{\rho, {+1}} (w) \quad \forall w \in \mc W.
        \end{split}
    \end{equation}
    Also, 
    \begin{equation}
        \label{eq:proof5}
        \begin{split}
            \tau_{\rho', -1} (w) &= \Max{\cov_{-1} \in \PSD^d: \varphi(\cov_{-1} \parallel \covsa_{-1}) \le \rho'_{-1}} \sqrt{w^\top \cov_{-1} w} = c(\rho'_{-1})\, \sqrt{w^\top \covsa_{-1} w} \\
            & < c(\rho_{-1})\, \sqrt{w^\top \covsa_{-1} w} = \Max{\cov_{-1} \in \PSD^d: \varphi(\cov_{-1} \parallel \covsa_{-1}) \le \rho_{-1}} \sqrt{w^\top \cov_{-1} w} = \tau_{\rho, {-1}} (w) \quad \forall w \in \mc W,
        \end{split}
    \end{equation}
    where the expressions for the maximum values follow from Proposition~\ref{prop:FR} and the inequality follows from the strict monotonicity of $c(\rho_y)$ and the fact that $\rho_{-1} > \rho'_{-1}$.
    Therefore,
    \begin{equation}
        \label{eq:proof6}
        (\kappa_{\rho'})^{-1} = \sum_{y \in \mc Y} \tau_{\rho', y}(w_{\rho'}) \leq \sum_{y \in \mc Y} \tau_{\rho', y}(w_\rho) < \sum_{y \in \mc Y} \tau_{\rho, y}(w_\rho) = \kappa_\rho^{-1},
    \end{equation}
    where first inequality follows from the fact that $(w_{\rho'}, b_{\rho'})$ is the optimum solution for problem~\eqref{eq:FR} with the radii $\rho'$ and the second from~\eqref{eq:proof4} and~\eqref{eq:proof5}. Hence, $\kappa_{\rho} < \kappa_{\rho'}$. 
    By Proposition~\ref{prop:refor}, at the optimal $\theta = (w, b)$, we have that $b = w^\top \msa_{+1} - \kappa \,\tau_{+1} (w)  = w^\top\msa_{-1} + \kappa \,\tau_{-1} (w)$. Therefore, for any $y\in \mc Y$,
    \begin{equation*}
        |w^\top \msa_{y} - b| = |w^\top \msa_{y} - w^\top \msa_{y} + y \kappa \tau_y (w)| = \kappa \tau_y (w),
    \end{equation*}
    which implies that
    \[
        \kappa = \frac{|w^\top \msa_{y} - b|}{\Max{\cov_y \in \PSD^d: \varphi(\cov_y \parallel \covsa_y) \le \rho_y}~\sqrt{  w^\top \cov_y w}} = \Min{\cov_y \in \PSD^d: \varphi(\cov_y \parallel \covsa_y) \le \rho_y}~\frac{|w^\top \msa_{y} - b|}{\sqrt{w^\top \cov_y w}}.
    \]
    Since $\rho_{+1} = \rho'_{+1} = 0$, inequality~\eqref{eq:proof6} and expression~\eqref{eq:proof7} together imply that
    \[
        \mathrm{Co}_{\covsa_{+1}}(\theta_\rho) = \frac{|w_\rho^\top \msa_{+1} - b_\rho|}{\sqrt{w_\rho^\top \covsa_{+1} w_\rho}} < \frac{|w_{\rho'}^\top \msa_{+1} - b_{\rho'}|}{\sqrt{w_{\rho'}^\top \covsa_{+1} w_{\rho'}}} = \mathrm{Co}_{\covsa_{+1}}(\theta_{\rho'}).
    \]
    Using the same arguments, we can prove the \emph{validity} inequality in~\ref{prop:tradeoff-ii}.
    
    We next show the validity inequality in~\ref{prop:tradeoff-i}. By Theorems~\ref{thm:fr} and~\ref{thm:logdet}, at the optimal solution $\theta = (w,b)$,
    \[
        \kappa = \left( \sum_{y \in \mc Y} c(\rho_y) \sqrt{ w^\top \covsa_y w}\right)^{-1} \quad\text{and}\quad b = w^\top \msa_{-1} + \kappa \, c(\rho_{-1}) \sqrt{ w^\top \covsa_{-1} w},
    \]
    Combining with the expression~\eqref{eq:proof8} of $\mathrm{Va}$, we have
    \be \label{eq:ro_dist}
        \mathrm{Va}_{\covsa_{-1}}(\theta_\rho) = \frac{|w^\top \msa_{-1} - b |}{\sqrt{w^\top \covsa_{-1} w}} = \frac{\kappa c(\rho_{-1}) \sqrt{ w^\top \covsa_{-1} w}}{\sqrt{w^\top \covsa_{-1} w}} = c(\rho_{-1}) \kappa.
    \ee
    Furthermore, for both FR and LogDet divergences
    \begin{align*}
        c(\rho_{-1}) \kappa_\rho  &= \frac{c(\rho_{-1})}{ c(\rho_{-1}) \sqrt{w_\rho^\top \covsa_{-1} w_\rho} + c(\rho_{+1}) \sqrt{w_\rho^\top \covsa_{+1} w_\rho}} \\
        &\geq \frac{c(\rho_{-1})}{c(\rho_{-1}) \sqrt{w_{\rho'}^\top \covsa_{-1} w_{\rho'}} + c(\rho_{+1}) \sqrt{w_{\rho'}^\top \covsa_{+1} w_{\rho'}}} \\
        &= \frac{c(\rho'_{-1}) c(\rho_{-1})}{c(\rho'_{-1}) c(\rho_{-1}) \sqrt{w_{\rho'}^\top \covsa_{-1} w_{\rho'}} + c(\rho'_{-1})c(\rho_{+1}) \sqrt{w_{\rho'}^\top \covsa_{+1} w_{\rho'}}} \\
        &> \frac{c(\rho'_{-1})c(\rho_{-1})}{c(\rho'_{-1})c(\rho_{-1}) \sqrt{w_{\rho'}^\top \covsa_{-1} w_{\rho'}} + c(\rho_{-1})c(\rho'_{+1}) \sqrt{w_{\rho'}^\top \covsa_{+1} w_{\rho'}}} \\
        &= \frac{c(\rho'_{-1})}{c(\rho'_{-1}) \sqrt{w_{\rho'}^\top \covsa_{-1} w_{\rho'}} + c(\rho'_{+1}) \sqrt{w_{\rho'}^\top \covsa_{+1} w_{\rho'}}} = c(\rho'_{-1})\kappa_{\rho'},
    \end{align*}
    where the first inequality follows from the fact that $(w_\rho, b_\rho)$ is the optimal solution for the problem~\eqref{eq:FR} or problem~\eqref{eq:logdet} with parameter set $\rho$ and the second from the strict monotonicity of $c(\rho_y)$ and the fact that $\rho_{+1} = \rho'_{+1} = 0$ and $\rho_{-1} > \rho'_{-1}$. Combining the last display inequality with~\eqref{eq:ro_dist}, we thus have
    \[
        \mathrm{Va}_{\covsa_{-1}}(\theta_\rho) > \mathrm{Va}_{\covsa_{-1}}(\theta_{\rho'}).
    \]
    Using the same arguments, we can prove the \emph{coverage} inequality in~\ref{prop:tradeoff-ii}. This completes the proof.
\end{proof}

\section{Surrogates with Mean Ambiguity}

In this section, we will argue that under two very natural assumptions, further robustification with respect to the mean $\m_y$ on top of the covariance robustification will not affect the final recourse generated by our framework. To begin, we consider the CVAS problem with both mean  and covariance uncertainty:
\begin{equation}\label{opt:mean-cov-robust}
    \Max{\theta \in \Theta}~\Min{\cov_y \in \PSD^d: \varphi(\cov_y \parallel \covsa_y) \le \rho_y~\forall y}~\Min{\m_y \in \R^d: (\m_y - \msa_y)^\top \cov_y^{-1} (\m_y - \msa_y) \le \nu_y^2}~\min\left\{~\mathrm{Co}_{\cov_{+1}}(\theta), \mathrm{Va}_{\cov_{-1}}(\theta)\right\}.
\end{equation}
Similarly to the $\mbb U_y^{\varphi} (\Pnom_y)$ in the main paper,  for each class $y \in \mc Y$, we define ambiguity set
\begin{equation*}
    \mc U_{y}^\varphi(\Pnom_y) = \{ 
            \PP_y :~\PP_y \sim (\m_y, \cov_y) \text{ for some }\cov_y \in \PSD^d \text{ with }\varphi(\cov_y \parallel \covsa_y) \le \rho_y,~(\m_y - \msa_y)^\top \cov_y^{-1} (\m_y - \msa_y) \le \nu_y^2
        \}.
\end{equation*}
Compared with the ambiguity sets we studied in the main paper, these ambiguity sets have an additional variable $\m_y$ that is constrained to reside in an ellipsoid defined by the covariance $\cov_y$ and the radius parameter $\nu_y \ge 0$. We assume that there exists some $\theta = (w,b) \in \Theta$ such that
\begin{equation}\label{assu:small_radii}
    \nu_y < \frac{|w^\top \msa_y - b|}{\sqrt{\displaystyle\max_{ \varphi(\cov_y \parallel \covsa_y) \le \rho_y }  w^\top \cov_y w}} = \frac{|w^\top \msa_y - b|}{\tau^{\varphi}_y(w)} \quad\forall y\in \mc Y.
\end{equation}
Denote by $\mathrm{dist}_\cov$ the Mahalanobis distance induced by a positive definite matrix $\cov$. Then, by Lemma~\ref{lemma:maha-proj}, \eqref{assu:small_radii} is equivalent to 
\[
\nu_y < \Min{ \varphi(\cov_y \parallel \covsa_y) \le \rho_y } \mathrm{dist}_{\cov_y} ( \msa_y, \{x:w^\top x = b\} ) \quad\forall y\in\mc Y.
\]
Therefore, geometrically, this assumption requires that the two mean uncertainty ellipsoids do not overlap (even under the worst-case covariance matrices) so that they can be separated by at least one hyperplane.
If such an assumption does not hold, as we will see below, the optimal value of problem~\eqref{eq:dro} is always $0$, which is the uninteresting case since the objective value is non-negative.

With the mean-covariance ambiguity set $\mc U_{y}^\varphi(\Pnom_y)$, following a similar derivation as in the proof of Proposition~\ref{prop:refor}, we can show that the mean-covariance robust variant~\eqref{opt:mean-cov-robust} is equivalent to
\begin{align*}
    &\, \Min{\theta \in \Theta}~\Max{y \in \mc Y}~\Max{\PP_y \in \mc U_{y}^\varphi(\Pnom_y)}~\PP_y(\mc C_\theta(X) \neq y) \\
    =&\,  \Min{w\neq 0,\, b}~\Max{y\in \mc Y}~\Max{\cov_y \in \PSD^d: \varphi(\cov_y \parallel \covsa_y) \le \rho_y}~\Max{\m_y \in \R^d: (\m_y - \msa_y)^\top \cov_y^{-1} (\m_y - \msa_y) \le \nu_y^2}~\left( 1 + \frac{(b - w^\top \m_y )^2}{w^\top \cov_y w} \right)^{-1}. 
\end{align*}
By Lemma~\ref{lemma:quad} presented below and the assumption~\eqref{assu:small_radii} about the radii $\nu_y$, we have that 
\begin{align*}
    \Max{\m_y \in \R^d: (\m_y - \msa_y)^\top \cov_y^{-1} (\m_y - \msa_y) \le \nu_y^2}~\left( 1 + \frac{(b - w^\top \m_y )^2}{w^\top \cov_y w} \right)^{-1} = \left( 1 + \left(\frac{|b - w^\top \msa_y| }{\sqrt{w^\top \cov_y w}} - \nu_y \right)^2\right)^{-1}.
\end{align*}
So, mean-covariance robust variant~\eqref{opt:mean-cov-robust} is further equivalent to
\begin{align*}
    &\, \Min{w\neq 0,\, b}~\Max{y\in \mc Y}~\Max{\cov_y \in \PSD^d: \varphi(\cov_y \parallel \covsa_y) \le \rho_y}~\left( 1 + \left(\frac{|b - w^\top \msa_y| }{\sqrt{w^\top \cov_y w}} - \nu_y \right)^2\right)^{-1} \\
    = &\, \Min{w\neq 0,\, b}~\Max{y\in \mc Y}~\left( 1 + \left(\frac{|b - w^\top \msa_y| }{ \tau^\varphi_y(w) } - \nu_y \right)^2 \right)^{-1} ,
\end{align*}
where we have again used assumption~\eqref{assu:small_radii}. If we choose the same radii for both classes (\ie, $\nu_y = \nu > 0$ for all $y\in\mc Y$), which is natural when there is no additional information about the mean uncertainty, then the minimax problem in the last display becomes 
\begin{align*}
    \Min{w\neq 0,\, b}~\Max{y\in \mc Y}~\left( 1 + \left(\frac{|b - w^\top \msa_y| }{ \tau^\varphi_y(w) } - \nu_y \right)^2 \right)^{-1} &\, = \left( 1 + \left(\Min{w\neq 0,\, b}~\Max{y\in \mc Y}~ \left\lbrace\frac{|b - w^\top \msa_y| }{ \tau^\varphi_y(w) } - \nu_y \right\rbrace\right)^2 \right)^{-1} \\
    &\, = \left( 1 + \left(\Min{w\neq 0,\, b}~\Max{y\in \mc Y}~ \left\lbrace\frac{|b - w^\top \msa_y| }{ \tau^\varphi_y(w) }  \right\rbrace - \nu \right)^2 \right)^{-1},
\end{align*}
leading us back to the same problem as in Proposition~\ref{prop:refor} (see~\eqref{eq:proof10}). Therefore, we conclude that under the two natural assumptions~\eqref{assu:small_radii} and $\nu_{+1} = \nu_{-1}$, further robustifying with respect to the mean will not affect the surrogate. Consequentially, it will not affect the recourse generation.

The following lemma studies the optimization problem arising from the mean robustification and is used in our analysis above.

\begin{lemma}[Optimal mean] \label{lemma:quad}
    Fix any $(w, b) \in \R^{d+1}$, $w \neq 0$, $\msa \in \R^d$, $\cov \in \PD^d$ and $\nu \in \R_+$. If $|b - w^\top \msa| \le \nu \sqrt{w^\top \cov w}$, we have that
    \[
    \Min{\m \in \R^d: (\m - \msa)^\top \cov^{-1} (\m - \msa) \le \nu^2}~ (b - w^\top \m)^2 = 0,
    \]
    and that the minimum is attained at 
    \[
        \m\opt =  \frac{(b - w^\top \msa )}{w^\top \cov w} \cov w + \msa. 
    \]
    If $|b - w^\top \msa| > \nu \sqrt{w^\top \cov w}$, we have that
    \[
    \Min{\m \in \R^d: (\m - \msa)^\top \cov^{-1} (\m - \msa) \le \nu^2}~ (b - w^\top \m)^2 = (|b - w^\top \msa| - \nu \sqrt{w^\top \cov w})^2,
    \]
    and that the minimum is attained uniquely at 
    \[
        \m\opt  = \frac{\sign(b- w^\top \msa) \nu}{\sqrt{w^\top \cov w}} \cov w + \msa.
    \]
\end{lemma}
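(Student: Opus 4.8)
The plan is to whiten the constraint ellipsoid and thereby reduce the problem to an elementary one-dimensional minimization. First I would introduce the change of variables $u = \covsa^{-\half}(\m - \msa)$, which is a bijection of $\R^d$ onto itself mapping the feasible set $\{\m : (\m-\msa)^\top \covsa^{-1}(\m-\msa) \le \nu^2\}$ onto the Euclidean ball $\{u : \|u\|_2 \le \nu\}$. Writing $c \Let b - w^\top \msa$ and $v \Let \covsa^{\half} w$, one has $\|v\|_2^2 = w^\top \covsa w > 0$ (here both $w \neq 0$ and $\covsa \in \PD^d$ are used, so all the divisions below are legitimate) and $b - w^\top \m = c - v^\top u$. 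Hence the problem becomes $\min\{(c - v^\top u)^2 : \|u\|_2 \le \nu\}$, and since by Cauchy--Schwarz the quantity $v^\top u$ sweeps the interval $[-\nu\|v\|_2,\, \nu\|v\|_2]$ as $u$ ranges over the ball, this equals $\min\{(c-t)^2 : |t| \le \nu\|v\|_2\}$.

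Second I would split into the two cases of the statement. If $|c| \le \nu\|v\|_2$, i.e.\ $|b - w^\top\msa| \le \nu\sqrt{w^\top\covsa w}$, then $t=c$ is feasible and the minimum is $0$; it is attained (non-uniquely) by $u = (c/\|v\|_2^2)\,v$, whose norm is $|c|/\|v\|_2 \le \nu$, and transforming back gives $\m\opt = \msa + \frac{b - w^\top\msa}{w^\top\covsa w}\,\covsa w$. If instead $|c| > \nu\|v\|_2$, the vertex $t=c$ of the parabola lies outside the interval, so $(c-t)^2$ is strictly monotone on $[-\nu\|v\|_2,\,\nu\|v\|_2]$ and attains its minimum $(|c| - \nu\|v\|_2)^2$ uniquely at the endpoint $t = \sign(c)\,\nu\|v\|_2$. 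The equality case of Cauchy--Schwarz, together with the fact that the norm constraint must then be active, forces $u = \sign(c)\,\nu\, v/\|v\|_2$; transforming back yields the unique minimizer $\m\opt = \msa + \frac{\sign(b - w^\top\msa)\,\nu}{\sqrt{w^\top\covsa w}}\,\covsa w$, with optimal value $(|b - w^\top\msa| - \nu\sqrt{w^\top\covsa w})^2$, as claimed.

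I do not anticipate a genuine obstacle here; the computation is routine. The only points that require a little attention are: checking that the bijection $\m \mapsto u$ transports the uniqueness established in the $u$-variable to uniqueness in the $\m$-variable in the second case; verifying feasibility of the two explicit candidate points by direct substitution (which in the first case recovers exactly the boundary condition $|b - w^\top\msa| \le \nu\sqrt{w^\top\covsa w}$, and in the second case places $\m\opt$ on the ellipsoid boundary); and observing that in the first case the lemma asserts only that the minimum \emph{is attained at} the displayed point, not uniquely so, which is consistent with the non-uniqueness of the $u$'s solving $v^\top u = c$ inside the ball. An alternative route through the KKT conditions of this convex quadratic program over a ball would also work, but the whitening argument is more transparent.
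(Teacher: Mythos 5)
Your proof is correct, and it takes a genuinely different route from the paper's. You whiten the ellipsoid via $u = \covsa^{-\half}(\m - \msa)$ and reduce the problem, through Cauchy--Schwarz, to the one-dimensional minimization $\min\{(c-t)^2 : |t| \le \nu\|v\|_2\}$ with $c = b - w^\top\msa$ and $v = \covsa^{\half}w$, after which both cases, the attaining points, and the uniqueness claim in the second case (via the equality case of Cauchy--Schwarz forcing $u$ parallel to $v$ with $\|u\|_2=\nu$) follow by inspection. The paper instead treats the first case geometrically by invoking its earlier projection-distance lemma (Lemma~\ref{lemma:maha-proj}) to see that the ellipsoid meets the hyperplane, and handles the second case by Lagrange duality: it solves the inner unconstrained minimization via the first-order condition and the Sherman--Morrison formula, then solves a quadratic equation for the multiplier $\lambda\opt$ and substitutes back to obtain both the minimizer and the optimal value. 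Your argument is shorter, more elementary, and treats both regimes within one uniform reduction, with uniqueness essentially free; the paper's dual computation is heavier but yields the explicit multiplier $\lambda\opt = -w^\top\cov w + |b-w^\top\msa|\sqrt{w^\top\cov w}/\nu$ along the way, and its case-one argument reuses machinery already established for the coverage/validity formulas. The small care points you flag (feasibility of the candidate points, transporting uniqueness through the bijection, non-uniqueness in the first case) are exactly the right ones and are all handled; no gap.
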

\begin{proof}[Proof of Lemma~\ref{lemma:quad}] In the following, given any positive definite matrix $\cov$, we will use $\mathrm{dist}_\cov$ to denote the Mahalanobis distance induced by $\cov$. We first consider the case where $|b - w^\top \msa| \le \nu \sqrt{w^\top \cov w}$. By Lemma~\ref{lemma:maha-proj}, the minimum can be written as
\begin{equation*}
    \Min{\m \in \R^d: (\m - \msa)^\top \cov^{-1} (\m - \msa) \le \nu^2}~ (b - w^\top \m)^2 = \sqrt{w^\top \cov w} \Min{\mathrm{dist}_\cov(\m, \msa) \le \nu } \mathrm{dist}_\cov^2 (\mu, \{x: w^\top x = b\}).
\end{equation*}
Therefore, the minimization aims at finding the $\mu$ within the ellipsoid $\{\mu: \mathrm{dist}_\cov(\m, \msa) \le \nu \}$ that has the smallest distance to the hyperplane $\{x: w^\top x = b\}$.
On the other hand, the condition $|b - w^\top \msa| \le \nu \sqrt{w^\top \cov w}$ can be translated into $ \mathrm{dist}_\cov (\msa, \{x: w^\top x = b\}) \le \nu$, implying that the ellipsoid $\{\mu: \mathrm{dist}_\cov(\m, \msa) \le \nu \}$ has a non-empty intersection with the hyperplane $\{x: w^\top \mu = b\}$. Therefore, the minimum value in this case is $0$, and the minimum is attained at
\[
\m\opt =  \frac{(b - w^\top \msa )}{w^\top \cov w} \cov w + \msa. 
\]
Indeed, it can be easily checked that such a $\m\opt$ lies in both the ellipsoid and the hyperplane.

We next consider the case where $|b - w^\top \msa| > \nu \sqrt{w^\top \cov w}$. From the above analysis for the case  $|b - w^\top \msa| \le \nu \sqrt{w^\top \cov w}$, we know that the optimal value must be positive. By the Lagrange duality,
\begin{align}
\Min{\m \in \R^d: (\m - \msa)^\top \cov^{-1} (\m - \msa) \le \nu^2}~ (b - w^\top \m)^2    =&\Min{\m \in \R^d}~\Max{\lambda \ge 0} ~ (b - w^\top \m)^2 + \lambda ( (\m - \msa)^\top \cov^{-1} (\m - \msa) - \nu^2) \notag\\
=&\Max{\lambda \ge 0}~\Min{\m \in \R^d}~ (b - w^\top \m)^2 + \lambda ( (\m - \msa)^\top \cov^{-1} (\m - \msa) - \nu^2) .\label{eq:QP_dual}
\end{align}
If $\lambda = 0$, the optimal solution for the inner minimization is $\m = b\, \frac{w}{\|w\|_2^2}$ and the optimal value is $0$, which is a contradiction. Therefore, $\lambda > 0$. 
Fix any $\lambda > 0$ and consider the inner minimization. The first-order optimality condition is
\[
2(w^{\top} \m\opt-b) w + 2 \lambda \cov^{-1}(\m\opt -\msa) = 0,
\]
solving which yields
\begin{equation}
    \label{eq:proof9}
    \begin{split}
        \m\opt &= (w w^\top + \lambda \cov^{-1})^{-1} (b w + \lambda \cov^{-1} \msa) \\
        &= \left( \frac{1}{\lambda} \cov - \frac{\frac{1}{\lambda^2} \cov w w^\top \cov}{1 + \frac{1}{\lambda} w^\top \cov w} \right)(b w + \lambda \cov^{-1} \msa) \\
        &= \frac{1}{\lambda} \left(\cov - \frac{\cov w w^\top \cov}{\lambda + w^\top \cov w}\right)(b w + \lambda \cov^{-1} \msa) \\
        &= \frac{1}{\lambda} \left(b \cov w - \frac{b w^\top \cov w}{\lambda + w^\top \cov w} \cov w + \lambda \msa - \frac{\lambda w^\top \msa}{\lambda + w^\top \cov w} \cov w\right) \\
        &= \frac{1}{\lambda} \left(b \cov w - \frac{b w^\top \cov w}{\lambda + w^\top \cov w} \cov w - \frac{\lambda w^\top \msa}{\lambda + w^\top \cov w} \cov w\right) + \msa \\
        &= \frac{b \lambda + b w^\top \cov w - b w^\top \cov w - \lambda w^\top \msa}{\lambda(\lambda + w^\top \cov w)} \cov w + \msa \\
        &= \frac{b - w^\top \msa}{\lambda + w^\top \cov w} \cov w + \msa,
    \end{split}
\end{equation}
where the second equality follows from the Sherman-Morrison formula.
Substituting the above equation into~\eqref{eq:QP_dual} and the dual problem becomes
\begin{align*}
&\Max{\lambda \ge 0} \Min{\m \in \R^d}~ (b - w^\top \m)^2 + \lambda ( (\m - \msa)^\top \cov^{-1} (\m - \msa) - \nu^2) \\
=&\Max{\lambda \ge 0}~ \left(b - \frac{b - w^\top \msa}{\lambda + w^\top \cov w} w^\top \cov w - w^\top \msa \right) ^ 2 + \lambda \left(\frac{b - w^\top \msa}{\lambda + w^\top \cov w} \right)^2 w^\top \cov w - \lambda \nu^2 \\
=&\Max{\lambda \ge 0}~ \left(\frac{\lambda(b - w^\top \msa)}{\lambda + w^\top \cov w}\right)^2 + \lambda \left(\frac{b - w^\top \msa}{\lambda + w^\top \cov w} \right)^2 w^\top \cov w - \lambda \nu^2 \\
=&\Max{\lambda \ge 0}~ \frac{\lambda(b - w^\top \msa)^2}{(\lambda + w^\top \cov w)^2} (\lambda + w^\top \cov w) - \lambda \nu^2 \\
=&\Max{\lambda \ge 0}~ \lambda \left(\frac{(b - w^\top \msa)^2}{\lambda + w^\top \cov w} - \nu^2 \right).
\end{align*}
The first-order condition with respect to $\lambda$ asserts that the optimizes $\lambda\opt$ satisfies
\[
\frac{(b - w^\top \msa)^2}{\lambda\opt + w^\top \cov w} - \nu^2 - \frac{\lambda\opt(b - w^\top \msa)^2}{(\lambda\opt + w^\top \cov w)^2}  = 0,
\]
which is equivalent to
\[
(\lambda\opt + w^\top \cov w)(b - w^\top \msa)^2 - \nu^2 (\lambda\opt + w^\top \cov w)^2 - \lambda\opt (b - w^\top \msa)^2 = 0.
\]
The last display equation is quadratic in $\lambda\opt$:
\[
w^\top \cov w (b - w^\top \msa)^2 - \nu^2 (w^\top \cov w)^2 - 2 \lambda\opt \nu^2 w^\top \cov w - \nu^2 {\lambda\opt}^2 = 0.
\]
Since $\lambda\opt > 0$, using the assumption that $|b - w^\top \msa| > \nu \sqrt{w^\top \cov w}$, we have
\begin{align*}
\lambda\opt &= \frac{2 \nu^2 w^\top \cov w \pm \sqrt{4 \nu^4 (w^\top \cov w)^2 + 4 \nu^4(w^\top \cov w (b - w^\top \msa)^2 - \nu^2 (w^\top \cov w)^2)}}{-2 \nu^2} \\
&= \frac{-\nu^2 w^\top \cov w \pm \sqrt{\nu^2 w^\top \cov w (b - w^\top \msa)^2}}{\nu^2} = -w^\top \cov w + \frac{|b - w^\top \msa| \sqrt{w^\top \cov w}}{\nu} .
\end{align*}
Using this expression for $\lambda\opt$ and \eqref{eq:proof9}, we get unique solution
\[
\m\opt = \frac{b - w^\top \msa}{\lambda\opt + w^\top \cov w} \cov w + \msa = \frac{\sign(b- w^\top \msa) \nu}{\sqrt{w^\top \cov w}} \cov w + \msa.
\]
Finally, the optimal value is given by
\begin{align*}
\Max{\lambda \ge 0}~ \lambda \left(\frac{(b - w^\top \msa)^2}{\lambda + w^\top \cov w} - \nu^2 \right) 
=& \left(\frac{|b - w^\top \msa| \sqrt{w^\top \cov w}}{\nu} - w^\top \cov w \right) \left(\frac{(b - w^\top \msa)^2}{\frac{|b - w^\top \msa| \sqrt{w^\top \cov w}}{\nu}} - \nu^2 \right) \\
=& (b - w^\top \msa)^2 - 2 \nu |b - w^\top \msa| \sqrt{w^\top \cov w} + \nu^2 w^\top \cov w\\
=& (|b - w^\top \msa| - \nu \sqrt{w^\top \cov w})^2. 
\end{align*}
This completes the proof.
\end{proof}

\end{document}